\newtheorem{theorem}{Theorem}
\newtheorem{lemma}[theorem]{Lemma}
\newtheorem{assumption}{Assumption}
\newtheorem{remark}{Remark}
\def\R{\mathbb{R}}
\def\BL{{\bm{L}}}
\def\BS{\bm{S}}
\def\BU{\bm{U}}
\def\BR{{\bm{R}}}
\def\BV{\bm{V}}
\def\BY{\bm{Y}}
\def\BSigma{\bm{\Sigma}}
\def\BX{\bm{X}}
\def\BM{\bm{M}}
\def\BQ{\bm{Q}}
\def\BDelta{\bm{\Delta}}
\def\cT{\mathcal{T}}
\def\cO{\mathcal{O}}
\def\cS{\mathcal{S}}
\def\fro{{\mathrm{F}}}
\def\SVD{\mathrm{SVD}}
\def\dist{\mathrm{dist}}
\def\distzero{\dist(\BL_0,\BR_0;\BL_\star,\BR_\star)}
\def\distk{\dist(\BL_k,\BR_k;\BL_\star,\BR_\star)}
\def\distsquarek{\dist^2(\BL_k,\BR_k;\BL_\star,\BR_\star)}
\def\distkplusone{\dist(\BL_{k+1},\BR_{k+1};\BL_\star,\BR_\star)}
\def\distsquarekplusone{\dist^2(\BL_{k+1},\BR_{k+1};\BL_\star,\BR_\star)}
\def\sign{\mathrm{sign}}
\def\rank{\mathrm{rank}}
\def\supp{\mathrm{supp}}
\DeclareMathOperator*{\minimize}{\mathrm{minimize}}
\DeclareMathOperator*{\subject}{\mathrm{subject~to~}}
\newcommand{\RV}[1]{{\textcolor{black}{#1}}}
\title{Learned Robust PCA: A Scalable Deep Unfolding Approach for High-Dimensional Outlier Detection}
\author{%
  HanQin Cai\thanks{Authors are listed in alphabetic order. Correspondence shall be addressed to H.Q.~Cai.}
  \\
  Department of Mathematics \\
  University of California, Los Angeles\\
  Los Angeles, CA 90095, USA\\
  \texttt{hqcai@math.ucla.edu} \\
  \AND
  Jialin Liu\\
  Decision Intelligence Lab\\
  Damo Academy, Alibaba US\\
  Bellevue, WA 98004, USA\\
  \texttt{jialin.liu@alibaba-inc.com}\\
  \And
  Wotao Yin\\
  Decision Intelligence Lab\\
  Damo Academy, Alibaba US\\
  Bellevue, WA 98004, USA\\
  \texttt{wotao.yin@alibaba-inc.com}\\
}
\begin{document}

\maketitle

\begin{abstract}
Robust principal component analysis (RPCA) is a critical tool in modern machine learning, which detects outliers in the task of low-rank matrix reconstruction. In this paper, we propose a \textit{scalable} and \textit{learnable} non-convex approach for high-dimensional RPCA problems, which we call Learned Robust PCA (LRPCA). LRPCA is highly efficient, and its free parameters can be effectively learned to optimize via deep unfolding. Moreover, we extend deep unfolding from finite iterations to infinite iterations via a novel feedforward-recurrent-mixed neural network model. We establish the recovery guarantee of LRPCA under mild assumptions for RPCA. Numerical experiments show that LRPCA outperforms the state-of-the-art RPCA algorithms, such as ScaledGD and AltProj, on both synthetic datasets and real-world applications. 
\end{abstract}


\section{Introduction} \label{sec:introduction}

Over the last decade, robust principal component analysis (RPCA), one of the fundamental dimension reduction techniques, has received intensive investigations from theoretical and empirical perspectives \citep{wright2009robust,lin2010augmented,candes2011robust, hsu2011robust, chen2013low,
netrapalli2014non, gu2016low, yi2016fast,
cherapanamjeri2017nearly,
accaltproj,
cai2020rapid,
tong2020accelerating,cai2021robustCUR}. 
RPCA also plays a key role in a wide range of machine learning tasks, such as video background subtraction \citep{jang2016primary}, singing-voice separation \citep{huang2012singing}, face modeling \citep{wright2008robust}, image alignment \citep{peng2012rasl} , feature identification \citep{liu2014rpca}, community detection \citep{chen2012clustering}, fault detection \citep{tharrault2008fault}, and NMR spectrum recovery \citep{ASAP_Hankel}. While the standard principal component analysis (PCA) is known for its high sensitivity to outliers, RPCA is designed to enhance the robustness of PCA when outliers are present.
In this paper, we consider the following RPCA setting: given an observed corrupted data matrix
\begin{equation} \label{eq:y=x+s}
    \BY=\BX_\star+\BS_\star \in \R^{n_1 \times n_2},
\end{equation}
where $\BX_\star$ is a rank-$r$ data matrix and $\BS_\star$ is a sparse outlier matrix, reconstruct $\BX_\star$ and $\BS_\star$ simultaneously from $\BY$. 

One of the main challenges of designing an RPCA algorithm design is to avoid high computational costs. Inspired by deep unfolded sparse coding \citep{gregor2010learning}, some recent works \citep{cohen2019deep,solomon2019deep,van2021deep} have successfully extended deep unfolding techniques to RPCA and achieved noticeable accelerations in certain applications.
Specifically, by parameterizing a classic RPCA algorithm and unfolding it as a feedforward neural network (FNN), one can improve its performance by learning the parameters of the FNN through backpropagation. 
Since RPCA problems of a specific application often share similar key properties (e.g., rank, incoherence, and amount of outliers), the parameters learned from training examples that share the properties can lead to superior performance. Nevertheless, all existing \RV{learning-based} approaches call an expensive step of singular value thresholding (SVT) \citep{cai2010singular} at every iteration during both training and inference. SVT involves a full or truncated singular value decomposition (SVD), which costs from $\cO(n^3)$ to $\cO(n^2 r)$ flops\footnote{For ease of presentation, we take $n:=n_1=n_2$ when discuss computational complexities.} with some large hidden constant in front\footnote{The hidden constant is often hundreds and sometimes thousands.}. Thus, these approaches are not scalable to high-dimensional RPCA problems. 
Another issue is that the existing approaches only learn the parameters for a finite number of iterations. If a user targets at a specific accuracy of recovery, then the prior knowledge of the unfolded algorithm is required for choosing the number of unfolded layers. Moreover, if the user desires better accuracy later, one may have to restart the learning process 
to obtain parameters of the extra iterations.  

We must ask the questions: ``\textit{Can one design a highly efficient and easy-to-learn method for high-dimensional RPCA problems?}'' and \textit{``Do we have to restrict ourselves to finite-iteration unfolding (or a fixed-layer neural network)?''} In this paper, we aim to answer these questions by proposing a \textit{scalable} and \textit{learnable} approach for high-dimensional RPCA problems, which has a flexible feedforward-recurrent-mixed neural network model for potentially infinite-iteration unfolding. Consequently, our approach can satisfy arbitrary accuracy without relearning the parameters. 

\paragraph{Related work.}
The earlier works \citep{wright2009robust,lin2010augmented,candes2011robust} for RPCA are based on the convex model:
\begin{align} \label{eq:convex_RPCA}
    \minimize_{\BX,\BS} \|\BX\|_* +\lambda\|\BS\|_1, \quad \subject \BY=\BX+\BS.
\end{align} 
It has been shown that \eqref{eq:convex_RPCA} guarantees exact recovery, provided $\alpha\lesssim\cO(1/\mu r)$, a condition with an optimal order \citep{hsu2011robust,chen2013low}. Therein, $\alpha$ is the sparsity parameter of $\BS_\star$ and $\mu$ is the incoherence parameter of $\BL_\star$, which will be formally defined later in Assumptions~\ref{as:sparsity} and \ref{as:incoherence}, respectively. 
Problem \eqref{eq:convex_RPCA} can be transformed into a semidefinite program and has a variety of dedicated methods, their per-iteration complexities are at least $\cO(n^3)$ flops and some of them guarantee only sublinear convergence. 

Later, various non-convex methods with linear convergence were proposed for RPCA: \citep{gu2016low} uses alternating minimization, and it can tolerate outliers up to a fraction of only $\alpha\lesssim\cO(1/\mu^{2/3} r^{2/3} n)$. \citep{netrapalli2014non} develops an alternating projections method (AltProj) that alternatively projects $\BY-\BS$ onto the space of low-rank matrices and $\BY-\BX$ onto the space of sparse matrices. AltProj costs $\cO(n^2r^2)$ flops and tolerates outliers up to $\alpha\lesssim\cO(1/\mu r)$. An accelerated version of AltProj (AccAltProj) was proposed in \citep{accaltproj}, which improves the computational complexity to $\cO(n^2 r)$ with sacrifices on the tolerance to outliers, which are allowed up to $\alpha\lesssim\cO(1/\max\{\mu r^2 \kappa^3, \mu^{3/2}r^2\kappa,\mu^2 r^2\})$, where $\kappa$ is the condition number of $\BX_\star$. \RV{Note that the key technique used for outlier detection in AltProj and AccAltProj is employing adaptive hard-thresholding parameters.}

Another line of non-convex RPCA algorithms reformulate the low-rank component as $\BX=\BL\BR^\top$, where $\BL\in \R^{n_1\times r}$ and $\BR\in\R^{n_2\times r}$, and then performs gradient descend on $\BL$ and $\BR$ separately. Since the low-rank constraint of $\BX$ is automatically satisfied under this reformulation, the costly step of SVD is avoided (except for one SVD during initialization). Specifically speaking, \citep{yi2016fast} uses a complicated objective function, which includes a practically unimportant balance regularization $\|\BL^\top\BL-\BR^\top\BR\|_\fro$. While GD costs only $\cO(n^2 r)$ flops per iteration, 
its convergence rate relies highly on $\kappa$. 
Recently, ScaledGD \citep{tong2020accelerating} introduces a scaling term in gradient descend steps to remove the dependence on $\kappa$ in the convergence rate of GD. ScaledGD also drops the balance regularization and targets on a more elegant objective:
\begin{align} \label{eq:objective}
    \minimize_{\BL\in\R^{n_1\times r},\BR\in\R^{n_2\times r},\BS\in\R^{n_1\times n_2}} \frac{1}{2} \|\BL\BR^\top + \BS -\BY\|_\fro^2, \qquad \subject \BS \textnormal{ is $\alpha$-sparse},
\end{align}
where $\alpha$-sparsity means no more than a fraction $\alpha$ of non-zeros on every row and every column of $\BS$. 
To enforce the sparsity constraint of $\BS$, ScaledGD (and also GD) employs a sparsification operator: 
\begin{align} \label{eq:sparsification operator}
    [\cT_{\widetilde{\alpha}}(\BM)]_{i,j}=
    \begin{cases}
    [\BM]_{i,j}, & \textnormal{if } |[\BM]_{i,j}|\geq |[\BM]_{i,:}^{(\widetilde{\alpha} n_2)}| \textnormal{ and }  |[\BM]_{i,j}|\geq |[\BM]_{:,j}^{(\widetilde{\alpha} n_1)}|\\
    0, & \textnormal{otherwise}
    \end{cases},
\end{align}
where $[\,\cdot\,]_{i,:}^{(k)}$ and $[\,\cdot\,]_{:,j}^{(k)}$ denote the $k$-th largest element in magnitude on the $i$-th row and in the $j$-th column, respectively. In other words, $\cT_{\widetilde{\alpha}}$ keeps only the largest $\widetilde{\alpha}$ fraction of the entries on every row and in every column. 
The per-iteration computational complexity of ScaledGD remains $\cO(n^2 r)$ and it converges faster on ill-conditioned problems. The tolerance to outliers of GD and ScaledGD are $\alpha\lesssim\cO(1/\max\{\mu r^{3/2} \kappa^{3/2},\mu r \kappa^2\})$ and $\alpha\lesssim\cO(1/\mu r^{3/2} \kappa)$, respectively.

\RV{It is worth mentioning there exist some other RPCA settings that high-dimensional problems can be efficiently solved. 
For example, if the columns of $X_\star$ are independently sampled over a zero-mean multivariate Gaussian distribution, an efficient approach is using Grassmann averages \citep{hauberg2015scalable, chakraborty2020intrinsic}.}

Deep unfolding is a technique that dates back to 2010 from a fast approximate method for LASSO:  LISTA \citep{gregor2010learning} parameterizes the classic Iterative Shrinkage-Thresholding Algorithm (ISTA) as a fully-connected \RV{feedforward} neural network and demonstrates that the trained neural network generalizes well to unseen samples from the distribution used for training. It achieves
the same accuracy within one or two order-of-magnitude fewer iterations than the original ISTA. Later works \citep{xin2016maximal,yang2016deep,metzler2017learned,zhang2018ista,adler2018learned,liu2019alista,shen2020learning,monga2021algorithm} extend this approach to different problems and network architectures and have good performance. 
\RV{Another recently developed technique ``learning to learn'' \citep{andrychowicz2016learning,wichrowska2017learned,metz2019understanding} parameterizes iterative algorithms as recurrent neural networks and shows great potential on machine learning tasks.} 

Applying deep unfolding on RPCA appeared recently. 
CORONA \citep{cohen2019deep,solomon2019deep} uses convolutional
layers in deep unfolding and focuses on the application of ultrasound imaging. It uses SVT for low-rank approximation and mixed $\ell_{1,2}$ thresholding for outlier detection since the outliers in ultrasound imaging problems are usually structured. refRPCA \citep{van2021deep} focuses on video foreground-background separation and also uses SVT for low-rank approximation. However, refRPCA employs a sequence of constructed reference frames to reflex correlation between consecutive frames in the video, which leads to a complicated yet learnable proximal operator for outlier detection. Given the high computational cost of SVT, both CORONA and refRPCA are not scalable. In addition, Denise \citep{herrera2020denise} studies a deep unfolding RPCA method for the special case of positive semidefinite low-rank matrices. Denise achieves a remarkable speedup from baselines; however, such RPCA applications are limited.

\paragraph{Contribution.} 
In this work, we propose a novel learning-based method, which we call Learned Robust PCA (LRPCA), for solving high-dimensional RPCA problems. 
Our main contributions are:
\begin{enumerate}
    \item The proposed algorithm, LRPCA, is \textit{scalable} and \textit{learnable}. It uses a simple formula and differentiable operators to avoid the costly SVD and partial sorting (see Algorithm~\ref{algo:LRPCA}). LRPCA costs $\cO(n^2 r)$ flops with a small constant\footnote{More precisely, LRPCA costs as low as $3n^2 r+3n^2+\cO(nr^2)$ flops per iteration, which is much smaller than the hidden constant of truncated SVD.} while all its parameters can be optimized by training.
    \item A exact recovery guarantee is established for LRPCA under some mild conditions (see Theorem~\ref{thm:main_theorem}). In particular, LRPCA can tolerate outliers up to a fraction of $\alpha\lesssim\cO(1/\mu r^{3/2} \kappa)$. Moreover, the theorem confirms that there exist a set of parameters for LRPCA to outperform the baseline algorithm ScaledGD.
    \item We proposed a novel feedforward-recurrent-mixed neural network  (FRMNN) model 
    to solve RPCA.
    The new model first unfolds finite iterations of LRPCA and individually learns the parameters at the significant iteration; then, it learns the rules of parameter updating for subsequent iterations. Therefore, FRMNN learns the parameters for infinite iterations of LRPCA while we ensure no performance reduction from classic deep unfolding.
    \item The numerical experiments confirm the advantages of LRPCA on both synthetic and real-world datasets. In particular, we successfully apply LRPCA to large video background subtraction tasks where the problem dimensions exceed the capacities of the existing learning-based RPCA approaches.
\end{enumerate}

\paragraph{Notation.} For any matrix $\BM$, $[\BM]_{i,j}$ denotes the $(i,j)$-th entry, $\sigma_i(\BM)$ denotes the $i$-th singular value, $\|\BM\|_1:=\sum_{i,j} |[\BM]_{i,j}|$ denotes the entrywise $\ell_1$ norm, $\|\BM\|_\fro:=(\sum_{i,j} [\BM]_{i,j}^2)^{1/2}$ denotes the Frobenius norm, $\|\BM\|_*:=\sum_i \sigma_i(\BM)$ denotes the nuclear norm, $\|\BM\|_\infty:=\max_{i,j} |[\BM]_{i,j}|$ denotes the largest magnitude, $\|\BM\|_{2,\infty}:=\max_i (\sum_j [\BM]_{i,j}^2)^{1/2}$ denotes the largest row-wise $\ell_2$ norm, and $\BM^{\top}$ denotes the transpose. We use $\kappa:=\frac{\sigma_1(\BX_\star)}{\sigma_r(\BX_\star)}$ to denote the condition number of $\BX_\star$.

\section{Proposed method} \label{sec:proposed method}
In this section, we first describe the proposed learnable algorithm and then present the recovery guarantee. 

We consider the non-convex minimization problem:
\begin{align} \label{eq:objective2}
    \minimize_{\BL\in\R^{n_1\times r},\BR\in\R^{n_2\times r},\BS\in\R^{n_1\times n_2}} \frac{1}{2} \|\BL\BR^\top + \BS -\BY\|_\fro^2, \qquad \subject \supp(\BS) \subseteq \supp(\BS_\star).
\end{align}
One may find \eqref{eq:objective2} similar to the objective \eqref{eq:objective} of ScaledGD but with a different sparsity constraint for $\BS$. The user may not know the true support of $\BS_\star$. Also, the two constraints in \eqref{eq:objective} and \eqref{eq:objective2} may seem somewhat equivalent when $\BS_\star$ is assumed $\alpha$-sparse. However, we emphasize that our algorithm design does not rely on an accurate estimation of $\alpha$, and our method eliminates false-positives at every outlier-detection step.

Our algorithm proceeds in two phases: initialization and iterative updates. The first phase is done by a modified spectral initialization. In the second phase, we iteratively update outlier estimates via soft-thresholding and the factorized low-rank component estimates via scaled gradient descends. All the parameters of our algorithm are learnable during deep unfolding. 
The proposed algorithm, LRPCA, is summarized in Algorithm~\ref{algo:LRPCA}. We now discuss the key details of LRPCA and begin with the second phase.

\begin{algorithm}
\caption{Learned Robust PCA (LRPCA)} \label{algo:LRPCA}
\begin{algorithmic}[1]
\State \textbf{Input:} $\BY=\BX_\star+\BS_\star$: sparsely corrupted matrix; $r$: the rank of underlying low-rank matrix; 
$\{\zeta_k\}$: a set of \emph{learned} thresholding values;
$\{\eta_k\}$: a set of \emph{learned} step sizes.
\State \textcolor{OliveGreen}{// Initialization:}
\State $\BS_{0}=\cS_{\zeta_{0}}(\BY)$
\State $[\BU_0,\BSigma_0,\BV_0]=\SVD_r(\BY-\BS_0)$
\State $\BL_0=\BU_0\BSigma_0^{1/2}$, \qquad $\BR_0=\BV_0\BSigma_0^{1/2}$
\State \textcolor{OliveGreen}{// Iterative updates:}
\While{ Not\texttt{(Stopping Condition)} }
    \State $\BS_{k+1}=\cS_{\zeta_{k+1}}(\BY-\BL_{k}\BR_{k}^\top)$
    \State $\BL_{k+1}=\BL_k-\eta_{k+1}(\BL_k\BR_k^\top+\BS_{k+1}-\BY)\BR_k (\BR_k^\top\BR_k)^{-1}$
    \State $\BR_{k+1}=\BR_k-\eta_{k+1}(\BL_k\BR_k^\top+\BS_{k+1}-\BY)^\top\BL_k (\BL_k^\top \BL_k)^{-1}$
\EndWhile
\State \textbf{Output:} $\BX_K=\BL_K\BR_K^\top$: the recovered low-rank matrix.  
\end{algorithmic}
\end{algorithm}


\paragraph{Updating $\BS$.} In ScaledGD \citep{tong2020accelerating}, the sparse outlier matrix is updated by $\BS_{k+1} = \cT_{\widetilde{\alpha}} (\BY - \BL_k\BR_k^\top)$, where the sparsification operator $\cT_{\widetilde{\alpha}}$ is defined in \eqref{eq:sparsification operator}. 
Note that $\cT_{\widetilde{\alpha}}$ requires an accurate estimate of $\alpha$ and its execution involves partial sorting on each row and column. Moreover, deep unfolding and parameter learning cannot be applied to $\cT_{\widetilde{\alpha}}$ since it is not differentiable.
Hence, we hope to find an efficient and effectively learnable operator to replace $\cT_{\widetilde{\alpha}}$.

The well-known soft-thresholding, a.k.a. shrinkage operator,
\begin{align}  \label{eq:soft-thresholding}
    [\cS_\zeta(\BM)]_{i,j} = \sign([\BM]_{i,j}) \cdot \max\{0,|[\BM]_{i,j}| - \zeta\}
\end{align}
is our choice. Soft-thresholding has been applied as a proximal operator of $\ell_1$ norm (to a matrix entrywise) in some RPCA algorithms \citep{lin2010augmented}. However, a fixed threshold, which is determined by the regularization parameter, leads to only sublinear convergence of the algorithm. Inspired by \RV{AltProj}, LISTA, and \RV{their} followup works \citep{netrapalli2014non,accaltproj,gregor2010learning,xin2016maximal,yang2016deep,metzler2017learned,zhang2018ista,adler2018learned,liu2019alista,monga2021algorithm}, we seek for a set of thresholding parameters $\{ \zeta_k \}$ that let our algorithm outperform the baseline ScaledGD.

In fact, we find that the simple soft-thresholding:
\begin{align}  \label{eq:S updating}
    \BS_{k+1}=\cS_{\zeta_{k+1}}(\BY-\BL_k\BR_k^\top)
\end{align}
can also give a linear convergence guarantee if we carefully choose the thresholds, and the selected $\{\zeta_k\}$ also guarantees $\supp(\BS_k)\subseteq\supp(\BS_\star)$ at every iteration, i.e., no false-positive outliers. Moreover, the proposed algorithm with selected thresholds can converge even faster than ScaledGD under the very same assumptions (see Theorem~\ref{thm:main_theorem} below for a formal statement.) Nevertheless, the theoretical selection of $\{\zeta_k\}$ relies on the knowledge of $\BX_\star$, which is usually unknown to the user. Fortunately, these thresholds can be reliably learned using the deep unfolding techniques. 

\paragraph{Updating $\BX$.} To avoid the low-rank constraint on $\BX$, we write a rank-$r$ matrix as product of a tall and a fat matrices, that is, $\BX = \BL\BR^\top\in\R^{n_1\times n_2}$ with $\BL\in \R^{n_1\times r}$ and $\BR\in\R^{n_2\times r}$. Denote the loss function $f_k:=f(\BL_k,\BR_k):=\frac{1}{2} \|\BL_k\BR_k^\top + \BS -\BY\|_\fro^2$. The gradients can be easily computed:
\begin{align}
    \nabla_\BL f_k = (\BL_k\BR_k^\top +\BS -\BY)\BR_k 
    \qquad \textnormal{and} \qquad 
    \nabla_\BR f_k = (\BL_k\BR_k^\top +\BS -\BY)^\top\BL_k.
\end{align}
We could apply a step of gradient descent on $\BL$ and $\BR$. However, \citep{tong2020accelerating} finds the vanilla gradient descent approach suffers from ill-conditioning and thus introduces the scaled terms $(\BR_k^\top\BR_k)^{-1}$ and $(\BL_k^\top \BL_k)^{-1}$ to overcome this weakness. In particular, we follow ScaledGD and update the low-rank component:
\begin{equation} \label{eq:X updating}
\begin{split}
    \BL_{k+1}&=\BL_k-\eta_{k+1}\nabla_\BL f_k \cdot (\BR_k^\top\BR_k)^{-1}, \cr
    \BR_{k+1}&=\BR_k-\eta_{k+1}\nabla_\BR f_k \cdot (\BL_k^\top \BL_k)^{-1},
\end{split}
\end{equation}
where $\eta_{k+1}$ is the step size at the $(k+1)$-th iteration. 

\paragraph{Initialization.} We use a modified spectral initialization 
with soft-thresholding in the proposed algorithm. That is, we first initialize the sparse matrix by $\BS_0=\cS_{\zeta_0}(\BY)$, which should remove the obvious outliers. Next, for the low-rank component, we take $\BL_0=\BU_0\BSigma_0^{1/2}$ and $\BR_0=\BV_0\BSigma_0^{1/2}$, where $\BU_0\BSigma_0\BV_0^\top$ is the best rank-$r$ approximation (denoted by $\SVD_r$) of $\BY-\BS_0$. Clearly, the initial thresholding step is crucial for the quality of initialization and the thresholding parameter $\zeta_0$ can be optimized through learning. 

\RV{For the huge-scale problems where even a single truncated SVD is computationally prohibitive, one may replace the SVD step in initialization by some batch-based low-rank approximations, e.g., CUR decomposition. While its stability lacks theoretical support when outliers appear, the empirical evidence shows that a single CUR decomposition can provide sufficiently good initialization \citep{cai2020rapid}. 
}

\paragraph{Computational complexity.} \RV{Along with guaranteed linear convergence, LRPCA costs only $3 n^2 r + 3 n^2 + \cO(n r^2)$ flops per iteration provided $r \ll n$. In contrast, truncated SVD costs $\cO(n^2 r)$ flops with a much larger hidden constant. The breakdown of LRPCA's complexity is provided in Appendix~\ref{sec:computational complexity}. }

\paragraph{Limitations.} LRPCA assumes the knowledge of $\rank(\BX_\star)$, which is commonly assumed in many non-convex matrix recovery algorithms. In fact, the rank of $\BX_\star$ is fixed or can be easily obtained from prior knowledge in many applications, e.g., Euclidean distance matrices are rank-$(d+2)$ for the system of $d$-dimensional points  \citep{dokmanic2015euclidean}. However, the true rank may be hard to get in other applications, and LRPCA is not designed to learn the true rank from training data. Thus, LRPCA may have reduced performance in such applications when the target rank is seriously misestimated.

\subsection{Theoretical results} \label{sec:theoretical results}
We present the recovery guarantee of LRPCA in this subsection. Moreover, when the parameters are selected correctly, we show that LRPCA provably outperforms the state-of-the-arts.

We start with two common assumptions for RPCA:
\begin{assumption}[$\mu$-incoherence of low-rank component] \label{as:incoherence}
$\BX_\star\in\R^{n_1 \times n_2}$ is a rank-$r$ matrix with $\mu$-incoherence, i.e.,
\begin{gather}
    \|\BU_\star\|_{2,\infty}\leq \sqrt{\frac{\mu r}{n_1}} 
    \qquad\textnormal{and}\qquad
    \|\BV_\star\|_{2,\infty}\leq \sqrt{\frac{\mu r}{n_2}}
\end{gather}
for some constant $1\leq\mu\leq n$, where $\BU_\star\BSigma_\star\BV_\star^\top$ is the compact SVD of $\BX_\star$. 
\end{assumption}

\begin{assumption}[$\alpha$-sparsity of outlier component] \label{as:sparsity}
$\BS_\star\in\R^{n_1 \times n_2}$ is an $\alpha$-sparse matrix, i.e., there are at most $\alpha$ fraction of non-zero elements in each row and column of $\BS_\star$. In particular, we require $\alpha \lesssim \cO(\frac{1}{\mu r^{3/2}\kappa})$ for the guaranteed recovery, which matches the requirement for ScaledGD. \RV{We future assume the problem is well-posed, i.e., $\mu,r,\kappa \ll n$.}
\end{assumption}

\begin{remark}
Note that some applications may have a more specific structure for outliers, which may lead to a more suitable operator for $\BS$ updating in the particular applications. This work aims to solve the most common RPCA model with a sparsity assumption. Nevertheless, our learning framework can be adapted to another operator as long as it is differentiable. 
\end{remark}

By the rank-sparsity uncertainty principle, a matrix cannot be incoherent and sparse simultaneously \citep{chandrasekaran2011rank}. The above two assumptions ensure the uniqueness of the solution in RPCA. Now, we are ready to present our main theorem:

\begin{theorem}[Guaranteed recovery] \label{thm:main_theorem}
Suppose that $\BX_\star$ is a rank-$r$ matrix with $\mu$-incoherence and $\BS_\star$ is an $\alpha$-sparse matrix with $\alpha\leq\frac{1}{10^4\mu r^{3/2}\kappa}$.  If we set the thresholding values $\zeta_0=\|\BX_\star\|_\infty$ and $\zeta_k=\|\BL_{k-1}\BR_{k-1}^\top-\BX_\star\|_\infty$ for $k\geq 1$ for LRPCA, the iterates of LRPCA satisfy
\begin{gather}
\|\BL_k\BR_k^\top- \BX_\star\|_\fro\leq 0.03 (1-0.6\eta)^k \sigma_r(\BX_\star)
\quad \textnormal{and}\quad
\supp(\BS_k) \subseteq \supp(\BS_\star),
\end{gather}
with the step sizes $\eta_k=\eta\in[\frac{1}{4},\frac{8}{9}]$.
\end{theorem}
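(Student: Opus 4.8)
The plan is to run the scaled-gradient-descent analysis in the spirit of ScaledGD, treating the soft-thresholded outlier iterate as a small \emph{sparse perturbation} of an exact low-rank step. Write $\BL_\star:=\BU_\star\BSigma_\star^{1/2}$, $\BR_\star:=\BV_\star\BSigma_\star^{1/2}$,
\[
\dist^2(\BL,\BR;\BL_\star,\BR_\star):=\min_{\BQ}\left(\left\|(\BL\BQ-\BL_\star)\BSigma_\star^{1/2}\right\|_\fro^2+\left\|(\BR\BQ^{-\top}-\BR_\star)\BSigma_\star^{1/2}\right\|_\fro^2\right),
\]
with $\BQ$ ranging over invertible $r\times r$ matrices; let $\BQ_k$ be the minimizer at step $k$, and set $\BDelta_k:=\BL_k\BR_k^\top-\BX_\star$ and $\dist_k:=\dist(\BL_k,\BR_k;\BL_\star,\BR_\star)$. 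I would prove by induction on $k$ that: (i) $\supp(\BS_k)\subseteq\supp(\BS_\star)$; (ii) the iterates stay incoherent, $\sqrt{n_1}\|\BL_k\BQ_k\BSigma_\star^{-1/2}\|_{2,\infty}\vee\sqrt{n_2}\|\BR_k\BQ_k^{-\top}\BSigma_\star^{-1/2}\|_{2,\infty}\le C_\mu\sqrt{\mu r}$ for a fixed constant $C_\mu$; and (iii) $\dist_k\le 0.02\,(1-0.6\eta)^k\,\sigma_r(\BX_\star)$. The stated Frobenius bound then follows because $\dist_k\le\sigma_r(\BX_\star)$ implies the elementary comparison $\|\BDelta_k\|_\fro\le(1+\tfrac12\dist_k/\sigma_r(\BX_\star))\dist_k\le\tfrac32\dist_k$, and $\tfrac32\cdot0.02=0.03$; the support claim is exactly (i).

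\textbf{Initialization.} Incoherence gives $\|\BX_\star\|_\infty\le\|\BU_\star\|_{2,\infty}\|\BV_\star\|_{2,\infty}\sigma_1(\BX_\star)\le\tfrac{\mu r}{\sqrt{n_1n_2}}\sigma_1(\BX_\star)$, so $\zeta_0=\|\BX_\star\|_\infty$ makes $\cS_{\zeta_0}$ annihilate every entry of $\BY=\BX_\star+\BS_\star$ outside $\supp(\BS_\star)$; hence $\supp(\BS_0)\subseteq\supp(\BS_\star)$, and $\BY-\BS_0=\BX_\star+\BE_0$ with $\BE_0$ supported on $\supp(\BS_\star)$ (thus $\alpha$-sparse) and $\|\BE_0\|_\infty\le2\zeta_0$. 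Using the bound $\|\BE\|_2\le\alpha\sqrt{n_1n_2}\,\|\BE\|_\infty$ valid for $\alpha$-sparse $\BE$, one gets $\|\BE_0\|_2\lesssim\alpha\mu r\,\sigma_1(\BX_\star)\lesssim\sigma_r(\BX_\star)/(\sqrt r\,\kappa)$ under $\alpha\le(10^4\mu r^{3/2}\kappa)^{-1}$. Weyl's inequality together with a Davis--Kahan/Wedin-type perturbation bound for $\SVD_r(\BX_\star+\BE_0)$ (the spectral-initialization lemma of ScaledGD with perturbation $\BE_0$) then places $\dist_0$ and the incoherence of $(\BL_0,\BR_0)$ at the seed levels required by (ii)--(iii).

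\textbf{Inductive step.} Assume (i)--(iii) at step $k$. For the outlier update, $\BY-\BL_k\BR_k^\top=\BS_\star-\BDelta_k$ and the threshold is \emph{exactly} $\zeta_{k+1}=\|\BDelta_k\|_\infty$, so every off-support entry (magnitude $\le\|\BDelta_k\|_\infty$) is set to zero, giving $\supp(\BS_{k+1})\subseteq\supp(\BS_\star)$, while on $\supp(\BS_\star)$ the shrinkage bound $|\cS_\zeta(a)-b|\le|a-b|+\zeta$ gives $\|\BE_{k+1}\|_\infty\le2\|\BDelta_k\|_\infty$ for $\BE_{k+1}:=\BS_{k+1}-\BS_\star$ (again $\alpha$-sparse). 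The crucial estimate is that, by (ii), $\BDelta_k$ is $O(\mu)$-incoherent of rank $\le2r$, so $\|\BDelta_k\|_\infty\lesssim\tfrac{\mu r}{\sqrt{n_1n_2}}\|\BDelta_k\|_\fro$, whence $\|\BE_{k+1}\|_2\le\alpha\sqrt{n_1n_2}\,\|\BE_{k+1}\|_\infty\lesssim\alpha\mu r\,\|\BDelta_k\|_\fro\lesssim\alpha\mu r\,\dist_k\lesssim(10^4\sqrt r\,\kappa)^{-1}\dist_k$ --- a genuinely \emph{contracting} multiple of $\dist_k$, not merely $O(\sigma_r(\BX_\star))$. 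Lines 9--10 of Algorithm~\ref{algo:LRPCA} are then one ScaledGD step on the residual $\BDelta_k+\BE_{k+1}$: the exact ScaledGD descent lemma contracts $\dist$ by a factor $1-c_0\eta$ with $c_0>0.6$ for $\eta\in[\tfrac14,\tfrac89]$ and propagates (ii), while the sparse part perturbs $\dist_{k+1}$ by at most $C\eta\|\BE_{k+1}\|_2$ (using $\|\BR_k(\BR_k^\top\BR_k)^{-1}\BSigma_\star^{1/2}\|_2=O(1)$, and symmetrically for $\BL$). Combining, $\dist_{k+1}\le(1-c_0\eta+\tfrac{C}{10^4}\eta)\dist_k\le(1-0.6\eta)\dist_k$, and an analogous estimate preserves (ii), closing the induction.

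\textbf{Main obstacle.} The delicate point is maintaining the incoherence invariant (ii) in lockstep with the distance contraction (iii): the preconditioners $(\BR_k^\top\BR_k)^{-1}$ and $(\BL_k^\top\BL_k)^{-1}$ can inflate row-wise norms, and one must additionally push the sparse correction $\BE_{k+1}$ through that step, controlling $\|\BE_{k+1}\BR_k(\BR_k^\top\BR_k)^{-1}\|_{2,\infty}$ via the at-most-$\alpha n_1$ nonzeros per column of $\BE_{k+1}$ and the incoherence of $\BR_k$. Everything else is ScaledGD-style bookkeeping; the only non-routine arithmetic is chasing the absolute constants so that the initialization seed, the $0.02$ in (iii), and the $0.03$ and $1-0.6\eta$ in the statement all close under $\alpha\le(10^4\mu r^{3/2}\kappa)^{-1}$ and $\eta\in[\tfrac14,\tfrac89]$.
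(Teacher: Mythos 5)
Your overall architecture matches the paper's: induction on support containment plus a distance contraction, treating the soft-thresholded outlier iterate as a sparse perturbation $\BE_{k+1}=\BS_{k+1}-\BS_\star$ of a ScaledGD step, and converting $\dist_k$ to the Frobenius bound via the factor $1.5$. The support claim and the bound $\|\BE_{k+1}\|_\infty\le 2\|\BDelta_k\|_\infty$ are also exactly the paper's Lemma~\ref{lm:sparity}. But there is a genuine gap at the pivotal estimate. You claim that your invariant (ii) --- bounded incoherence of the \emph{iterates}, $\|\BL_k\BQ_k\BSigma_\star^{-1/2}\|_{2,\infty}\lesssim\sqrt{\mu r/n}$ --- implies $\|\BDelta_k\|_\infty\lesssim\frac{\mu r}{n}\|\BDelta_k\|_\fro$, and everything downstream ($\|\BE_{k+1}\|_2\lesssim\alpha\mu r\,\dist_k$, hence a contracting perturbation) rests on this. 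That implication is false: writing $\BDelta_k=\BDelta_L\BR_\natural^\top+\BL_\star\BDelta_R^\top$, incoherence of $\BL_k\BQ_k$ and $\BL_\star$ only bounds $\|\BDelta_L\|_{2,\infty}$ in \emph{absolute} terms; it does not make the singular subspaces of $\BDelta_k$ incoherent, nor does it tie $\|\BDelta_L\|_{2,\infty}$ to $\|\BDelta_L\|_\fro$. For instance, if $\BDelta_L=\epsilon\, e_1 v^\top$ is concentrated on one row (perfectly consistent with both factors being incoherent), one gets $\|\BDelta_L\BR_\natural^\top\|_\infty/\|\BDelta_L\BR_\natural^\top\|_\fro\sim\sqrt{\mu/n}$ rather than $\mu r/n$, and then $\|\BE_{k+1}\|_2\le\alpha n\|\BE_{k+1}\|_\infty$ picks up a factor $\alpha\sqrt{\mu r n}\,\dist_k$, which grows with $\sqrt{n}$ and destroys the contraction.

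The fix --- and this is precisely what the paper identifies as its main technical innovation over ScaledGD --- is to replace your invariant (ii) by a \emph{geometrically decaying} row-wise bound on the error factors themselves,
\begin{equation*}
\|(\BL_k\BQ_k-\BL_\star)\BSigma_\star^{1/2}\|_{2,\infty}\lor\|(\BR_k\BQ_k^{-\top}-\BR_\star)\BSigma_\star^{1/2}\|_{2,\infty}\le\sqrt{\tfrac{\mu r}{n}}\,\tau^k\sigma_r(\BX_\star),
\end{equation*}
carried through the induction in lockstep with $\dist_k\le\varepsilon_0\tau^k\sigma_r(\BX_\star)$ (Theorem~\ref{thm:local convergence} and Lemma~\ref{lm:convergence_incoher}). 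From this one gets $\|\BDelta_k\|_\infty\le 3\frac{\mu r}{n}\tau^k\sigma_r(\BX_\star)$ (Lemma~\ref{lm:X-X_K_inf_norm}) --- a decaying \emph{absolute} bound coupled to $\dist_k$ through the common rate $\tau^k$, not a bound proportional to $\|\BDelta_k\|_\fro$ at the same iteration --- and then $\|\BE_{k+1}\|_2\le 2\alpha n\|\BDelta_k\|_\infty\lesssim\alpha\mu r\,\tau^k\sigma_r(\BX_\star)$ closes the contraction. Propagating this decaying $\ell_{2,\infty}$ invariant through the preconditioned step (including the change of alignment matrix from $\BQ_k$ to $\BQ_{k+1}$, handled by Lemma~\ref{lm:sigma_half_Q-Q_sigma_half}) is the hard part of the proof; your proposal correctly flags incoherence maintenance as the obstacle but names the wrong invariant, and with the weaker one the argument does not go through.
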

\begin{proof}
The proof of Theorem~\ref{thm:main_theorem} is deferred to Appendix~\ref{sec:proofs}.
\end{proof}

Essentially, Theorem~\ref{thm:main_theorem} states that there exists a set of selected thresholding values $\{\zeta_k\}$ that allows one to replace the sparsification operator $\cT_{\widetilde{\alpha}}$ in ScaledGD with the simpler soft-thresholding operator $\cS_\zeta$ and maintains the same linear convergence rate $1-0.6\eta$ under the same assumptions. \RV{Note that the theoretical choice of parameters relies on the knowledge of $\BX_\star$---an unknown factor. Thus, Theorem~\ref{thm:main_theorem} can be read as a proof for the existence of the appropriate parameters.}

Moreover, Theorem~\ref{thm:main_theorem} shows two advantages of LRPCA: 
\begin{enumerate}
    \item Under the very same assumptions and constants, we allow the step sizes $\eta_k$ to be as large as $\frac{8}{9}$; in contrast, ScaledGD has the step sizes no larger than $\frac{2}{3}$ \citep[Theorem~2]{tong2020accelerating}. That is, LRPCA can provide faster convergence under the same sparsity condition, by allowing larger step sizes. 
    \item With the selected thresholding values, $\cS_\zeta$ in LRPCA is effectively a projection onto $\supp(\BS_\star)$, which matches our sparsity constrain in objective function \eqref{eq:objective2}. That is, $\cS_\zeta$ takes out the larger outliers, leaves the smaller outliers, and preserves all good entries --- no false-positive outlier in $\BS_k$. In contrast, $\cT_{\widetilde{\alpha}}$ necessarily yields some false-positive outliers in the earlier stages of ScaledGD, which drag the algorithm when outliers are relatively small.  
\end{enumerate}

\paragraph{Technical innovation.} \RV{ The main challenge to our analysis is to show both the distance error metric (i.e., $\dist(\BL_k,\BR_k;\BL_\star,\BR_\star)$, later defined in Appendix~\ref{sec:proofs}) and $\ell_\infty$ error metric (i.e., $\|\BL_k\BR_k^\top-\BX_\star\|_\infty$) are linearly decreasing. While the former takes some minor modifications from the proof of ScaledGD, the latter is rather challenging and utilizes several new technical lemmata. Note that ScaledGD shows $\|\BL_k\BR_k^\top-\BX_\star\|_\infty$ is always bounded 
but not necessary decreasing, which is insufficient for LRPCA. 
}


\section{Parameter learning}
Theorem~\ref{thm:main_theorem} shows the existence of ``good" parameters $\{\zeta_k\},\{\eta_k\}$ and, in this section, we describe how to obtain such parameters via machine learning techniques.

\paragraph{Feed-forward neural network.} Classic deep unfolding methods unroll an iterative algorithm and truncates it into a fixed number, says $K$, iterations. Applying such idea to our model, we regard each iteration of Algorithm~\ref{algo:LRPCA} as a layer of a neural network and regard the variables $\BL_{k}, \BR_{k}, \BS_{k} $ as the units of the $k$-th hidden layer. The top part of Figure~\ref{fig:nn-structure} demonstrates the structure of such feed-forward neural network (FNN). The $k$-th layer is denoted as $\mathcal{L}_{k}$.  Based on (\ref{eq:S updating}) and (\ref{eq:X updating}), it takes $\BY$ as an input and has two parameters $\zeta_k, \eta_k$ when $k \geq 1$. The initial layer $\mathcal{L}_0$ is special, it takes $\BY$ and $r$ as inputs, and it has only one parameter $\zeta_0$. For simplicity, we use $\Theta = \{ \{\zeta_k\}_{k=0}^K, \{\eta_k\}_{k=1}^K \}$ to represent all parameters in this neural network.

\paragraph{Training.} Given a training data set $\mathcal{D}_{\mathrm{train}}$ consisting of ($\BY, \BX_\star$) pairs (observation, ground truth), one can train the neural network and obtain parameters $\Theta$ by minimizing the following loss function:
\begin{equation}
    \label{eq:loss}
    \minimize_{\Theta} \mathbb{E}_{(\BY, \BX_\star) \sim \mathcal{D}_{\mathrm{train}}}\|\BL_K(\BY,\Theta)\big(\BR_K(\BY,\Theta)\big)^\top - \BX_\star\|^2_\fro.
\end{equation}
Directly minimizing (\ref{eq:loss}) is called end-to-end training, and it can be easily implemented on deep learning platforms nowadays. In this paper, we adopt a more advanced training technique named as \emph{layer-wise training} or \emph{curriculum learning}, which has been proven as a powerful tool on training deep unfolding models \citep{chen2018theoretical,chen2020training}. The process of layer-wise training is divided into $K+1$ stages:
\begin{itemize}
    \item Training the $0$-th layer with $\minimize_{\Theta}\mathbb{E}\|\BL_0\BR_0^\top-\BX_\star\|_\fro^2$.
    \item Training the $1$-st layer with $\minimize_{\Theta}\mathbb{E}\|\BL_1\BR_1^\top-\BX_\star\|_\fro^2$.
    \item $\cdots$
    \item Training the final layer with (\ref{eq:loss}):  $\minimize_{\Theta}\mathbb{E}\|\BL_K\BR_K^\top-\BX_\star\|_\fro^2$.
\end{itemize}

\begin{figure}[t]
\centering
\input{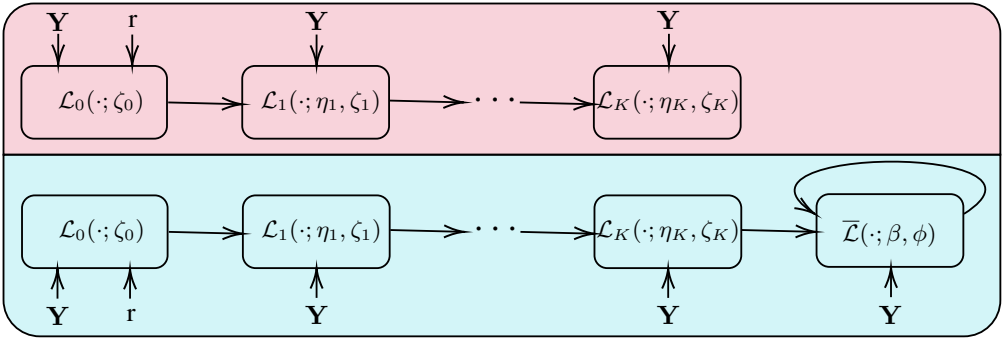}
\vspace{-0.05in}
\caption{A high-level structure comparison between classic FNN-based deep unfolding (top) and proposed FRMNN-based deep unfolding (bottom). In the diagrams, $\mathcal{L}_k$ denotes the $k$-th layer of FNN and $\overline{\mathcal{L}}$ is a layer of RNN.
}
\label{fig:nn-structure}
\end{figure}




\paragraph{Feedforward-recurrent-mixed neural network.} One disadvantage of the aforementioned FNN model is its fixed number of layers. If one wants to go further steps and obtain higher accuracy after training a neural network, one may have to retrain the neural network once more. Recurrent neural network (RNN) has tied parameters over different layers and, consequently, is extendable to infinite layers. However, we find that the starting iterations play significant roles in the convergence (validated in Section~\ref{sec:numerical} later) and their parameters should be trained individually. Thus, we propose a hybrid model that is demonstrated in the bottom part of Figure~\ref{fig:nn-structure}, named as Feedforward-recurrent-mixed neural network (FRMNN). We use a recurrent neural network appended after a $K$-layer feedforward neural network. When $k \geq K$, in the $(k-K)$-th loop of the RNN layer, we follow the same calculation procedures with Algorithm~\ref{algo:LRPCA} and determine the parameters $\zeta_k$ and $\eta_k$ by
\begin{equation}
    \eta_k = \beta \eta_{k-1} \quad\textnormal{ and }\quad \zeta_k = \phi \zeta_{k-1}.
\end{equation}
Thus, all the RNN layers share the common parameters $\beta$ and $\phi$.

The training of FRMNN follows in two phases:
\begin{itemize}
    \item Training the $K$-layer FNN with layer-wise training.
    \item Fixing the FNN and searching RNN parameters $\beta$ and $\phi$ to minimize the convergence metric at the $(\overline{K}-K)$-th layer of RNN for some $\overline{K}>K$:
    \begin{equation}
    \minimize_{\beta,\phi}  \mathbb{E}_{(\BY, \BX_\star) \sim \mathcal{D}_{\mathrm{train}}}\|\BL_{\overline{K}}(\beta,\phi)(\BR_{\overline{K}}(\beta,\phi))^\top - \BX_\star\|^2_\fro.
    \end{equation}
\end{itemize}
Our new model provides the flexibility of training a neural network with finite $\overline{K}$ layers and testing it with infinite layers. Consequently, the stop condition of LRPCA has to be \texttt{(Run $K$ iterations)} if the parameters are trained via FNN model; and the stop condition can be \texttt{(Error $<$ Tolerance)} if our FRMNN model is used.

In this paper, we use stochastic gradient descent (SGD) in the layer-wise training stage and use grid search to obtain $\beta$ and $\phi$ since there are only two parameters. Moreover, we find that picking $K+3\leq\overline{K}\leq K+5$ is empirically good.

\section{Numerical experiments} 
\label{sec:numerical}
In this section, we compare the empirical performance of LRPCA with the state-of-the-arts: ScaledGD \citep{tong2020accelerating} and AltProj \citep{netrapalli2014non}.
We hand tune the parameters for ScaledGD and AltProj to achieve their best performance in the experiments. 
Moreover, all speed tests are executed 
on a Windows 10 laptop with Intel i7-8750H CPU, 32GB RAM and Nvidia GTX-1060 GPU. The parameters learning processes are executed on an Ubuntu workstation with Intel i9-9940X CPU, 128GB RAM and two Nvidia RTX-3080 GPUs. 
All our synthetic test results are averaged over $50$ random generated instances, and details of random instance generation can be found in Appendix~\ref{sec:more experiment results}. 
\RV{The code for LRPCA is available online at \url{https://github.com/caesarcai/LRPCA}.}


\begin{wrapfigure}{rt}{0.48\textwidth}
\vspace{-2em}
  \centering
    \includegraphics[width=0.45\textwidth]{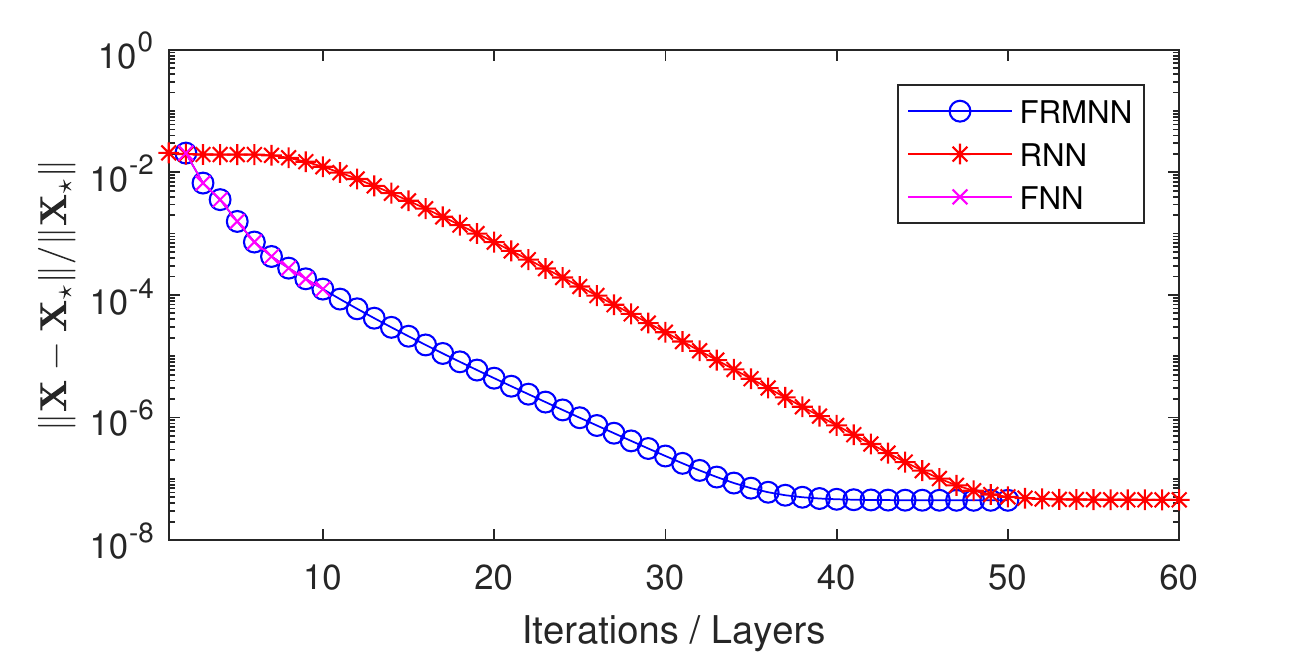}
    \caption{Convergence comparison for FNN-based, RNN-based, and FRMNN-based learning.}
    \vspace{-1em}
    \label{fig:fpn}
\end{wrapfigure}


\paragraph{Unfolding models.} 
We compare the performance of LRPCA with the parameters learned from different unfolding models: classic FNN, RNN and proposed FRMNN. In particular, FNN model unrolls $10$ iterations of LRPCA and RNN model directly starts the training on the second phase of FRMNN, i.e., $K=0$. For FRMNN model, we take $K=10$ and $\overline{K}=15$ for the training. The test results is summarized in Figure~\ref{fig:fpn}. One can see FRMNN model extends FNN model to infinite layers with performance reduction while the pure RNN model drops the convergence performance. 
Note that the convergence curves of both FRMNN and RNN go down till they reach $\cO(10^{-8})$ which is the machine precision since we use single precision in this experiment.


\paragraph{Computational efficiency.} 
In this section, we present the speed advantage of LRPCA. First, we compare the convergence behavior of LRPCA to our baseline ScaledGD in Figure~\ref{fig:conv-iter}. We find LRPCA converges in much less iterations, especially when the outlier sparsity $\alpha$ is larger. Next, we compare per-iteration runtime of LRPCA to our baseline ScaledGD in Figure~\ref{fig:conv-t}. One can see that ScaledGD runs slower when $\alpha$ becomes larger. In contrast, the per-iteration runtime of LRPCA is insensitive to $\alpha$ and is significantly faster than ScaledGD. Finally, we compare the total runtime among LRPCA, ScaledGD and AltProj in Figure~\ref{fig:time-alpha}. Therein, we find LRPCA is substantially faster than the state-of-the-arts \RV{if the model has been trained. The training time for models with different sizes and settings are reported in the supplement.}


\begin{figure}[h]
\vspace{-4mm}
\centering 
\subfloat[$\alpha=0.1$]{
       \includegraphics[width=0.325\textwidth]{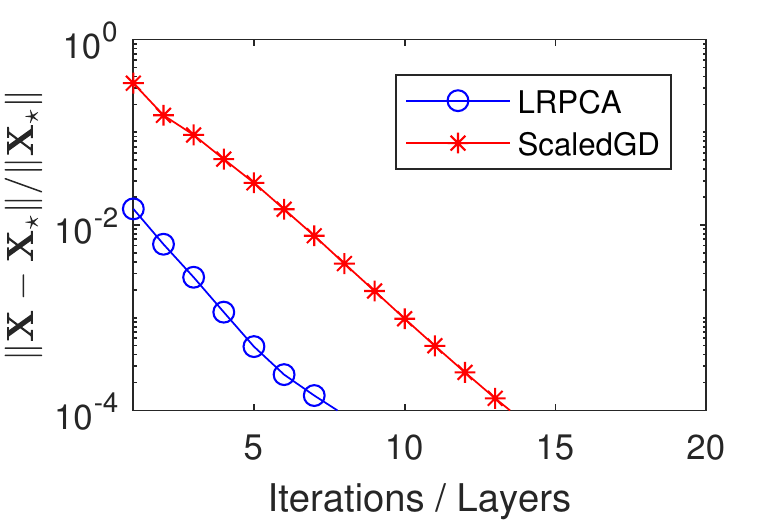}\label{fig:conv_alpha0.1}}
\hfill
\subfloat[$\alpha=0.2$]{
       \includegraphics[width=0.325\textwidth]{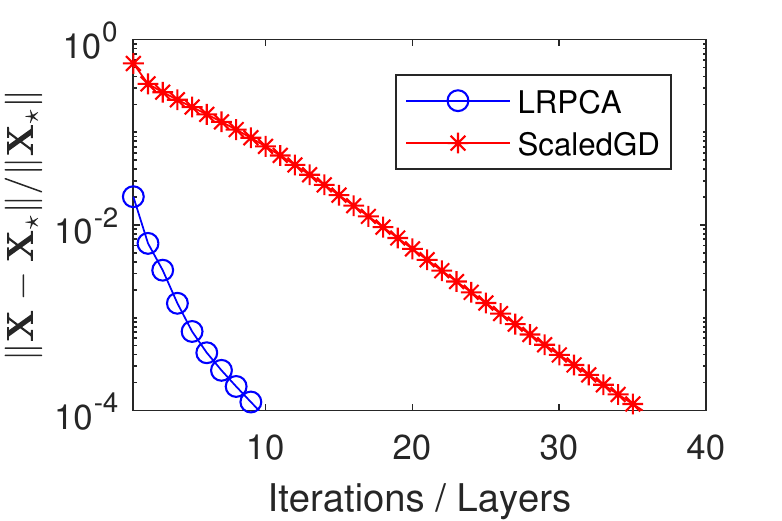}\label{fig:conv_alpha0.2}}
\hfill
\subfloat[$\alpha=0.3$]{
       \includegraphics[width=0.325\textwidth]{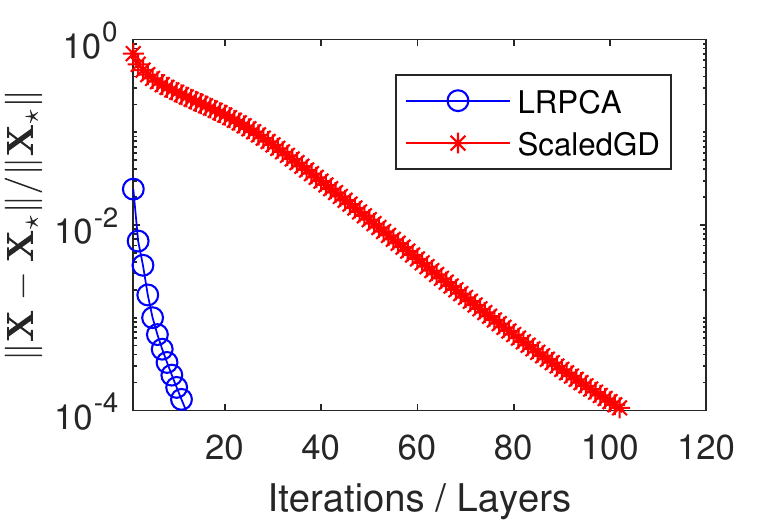}\label{fig:conv_alpha0.3}}
\vspace{-2mm}
\caption{Convergence comparison for LRPCA and baseline ScaledGD with varying outlier sparsity $\alpha$. Problem dimension $n=1000$ and rank $r=5$.}
\label{fig:conv-iter}
\end{figure}

\begin{figure}[h]
\vspace{-2mm}
\centering
\makebox[0pt][c]{\parbox{1.\textwidth}{%
    \begin{minipage}[b]{0.49\hsize}
    \centering
    \includegraphics[width = 0.95\textwidth]{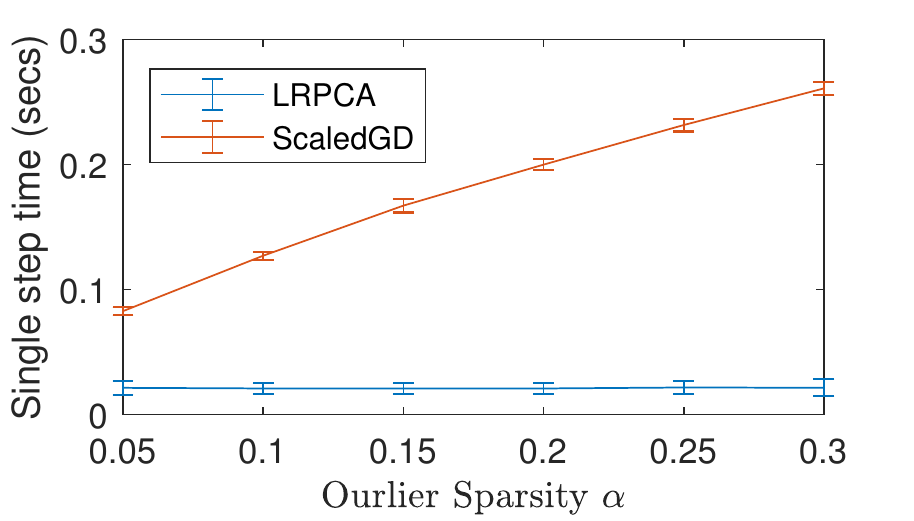}
    \vspace{-2mm}
    \caption{Single-step runtime comparison with error bar for LRPCA and baseline ScaledGD with varying outlier sparsity $\alpha$. Problem dimension $n=1000$ and rank $r=5$. The results are based on running $100$ iterations excluding initialization.}
    \label{fig:conv-t}
    \end{minipage}
    \hfill
    \begin{minipage}[b]{0.49\hsize}
    \centering
    \includegraphics[width = 0.95\textwidth]{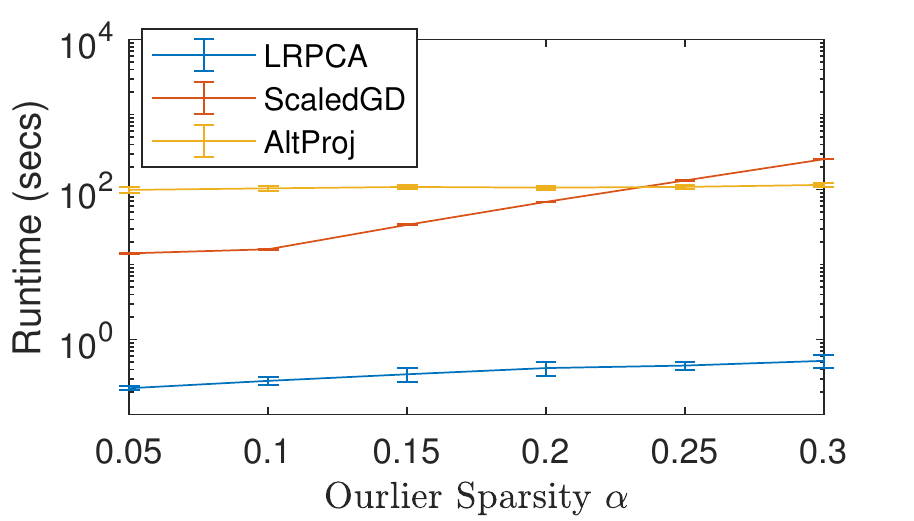}
    \vspace{-2mm}
    \caption{Runtime comparison with error bar for LRPCA and start-of-the-arts (ScaledGD and AltProj) with varying outlier sparsity $\alpha$. Problem dimension $d=3000$ and rank $r=5$. All algorithms halt when $\|\BY-\BX-\BS\|_\fro/\|\BY\|_\fro<10^{-4}$.}
    \label{fig:time-alpha}
    \end{minipage}
}}
\vspace{-0.07in}
\end{figure}




\RV{\paragraph{Recovery performance.} 
We generate $10$ problems for each of the outlier density levels (i.e., $\alpha$) and compare the recoverabilities of LRPCA and ScaledGD. The test result is presented in the Table~\ref{tab:robustness}, where we can conclude that LRPCA is more robust when $\alpha$ is larger.
}

\begin{table}[h]
\centering
\caption{Recovery performance under different outlier density levels.} \label{tab:robustness}
\begin{tabular}{l|c|c|c|c|c|c|c}
\toprule
$\alpha$   & 0.4   & 0.45  & 0.5   & 0.55 & 0.6  & 0.65 & 0.7  \cr \midrule
LRPCA    & \textbf{10}/10 & \textbf{10}/10 & \textbf{10}/10 & \textbf{9}/10 & \textbf{8}/10 & 0/10 & 0/10 \cr
ScaledGD & \textbf{10}/10 & \textbf{10}/10 & 0/10  & 0/10 & 0/10 & 0/10 & 0/10 \cr \bottomrule
\end{tabular}
\end{table}


\paragraph{Video background subtraction.} 
In this section, we apply LRPCA to the task of video background subtraction. We use VIRAT video dataset\footnote{Available at: \url{https://viratdata.org}.} as our benchmark, which includes numerous colorful videos with static backgrounds. The videos have been grouped into training and testing sets. We used $65$ videos for parameter learning and $4$ videos for verification\footnote{The names of the tested videos are corresponding to these original video numbers in VIRAT dataset: ParkingLot1: S\_000204\_06\_001527\_001560, ParkingLot2: S\_010100\_05\_000917\_001017, ParkingLot3: S\_050204\_00\_000000\_000066, StreeView: S\_050100\_13\_002202\_002244.}. We first convert all videos to grayscale. 
Next, each frame of the videos is vectorized and become a matrix column, and all frames of a video form a data matrix. 
The static backgrounds are the low-rank component of the data matrices and moving objects can be viewed as outliers, thus we can separate the backgrounds and foregrounds via RPCA. 
The video details and the experimental results are summarized in Table~\ref{tab:video} and selected visual results \RV{for LRPCA} are presented in Figure~\ref{fig:visualization}. 
One can see LRPCA runs much faster than both ScaledGD and AltProj in all verification tests. Furthermore, the background subtraction results of LRPCA are also visually desirable. More details about the experimental setup can be found in Appendix~\ref{sec:more experiment results}, \RV{alone with additional visual results for ScaledGD and AltProj.}

\begin{table}[h]
\centering
\caption{Video details and runtime comparison for the task of background subtraction. All algorithms halt when \RV{$\max(\|\BX_{k}-\BX_{k-1}\|_\fro/\|\BX_{k-1}\|_\fro, \|\BS_{k}-\BS_{k-1}\|_\fro/\|\BS_{k-1}\|_\fro)<10^{-3}$.} 
}\label{tab:video}
\vspace{0.1in}

 \begin{tabular}{ c | c c | c c c} 
\toprule
 \textsc{Video}            &\textsc{Frame} & \textsc{Frame} &   \multicolumn{3}{c}{\textsc{Runtime} (secs)} \cr

\textsc{Name}          & \textsc{Size}                           & \textsc{Number}                             & LRPCA    & ScaledGD    & AltProj          \cr
\midrule

 ParkingLot1   &$320\times 180$              & $965$                     &   \textbf{16.01} &  $260.45$    & $63.04$    \cr

 ParkingLot2   &$320\times 180$              & $2149$                  &   \textbf{33.95}  &  $639.03$    & $144.50$ \cr

 ParkingLot3 &$480\times 270$              & $1110$                  &   \textbf{38.85}   &  $662.08$    & $166.91$ \cr

 StreeView &$480\times 270$              & $1034$                     &   \textbf{33.73} &  $626.05$    & $167.66$    \cr
\bottomrule
\end{tabular}
\vspace{-4mm}
\end{table}

\begin{figure}[t]
\subfloat{\includegraphics[width=.24\textwidth]{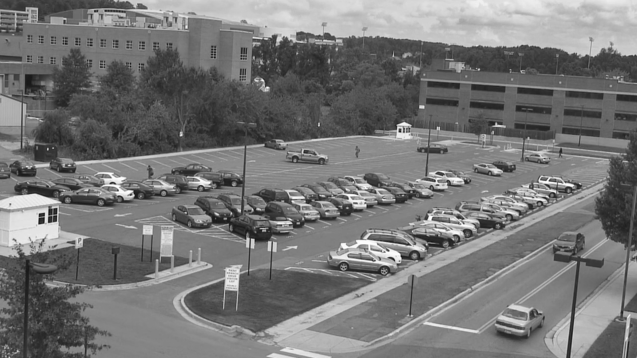}}\hfill
\subfloat{\includegraphics[width=.24\textwidth]{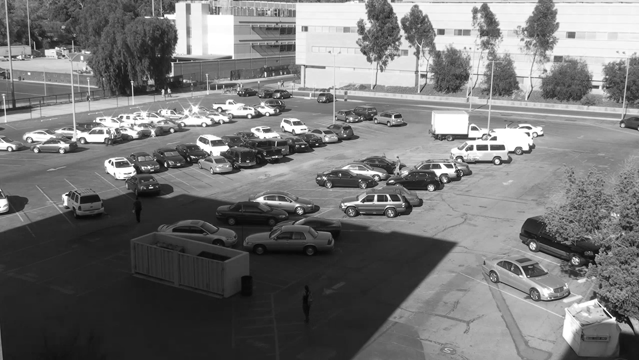}}\hfill
\subfloat{\includegraphics[width=.24\textwidth]{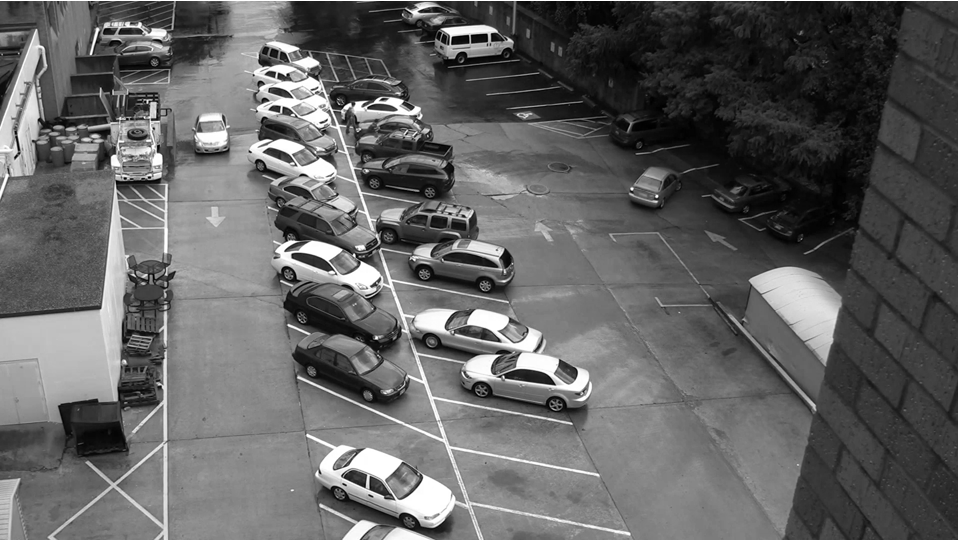}}\hfill
\subfloat{\includegraphics[width=.24\textwidth]{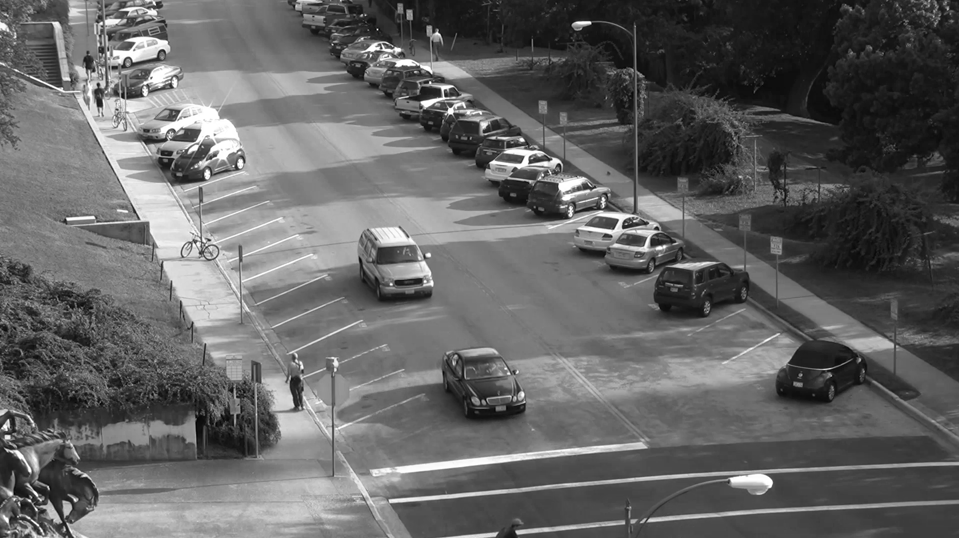}} 

\vspace{-0.1in}
\subfloat{\includegraphics[width=.24\textwidth]{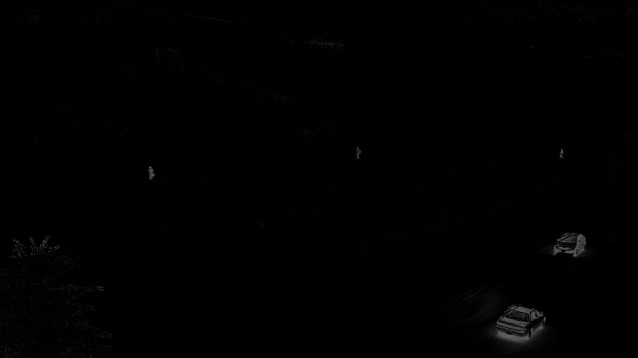}}\hfill
\subfloat{\includegraphics[width=.24\textwidth]{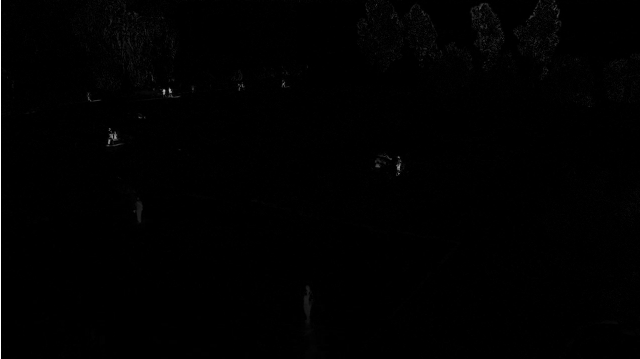}}\hfill
\subfloat{\includegraphics[width=.24\textwidth]{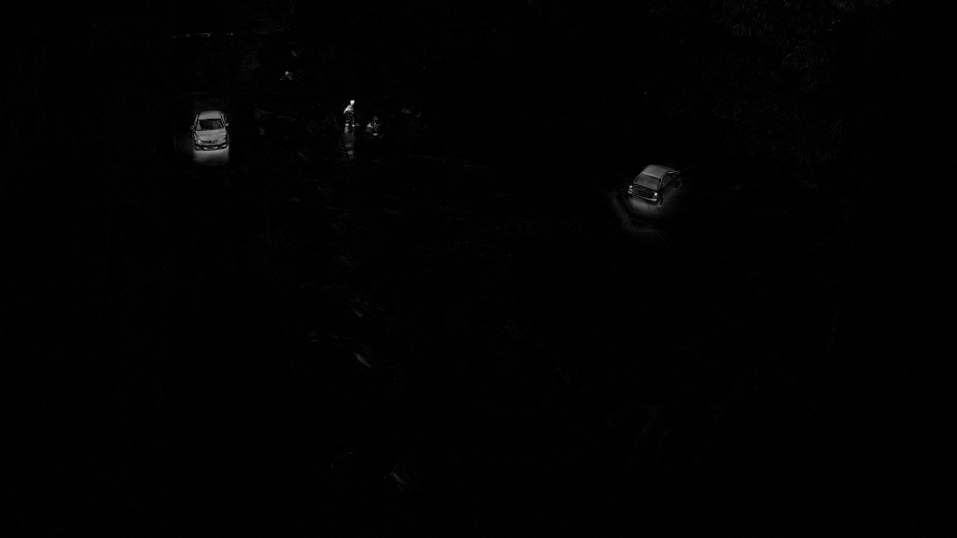}}\hfill
\subfloat{\includegraphics[width=.24\textwidth]{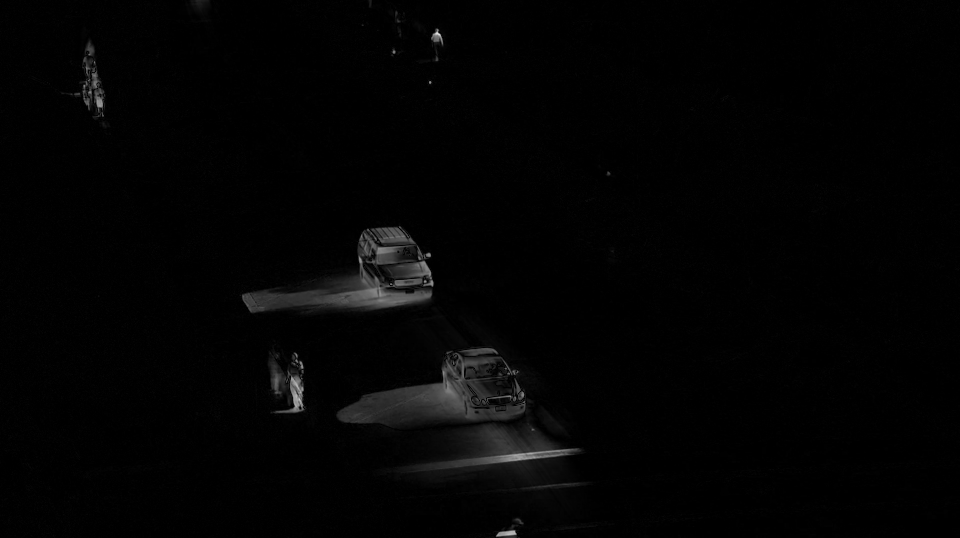}} 

\vspace{-0.1in}
\subfloat{\includegraphics[width=.24\textwidth]{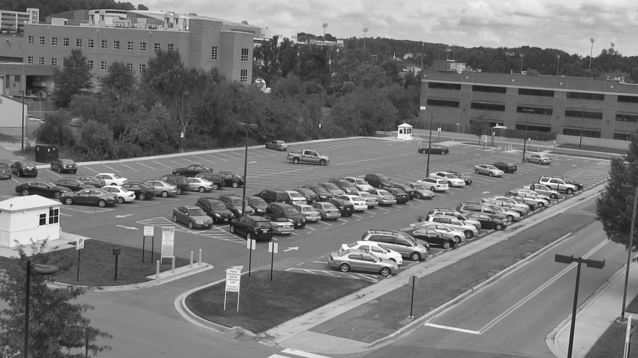}}\hfill
\subfloat{\includegraphics[width=.24\textwidth]{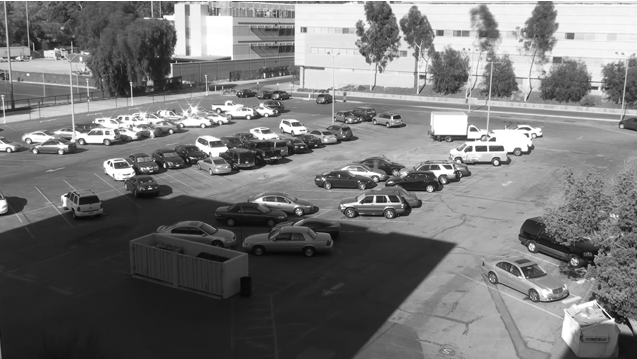}}\hfill
\subfloat{\includegraphics[width=.24\textwidth]{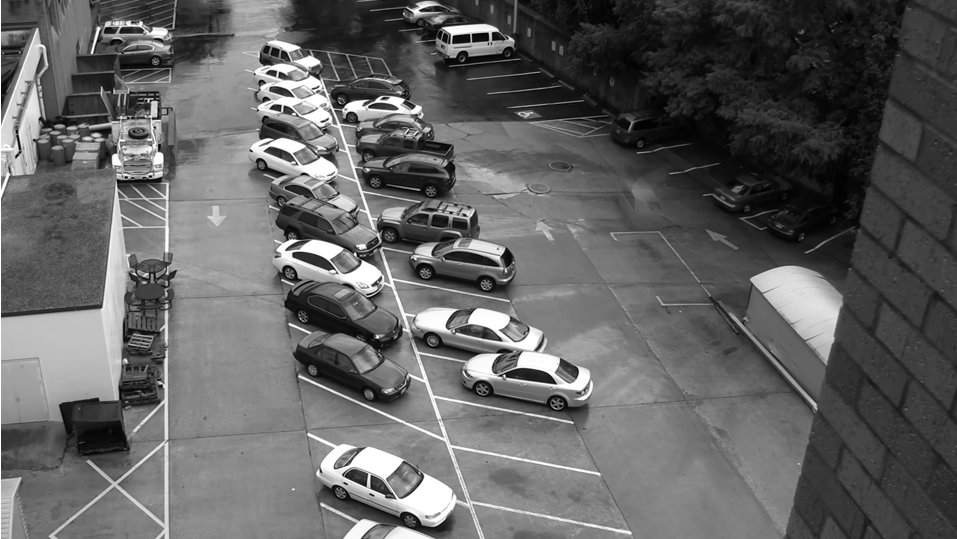}}\hfill
\subfloat{\includegraphics[width=.24\textwidth]{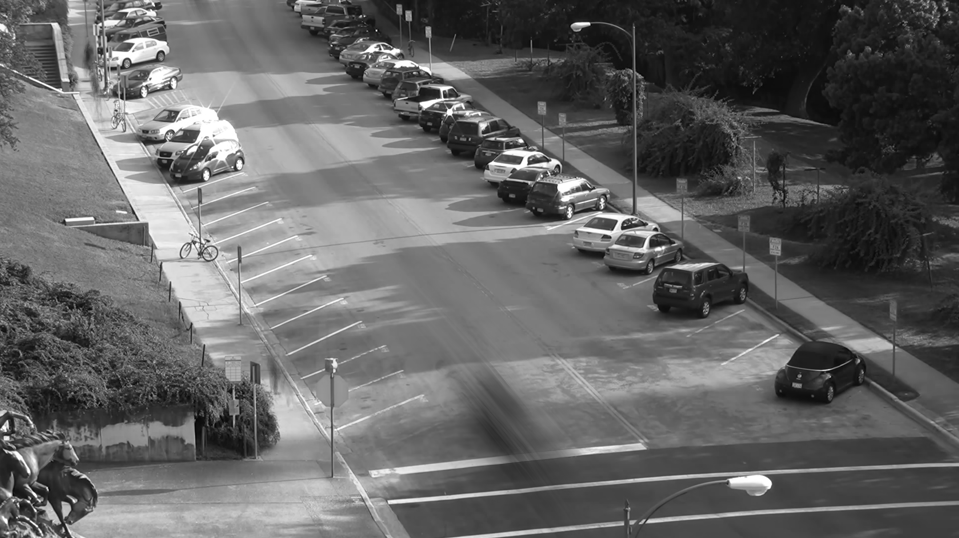}}

\caption{Video background subtraction visual results for LRPCA. Each column represents a selected frame from the tested videos. From left to right, they are ParkingLot1, ParkingLot2, ParkingLot3, and StreetView. The first row contains the original frames. The next two rows are the separated foreground and background produced by LRPCA, respectively. More visual results for ScaledGD and AltProj are available in Appendix~\ref{sec:more experiment results}.}\label{fig:visualization}
\end{figure}


\RV{\paragraph{Ultrasound Imaging.} 
We compare LRPCA with CORONA \citep{cohen2019deep}, a state-of-the-art learning-based RPCA method, on ultrasound imaging. The testing dataset consists of $2400$ training examples and $800$ validation examples, which can be downloaded online at \url{https://www.wisdom.weizmann.ac.il/~yonina}. Each example is of size $1024 \times 40$. The target rank of the low-rank matrix is set to be $r=1$. Recovery accuracy is measured by the loss function:  
$\mathrm{loss} = \mathrm{MSE}(X_\mathrm{output}, X_\star)$, where $\mathrm{MSE}$ stands for the mean square error. The test results are summarized in Table~\ref{tab:ultrasound}. }

\begin{table}[h]
\centering
\caption{Test results for ultrasound imaging.} \label{tab:ultrasound}
\begin{tabular}{l|c|c}
\toprule
\textsc{Algorithm} & \textsc{Average inference time} & \textsc{Average} $\mathrm{loss}$ \cr
\midrule
LRPCA    & \textbf{0.0057} secs            & 9.97\,$\times 10^{-4}$       \cr
CORONA   & 0.9225 secs            & \textbf{4.88}\,$\mathbf{\times} 10^{-4}$       \cr
\bottomrule
\end{tabular}
\end{table}

\RV{
Note that LRPCA is designed for generic RPCA and CORONA is specifically designed for ultrasound imaging. Thus, it is not a surprise that our recovery accuracy is slightly worse than CORONA. However, the average runtime of LRPCA is substantially faster than CORONA. We believe that our speed advantage will be even more significant on larger-scale examples, which is indeed our main contribution: scalability.
}

\paragraph{More numerical experiments.} There are some other interesting empirical problems worth to explore and we put the extra experimental results in Appendix~\ref{sec:more experiment results} due to the space limitation. 
\RV{They are summarized here: 1. 
 Our model has good generalization ability. We can train our model on small-size and low-rank training instances 
 (say, $n=1000$ and $r=5$) 
 and use the model DIRECTLY on problems with larger size 
 (say, $n=3000$ and $r=5$)
 or with higher rank 
 (say, $n=1000$ and $r=15$) 
 with good performance.
  2. The learned parameters are explainable. With the learned parameters, LRPCA goes very aggressively with large step sizes in the first several steps and tends to be conservative after that.}

\section{Conclusion \RV{and future work}}
We have presented a highly efficient learning-based approach for high-dimensional RPCA problems. In addition, we have presented a novel feedforward-recurrent-mixed neural network model which can learn the parameters for potentially infinite iterations without performance reduction. Theoretical recovery guarantee has been established for the proposed algorithm. The numerical experiments show that the proposed approach is superior to the state-of-the-arts.

\RV{One of our future directions is to study learning-based stochastic approach. While this paper focuses on deterministic RPCA approaches, the stochastic RPCA approaches, e.g., partialGD \citep{yi2016fast}, PG-RMC \citep{cherapanamjeri2017nearly}, and IRCUR \citep{cai2020rapid}, have shown promising speed advantage, especially for large-scale problems. Another future direction is to explore robust tensor decomposition incorporate with deep learning, as some preliminary studies have shown the advantages of tensor structure in certain machine learning tasks \citep{cai2021RTCUR,cai2021generalized}. 
}

\section*{Broader impact}
RPCA has been broadly applied as one of fundamental techniques in data mining and machine learning communities. Hence, through its applications, the proposed learning-based approach for RPCA has potential broader impacts that data mining and machine learning have, as well as their limitations.


\newpage

\bibliographystyle{unsrt}
\bibliography{ref}

\newpage

\appendix
\begin{center}
\LARGE
  Supplementary Material for \\
  Learned Robust PCA: A Scalable Deep Unfolding \\ Approach for High-Dimensional Outlier Detection  
\end{center}

\section{Proofs} \label{sec:proofs}
In this section, we provide the mathematical proofs for the claimed theoretical results. Note that the proof of our convergence theorem follows the route established in \citep{tong2020accelerating}. However, the details of our proof are quite involved since we replaced the sparsification operator which substantially changes the method of outlier detection.

Let $\BL_\star:=\BU_\star\BSigma_\star^{1/2}$ and $\BR_\star:=\BV_\star\BSigma_\star^{1/2}$ where $\BU_\star\BSigma_\star\BV_\star^\top$ is the compact SVD of $\BX_\star$. 
For theoretical analysis, we consider the error metric for decomposed rank-$r$ matrices:
\begin{align*}
    \dist(\BL,\BR;\BL_\star,\BR_\star):= \inf_{\BQ\in\R^{r\times r},\rank(\BQ)=r} \left(\|(\BL\BQ-\BL_\star)\BSigma_\star^{1/2}\|_\fro^2 + \| (\BR\BQ^{-\top}-\BR_\star)\BSigma_\star^{1/2} \|_\fro^2\right)^{1/2}.
\end{align*}
Notice that the optimal alignment matrix $\BQ$ exists and invertible if $\BL$ and $\BR$ are sufficiently close to $\BL_\star$ and $\BR_\star$. In particular, one can have the following lemma.

\begin{lemma}[{\citep[Lemma~9]{tong2020accelerating}}]  \label{lm:exist_of_Q}
For any $\BL\in\R^{n_1\times r}$ and $\BR\in\R^{n_2\times r}$, if 
\begin{gather*}
    \dist(\BL,\BR;\BL_\star,\BR_\star) < c \sigma_r(\BX_\star)
\end{gather*}
for some $0< c<1$, then the optimal alignment matrix $\BQ$ between $[\BL,\BR]$ and $[\BL_\star,\BR_\star]$ exists and is invertible. 
\end{lemma}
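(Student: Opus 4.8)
The plan is to prove that the infimum defining $\dist(\BL,\BR;\BL_\star,\BR_\star)$ is actually attained, and at an invertible matrix; since the feasible set of the infimum already consists only of full-rank (hence invertible) $\BQ$, it suffices to exhibit a minimizer. I would argue by compactness of minimizing sequences. Take any sequence $\{\BQ_n\}$ with objective values decreasing to $\dist^2 < c^2\sigma_r^2(\BX_\star)$, discard finitely many terms so that every term has objective value at most $c^2\sigma_r^2(\BX_\star)$, and then show $\{\BQ_n\}$ is trapped in a compact subset of the invertible matrices. A convergent subsequence, together with continuity of the objective on the open set of invertible matrices, will deliver an invertible minimizer.

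The core of the argument is a uniform two-sided singular-value control of $\BL\BQ_n$ and $\BR\BQ_n^{-\top}$. First I would record that $\BL_\star\BSigma_\star^{1/2} = \BU_\star\BSigma_\star$ has singular values exactly $\sigma_1(\BX_\star),\dots,\sigma_r(\BX_\star)$, so $\sigma_{\min}(\BL_\star\BSigma_\star^{1/2}) = \sigma_r(\BX_\star)$ and $\|\BL_\star\BSigma_\star^{1/2}\| = \sigma_1(\BX_\star)$, with the analogous facts for $\BR_\star\BSigma_\star^{1/2}$. Since each squared summand in the objective is at most $c^2\sigma_r^2(\BX_\star)$, the perturbation $\BE_L := (\BL\BQ_n-\BL_\star)\BSigma_\star^{1/2}$ obeys $\|\BE_L\| \le \|\BE_L\|_\fro \le c\,\sigma_r(\BX_\star)$. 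Writing $\BL\BQ_n\BSigma_\star^{1/2} = \BL_\star\BSigma_\star^{1/2} + \BE_L$ and invoking Weyl's inequality for singular values gives $\sigma_{\min}(\BL\BQ_n\BSigma_\star^{1/2}) \ge (1-c)\,\sigma_r(\BX_\star) > 0$. Factoring $\BSigma_\star^{\pm 1/2}$ through the singular-value product bounds then yields $\sigma_{\min}(\BL\BQ_n) \ge (1-c)\sigma_r(\BX_\star)/\sqrt{\sigma_1(\BX_\star)} =: m > 0$ together with $\sigma_{\max}(\BL\BQ_n) \le 2\sigma_1(\BX_\star)/\sqrt{\sigma_r(\BX_\star)} =: M$, and the symmetric bounds hold for $\BR\BQ_n^{-\top}$. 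The strictly positive lower bound is precisely where the hypothesis $c<1$ does its essential work.

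Next I would turn these bounds into uniform control of $\BQ_n$ itself. The bound $\sigma_{\min}(\BL\BQ_n)\ge m>0$ forces $\rank(\BL\BQ_n)=r$, hence $\rank(\BL)=r$ and $\rank(\BQ_n)=r$; in particular $\BL$ has full column rank with a fixed positive $\sigma_{\min}(\BL)$ independent of $n$, and similarly $\rank(\BR)=r$ with fixed $\sigma_{\min}(\BR)>0$. Writing $\BQ_n = \BL^\dagger(\BL\BQ_n)$ and $\BQ_n^{-\top} = \BR^\dagger(\BR\BQ_n^{-\top})$ with the fixed pseudoinverses, the upper bound $M$ gives $\|\BQ_n\| \le M/\sigma_{\min}(\BL)$ and $\|\BQ_n^{-1}\| \le M/\sigma_{\min}(\BR)$, both uniform in $n$. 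Thus $\{\BQ_n\}$ lies in the set of matrices whose norm and inverse-norm are bounded by fixed constants, which is closed, bounded, and contained in the invertible matrices, hence compact.

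Finally I would extract a convergent subsequence $\BQ_{n_k}\to\BQ_\star$; the inverse-norm bound guarantees $\sigma_{\min}(\BQ_\star)>0$, so $\BQ_\star$ is invertible, and since $\BQ\mapsto\BQ^{-\top}$ is continuous on the invertible matrices the objective is continuous at $\BQ_\star$, giving it objective value $\dist^2$. Hence the infimum is attained at the invertible $\BQ_\star$, which is the claim. I expect the main obstacle to be the a priori possibility that $\BL$ or $\BR$ is rank-deficient, which would render the pseudoinverse bounds vacuous; the resolution is exactly the chain through $\sigma_{\min}(\BL\BQ_n\BSigma_\star^{1/2})\ge(1-c)\sigma_r(\BX_\star)$, which must be executed carefully (including the harmless factoring of $\BSigma_\star^{\pm 1/2}$) to establish full column rank of $\BL$ and $\BR$ \emph{before} any pseudoinverse is invoked.
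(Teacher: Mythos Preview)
Your compactness argument is correct. Note, however, that the paper does not supply its own proof of this lemma: it is quoted directly as \citep[Lemma~9]{tong2020accelerating} and stated without proof, so there is no in-paper argument against which to compare yours.

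That said, your route is the natural one. The chain
\[
\sigma_{\min}(\BL\BQ_n\BSigma_\star^{1/2}) \;\ge\; \sigma_{\min}(\BL_\star\BSigma_\star^{1/2}) - \|(\BL\BQ_n-\BL_\star)\BSigma_\star^{1/2}\|_2 \;\ge\; (1-c)\,\sigma_r(\BX_\star) \;>\; 0
\]
via Weyl is exactly the right device, and you are correct to emphasize that it establishes $\rank(\BL)=\rank(\BR)=r$ \emph{before} any pseudoinverse is written down; this is the only delicate ordering issue in the proof. The subsequent uniform bounds $\|\BQ_n\|\le M/\sigma_{\min}(\BL)$ and $\|\BQ_n^{-1}\|\le M/\sigma_{\min}(\BR)$, compactness of $\{\BQ:\|\BQ\|\le C_1,\ \|\BQ^{-1}\|\le C_2\}$, and continuity of the objective on $\mathrm{GL}(r)$ finish the job cleanly. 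One minor cosmetic point: after discarding finitely many terms you may in fact take the objective strictly below $c^2\sigma_r^2(\BX_\star)$ (since the infimum is strictly below it), though your ``at most'' version already suffices because $c<1$ is what delivers $(1-c)>0$.
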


\paragraph{More notation.} 
In addition to the notation introduced in Section~\ref{sec:introduction}, we provide some more notation for the analysis. $\lor$ denotes the logical disjunction, which takes the max of two operands. For any matrix $\BM$, $\|\BM\|_{2}=\sigma_1(\BM)$ denotes the spectral norm and $\|\BM\|_{1,\infty}= \max_i \sum_j |[\BM]_{i,j}|$ denotes the largest row-wise $\ell_1$ norm.

For ease of presentation, we take $n:=n_1=n_2$ in the rest of this section, but we emphasize that similar results can be easily  drawn for the rectangular matrix setting. Furthermore, we introduce following shorthand for notational convenience: $\BQ_k$ denotes the optimal alignment matrix between $(\BL_k, \BR_k)$ and $(\BL_\star, \BR_\star)$, $\BL_\natural:=\BL_k\BQ_k$, $\BR_\natural:=\BR_k\BQ_k^{-\top}$, $\BDelta_L:=\BL_\natural-\BL_\star$, $\BDelta_R:=\BR_\natural-\BR_\star$, and $\BDelta_S:=\BS_{k+1}-\BS_\star$.

\subsection{Proof of Theorem~\ref{thm:main_theorem}}
We first present the theorems of local linear convergence and guaranteed initialization. The proofs of these two theorems can be find in Sections~\ref{sec:local conv} and \ref{sec:guaranteed initialization}, respectively. 

\begin{theorem}[Local linear convergence] \label{thm:local convergence}
Suppose that $\BX_\star=\BL_\star\BR_\star^\top$ is a rank-$r$ matrix with $\mu$-incoherence and $\BS_\star$ is an $\alpha$-sparse matrix with $\alpha\leq\frac{1}{10^4\mu r^{1.5}}$. Let $\BQ_k$ be the optimal alignment matrix between $[\BL_k,\BR_k]$ and $[\BL_\star,\BR_\star]$. If the initial guesses obey the conditions
\begin{gather*}
\distzero \leq \varepsilon_0 \sigma_r(\BX_\star), \cr
\|(\BL_0\BQ_0-\BL_\star)\BSigma_\star^{1/2}\|_{2,\infty}  \lor \|(\BR_0\BQ_0^{-\top}-\BR_\star)\BSigma_\star^{1/2}\|_{2,\infty}\leq\sqrt{\frac{\mu r}{n}} \sigma_r(\BX_\star)
\end{gather*}
with $\varepsilon_0:=0.02$, then by setting the thresholding values $\zeta_k=\|\BX_\star-\BL_{k-1}\BR_{k-1}^\top\|_\infty$ and the fixed step size $\eta_k=\eta\in[\frac{1}{4},\frac{8}{9}]$, the iterates of Algorithm~\ref{algo:LRPCA} satisfy
\begin{gather*}
\distk\leq\varepsilon_0 \tau^k \sigma_r(\BX_\star), \cr
\|(\BL_k\BQ_k-\BL_\star)\BSigma_\star^{1/2}\|_{2,\infty}  \lor \|(\BR_k\BQ_k^{-\top}-\BR_\star)\BSigma_\star^{1/2}\|_{2,\infty} \leq\sqrt{\frac{\mu r}{n}} \tau^k \sigma_r(\BX_\star),
\end{gather*}
where the convergence rate $\tau:=1-0.6\eta$.
\end{theorem}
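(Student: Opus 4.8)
The plan is to prove Theorem~\ref{thm:local convergence} by induction on $k$; the base case $k=0$ is exactly the stated hypothesis on the initial guesses, so assume both bounds hold at iteration $k$ and establish them at $k+1$. The first and cleanest step concerns the outlier update. Since $\BY-\BL_k\BR_k^\top=(\BX_\star-\BL_k\BR_k^\top)+\BS_\star$, on the complement of $\supp(\BS_\star)$ this residual equals $\BX_\star-\BL_k\BR_k^\top$, whose every entry is bounded in magnitude by $\zeta_{k+1}=\|\BX_\star-\BL_k\BR_k^\top\|_\infty$; hence $\cS_{\zeta_{k+1}}$ annihilates it and $\supp(\BS_{k+1})\subseteq\supp(\BS_\star)$. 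On $\supp(\BS_\star)$ the shrinkage error obeys $|[\BDelta_S]_{i,j}|\leq|[\BX_\star-\BL_k\BR_k^\top]_{i,j}|+\zeta_{k+1}\leq 2\zeta_{k+1}$, so $\BDelta_S=\BS_{k+1}-\BS_\star$ lives on an $\alpha$-sparse pattern with $\|\BDelta_S\|_\infty\leq 2\zeta_{k+1}$.

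Next I would convert the row-wise induction hypothesis into quantitative smallness of the outlier residual. Writing $\BL_k\BR_k^\top=\BL_\natural\BR_\natural^\top$ and expanding $\BX_\star-\BL_k\BR_k^\top=-\BDelta_L\BR_\star^\top-\BL_\star\BDelta_R^\top-\BDelta_L\BDelta_R^\top$, a per-entry Cauchy--Schwarz together with $\mu$-incoherence (e.g.\ $\|\BDelta_L\BR_\star^\top\|_\infty\leq\|\BDelta_L\BSigma_\star^{1/2}\|_{2,\infty}\|\BV_\star\|_{2,\infty}$) gives $\zeta_{k+1}\lesssim\frac{\mu r}{n}\tau^k\sigma_r(\BX_\star)$. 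Combining with the sparsity of $\BDelta_S$ (at most $\alpha n$ nonzeros per row and per column) then yields $\|\BDelta_S\|_2\leq 2\alpha n\,\zeta_{k+1}\lesssim\alpha\mu r\,\tau^k\sigma_r(\BX_\star)\lesssim r^{-1/2}\tau^k\sigma_r(\BX_\star)$ and $\|\BDelta_S\|_{2,\infty}\leq 2\sqrt{\alpha n}\,\zeta_{k+1}\ll\sqrt{\mu r/n}\,\tau^k\sigma_r(\BX_\star)$, using $\alpha\leq(10^4\mu r^{1.5})^{-1}$. These are the estimates that make the outlier residual a genuinely negligible perturbation --- smaller by a factor $\sim r^{-1/2}$ than the analogous term produced by the sparsification operator in ScaledGD.

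With $\BDelta_S$ under control, the distance contraction $\distkplusone\leq\tau\,\distk$ follows the route of \citep{tong2020accelerating}: one plugs the scaled-gradient updates of Algorithm~\ref{algo:LRPCA} into $\dist^2$ evaluated with the alignment $\BQ_k$ (valid and invertible by Lemma~\ref{lm:exist_of_Q}), which upper-bounds $\distsquarekplusone$. The residual $\BL_k\BR_k^\top+\BS_{k+1}-\BY=(\BL_\natural\BR_\natural^\top-\BX_\star)-\BDelta_S$ splits into the ``clean'' part, for which the ScaledGD algebra (using the in-neighborhood estimates of $\|\BL_\natural\BSigma_\star^{1/2}\|_2$, $\|(\BL_\natural^\top\BL_\natural)^{-1}\|$, and their $\BR$-counterparts) goes through with only minor bookkeeping changes, and the perturbation $\BDelta_S$, which contributes at the level $\cO(\|\BDelta_S\|_2/\sigma_r(\BX_\star))=\cO(r^{-1/2})$ relative to $\distk$. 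Because this contribution is so small, the quadratic-in-$\eta$ one-step bound still closes at rate $1-0.6\eta$ for every $\eta\leq\frac{8}{9}$, the larger admissible step size (versus $\frac{2}{3}$ in ScaledGD) being precisely the dividend of the smaller outlier perturbation.

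The main obstacle is the row-wise contraction: unlike ScaledGD, which only keeps $\|(\BL_k\BQ_k-\BL_\star)\BSigma_\star^{1/2}\|_{2,\infty}$ bounded, I must show it strictly shrinks by a factor $\tau$. The plan is to first work with the fixed alignment $\BQ_k$: rewriting the update gives the exact identity $(\BL_{k+1}\BQ_k-\BL_\star)\BSigma_\star^{1/2}=(1-\eta)\BDelta_L\BSigma_\star^{1/2}-\eta\,\BL_\star\BDelta_R^\top\BR_\natural(\BR_\natural^\top\BR_\natural)^{-1}\BSigma_\star^{1/2}+\eta\,\BDelta_S\BR_\natural(\BR_\natural^\top\BR_\natural)^{-1}\BSigma_\star^{1/2}$, whose first term is directly $(1-\eta)$ times the induction bound, whose second term is controlled (as in ScaledGD, the scaling cancels $\kappa$) via $\|\BL_\star\BDelta_R^\top\BV_\star\|_{2,\infty}\leq\|\BU_\star\|_{2,\infty}\|\BDelta_R\BSigma_\star^{1/2}\|_\fro$ plus a small correction for $\BR_\natural\neq\BR_\star$, and whose third (outlier) term requires the new technical lemmata estimating $\|\BDelta_S\BR_\natural(\BR_\natural^\top\BR_\natural)^{-1}\BSigma_\star^{1/2}\|_{2,\infty}$ --- the key point being that a row/column-sparse matrix times an incoherent factor has row-wise $\ell_2$ norm of order $\sqrt{\alpha\mu r}\,\|\BDelta_S\|_\infty$, small enough by the $\alpha$ assumption. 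One then passes from $\BQ_k$ to the true optimal $\BQ_{k+1}$, showing the correction $\|\BL_{k+1}(\BQ_{k+1}-\BQ_k)\BSigma_\star^{1/2}\|_{2,\infty}$ is higher order, and chooses constants so that $(1-\eta)+(\text{lower order})\leq\tau$. This closes the induction, and the consequence quoted in Theorem~\ref{thm:main_theorem}, $\|\BL_k\BR_k^\top-\BX_\star\|_\fro\leq 0.03(1-0.6\eta)^k\sigma_r(\BX_\star)$, then follows from $\distk\leq0.02\,\tau^k\sigma_r(\BX_\star)$ via the standard in-neighborhood inequality relating $\|\BL\BR^\top-\BL_\star\BR_\star^\top\|_\fro$ to $\dist$, while the separately proved initialization theorem (Section~\ref{sec:guaranteed initialization}) supplies its base-case hypotheses.
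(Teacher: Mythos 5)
Your proposal is correct and follows essentially the same route as the paper's proof: induction with the same two-part invariant, the same support-containment and $\ell_\infty$ shrinkage-error lemma for $\BS_{k+1}$, the same incoherence-based bound $\|\BX_\star-\BX_k\|_\infty\lesssim\frac{\mu r}{n}\tau^k\sigma_r(\BX_\star)$ feeding sparse-matrix norm estimates for $\BDelta_S$, the same algebraic identity for $(\BL_{k+1}\BQ_k-\BL_\star)\BSigma_\star^{1/2}$ with a clean-plus-perturbation split, and the same $\BQ_k\to\BQ_{k+1}$ correction argument for the $\ell_{2,\infty}$ contraction. (The sign on the $\eta\,\BDelta_S\BR_\natural(\BR_\natural^\top\BR_\natural)^{-1}$ term in your identity should be negative, but this is immaterial since only norms are used.)
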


\begin{theorem}[Guaranteed initialization] \label{thm:initial}
Suppose that $\BX_\star=\BL_\star\BR_\star^\top$ is a rank-$r$ matrix with $\mu$-incoherence and $\BS_\star$ is an $\alpha$-sparse matrix with $\alpha\leq\frac{c_0}{\mu r^{1.5}\kappa}$ for some small positive constant $c_0\leq\frac{1}{35}$. Let $\BQ_0$ be the optimal alignment matrix between $[\BL_0,\BR_0]$ and $[\BL_\star,\BR_\star]$. By setting the thresholding values $\zeta_0=\|\BX_\star\|_\infty$, the initial guesses satisfy
\begin{gather*}
\distzero \leq 10c_0 \sigma_r(\BX_\star), \cr
\|(\BL_0\BQ_0-\BL_\star)\BSigma_\star^{1/2}\|_{2,\infty}  \lor \|(\BR_0\BQ_0^{-\top}-\BR_\star)\BSigma_\star^{1/2}\|_{2,\infty}\leq\sqrt{\frac{\mu r}{n}} \sigma_r(\BX_\star) .
\end{gather*}
\end{theorem}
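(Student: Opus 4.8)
The quality of the spectral initialization is governed entirely by how well $\BS_0=\cS_{\zeta_0}(\BY)$ approximates $\BS_\star$, so the plan is to first control the soft-thresholding error and then feed the resulting bounds into a spectral-perturbation argument. \textbf{Step 1: analyze the thresholding.} With $\zeta_0=\|\BX_\star\|_\infty$, for any $(i,j)\notin\supp(\BS_\star)$ we have $[\BY]_{i,j}=[\BX_\star]_{i,j}$, so $|[\BY]_{i,j}|\le\zeta_0$ and hence $[\BS_0]_{i,j}=0$: there are no false positives, i.e. $\supp(\BS_0)\subseteq\supp(\BS_\star)$. For $(i,j)\in\supp(\BS_\star)$, combining $|\cS_{\zeta_0}(a)-a|\le\zeta_0$ with $[\BY]_{i,j}=[\BS_\star]_{i,j}+[\BX_\star]_{i,j}$ gives $|[\BS_0-\BS_\star]_{i,j}|\le\zeta_0+|[\BX_\star]_{i,j}|\le 2\|\BX_\star\|_\infty$. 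Hence $\BE:=\BS_\star-\BS_0$ is supported on $\supp(\BS_\star)$ with $\|\BE\|_\infty\le 2\|\BX_\star\|_\infty$.

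\textbf{Step 2: pass to operator-norm bounds.} By $\mu$-incoherence, $\|\BX_\star\|_\infty\le\|\BU_\star\|_{2,\infty}\,\sigma_1(\BX_\star)\,\|\BV_\star\|_{2,\infty}\le\frac{\mu r}{n}\sigma_1(\BX_\star)$. Since $\BE$ inherits $\alpha$-sparsity of rows and columns from $\supp(\BS_\star)$, the standard bound $\|\BE\|_2\le\alpha n\|\BE\|_\infty$ gives $\|\BE\|_2\le 2\alpha\mu r\kappa\,\sigma_r(\BX_\star)\le\frac{2c_0}{\sqrt r}\sigma_r(\BX_\star)\le 2c_0\,\sigma_r(\BX_\star)$ using $\alpha\le\frac{c_0}{\mu r^{1.5}\kappa}$; likewise $\|\BE\|_{2,\infty}\le\sqrt{\alpha n}\,\|\BE\|_\infty$ for later use. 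Write $\BY-\BS_0=\BX_\star+\BE$.

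\textbf{Step 3: spectral initialization.} Since $\|\BE\|_2$ is a small multiple of $\sigma_r(\BX_\star)$ (here $c_0\le\frac{1}{35}$ is what makes the constants close), Weyl's inequality shows the rank-$r$ truncation $\BU_0\BSigma_0\BV_0^\top=\SVD_r(\BY-\BS_0)$ is well separated from the tail, and a standard factorization/perturbation lemma for balanced spectral initialization (as in the ScaledGD analysis) then yields $\distzero\le 10c_0\,\sigma_r(\BX_\star)$. The remaining and most delicate task is the aligned $\ell_{2,\infty}$ bound $\|(\BL_0\BQ_0-\BL_\star)\BSigma_\star^{1/2}\|_{2,\infty}\lor\|(\BR_0\BQ_0^{-\top}-\BR_\star)\BSigma_\star^{1/2}\|_{2,\infty}\le\sqrt{\mu r/n}\,\sigma_r(\BX_\star)$; note Lemma~\ref{lm:exist_of_Q} ensures $\BQ_0$ exists and is invertible once $\distzero$ is small. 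A crude triangle inequality is too lossy here, because $\|\BL_\star\BSigma_\star^{1/2}\|_{2,\infty}=\|\BU_\star\BSigma_\star\|_{2,\infty}$ can itself be of order $\sqrt{\mu r/n}\,\kappa\,\sigma_r(\BX_\star)$; instead I would extract cancellation by writing $\BU_0=(\BX_\star+\BE)\BV_0\BSigma_0^{-1}$, substituting $\BX_\star=\BU_\star\BSigma_\star\BV_\star^\top$, using the near-orthogonality of $\BV_0$ and $\BV_\star$ implied by the distance bound to absorb the leading term into the incoherence of $\BU_\star$, and controlling the residual with $\|\BE\|_{2,\infty}$ and $\|\BE\|_2$; the $\BR$-term is symmetric via $\BR_0=(\BX_\star+\BE)^\top\BU_0\BSigma_0^{-1}$.

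\textbf{Main obstacle.} I expect the $\ell_{2,\infty}$ estimate in Step~3 to be the crux: one must track the alignment matrix $\BQ_0$ jointly with the row/column sparsity structure of $\BE$ so that the aligned error comes out a full factor $\kappa$ below the signal level, and then bookkeep all absolute constants to meet the targets $10c_0\,\sigma_r(\BX_\star)$ and $\sqrt{\mu r/n}\,\sigma_r(\BX_\star)$ (which is what pins down $c_0\le\frac{1}{35}$). By contrast, Steps~1--2 are elementary; they are nonetheless the genuinely new ingredient relative to ScaledGD, since replacing the sparsification operator by soft-thresholding changes the outlier-detection analysis, and the no-false-positive property obtained there is exactly what allows $\alpha$-sparsity to be invoked when bounding $\|\BE\|_2$.
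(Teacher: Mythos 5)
Your Steps 1--2 and the route to the first claim are exactly the paper's: the no-false-positive property and the bound $\|\BS_\star-\BS_0\|_\infty\le 2\|\BX_\star\|_\infty\le\frac{2\mu r}{n}\sigma_1(\BX_\star)$ are the paper's Lemma~\ref{lm:sparity} (applied with $\BX_{-1}=\bm 0$) combined with Lemma~\ref{lm:bound of sparse matrix}, and the paper then gets $\|\BX_\star-\BX_0\|_2\le 2\|\BS_\star-\BS_0\|_2$ from the best-rank-$r$ property and converts to $\distzero\le 10\alpha\mu r^{1.5}\kappa\,\sigma_r(\BX_\star)$ via a ScaledGD lemma and the rank-$2r$ spectral-to-Frobenius conversion (this is where the extra $\sqrt r$ in the sparsity requirement comes from). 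Your starting identity for the $\ell_{2,\infty}$ bound, $\BU_0=(\BX_\star+\BE)\BV_0\BSigma_0^{-1}$, is also the paper's: it yields $\BDelta_L\BSigma_\star^{1/2}=-\BL_\star\BDelta_R^\top\BR_\natural(\BR_\natural^\top\BR_\natural)^{-1}\BSigma_\star^{1/2}-\BDelta_S\BR_\natural(\BR_\natural^\top\BR_\natural)^{-1}\BSigma_\star^{1/2}$, two coupled inequalities in $\|\BDelta_L\BSigma_\star^{1/2}\|_{2,\infty}$ and $\|\BDelta_R\BSigma_\star^{1/2}\|_{2,\infty}$ that are then solved jointly (this is where $c_0\le\frac{1}{35}$ is consumed).

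The one concrete problem is your plan to control the sparse residual term with $\|\BE\|_{2,\infty}$ and $\|\BE\|_2$. Bounding $\|\BE\BR_\natural(\BR_\natural^\top\BR_\natural)^{-1}\BSigma_\star^{1/2}\|_{2,\infty}\le\|\BE\|_{2,\infty}\|\BR_\natural(\BR_\natural^\top\BR_\natural)^{-1}\BSigma_\star^{1/2}\|_2$ gives, after inserting $\|\BE\|_{2,\infty}\le\sqrt{\alpha n}\cdot\frac{2\mu r}{n}\sigma_1(\BX_\star)$, a quantity of order $\sqrt{\alpha\mu r}\,\kappa\cdot\sqrt{\mu r/n}\,\sigma_r(\BX_\star)$; under $\alpha\le\frac{c_0}{\mu r^{1.5}\kappa}$ the prefactor is $\asymp\sqrt{c_0}\,\kappa^{1/2}r^{-1/4}$, which is not $O(1)$ when $\kappa\gg\sqrt r$, so this pairing does not close at the assumed sparsity level. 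The paper instead uses $\|\bm A\bm B\|_{2,\infty}\le\|\bm A\|_{1,\infty}\|\bm B\|_{2,\infty}$, pairing the row-wise $\ell_1$ norm $\|\BE\|_{1,\infty}\le\alpha n\|\BE\|_\infty\le 2\alpha\mu r\kappa\,\sigma_r(\BX_\star)$ (whose prefactor is $\le 2c_0/\sqrt r$) with the row-wise incoherence $\|\BR_\natural\BSigma_\star^{-1/2}\|_{2,\infty}\lesssim\sqrt{\mu r/n}$, so that the $\sqrt{\mu r/n}$ factor is supplied by the factor matrix rather than by $\BE$. You should replace the $\|\BE\|_{2,\infty}$ step with this $\|\cdot\|_{1,\infty}$/$\|\cdot\|_{2,\infty}$ pairing; the rest of your outline then matches the paper's argument.
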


In addition, we present a lemma that verifies our selection of thresholding values is indeed effective.

\begin{lemma} \label{lm:sparity}
At the $(k+1)$-th iteration of Algorithm~\ref{algo:LRPCA}, taking the thresholding value $\zeta_{k+1}:=\|\BX_\star-\BX_{k}\|_\infty$ gives
\begin{gather*}
    \|\BS_\star-\BS_{k+1}\|_\infty \leq 2 \|\BX_\star-\BX_{k}\|_\infty 
    \qquad\textnormal{and}\qquad
    \supp(\BS_{k+1})\subseteq \supp(\BS_\star).
\end{gather*}
\end{lemma}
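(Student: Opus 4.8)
The plan is to analyze the soft-thresholding update $\BS_{k+1}=\cS_{\zeta_{k+1}}(\BY-\BX_k)$ entrywise, splitting into the support of $\BS_\star$ and its complement. Write $\BY-\BX_k = \BX_\star + \BS_\star - \BX_k = \BS_\star + (\BX_\star - \BX_k)$, and let $\BE := \BX_\star - \BX_k$, so that $\|\BE\|_\infty = \zeta_{k+1}$ by our choice of threshold. The key observation is that soft-thresholding at level $\zeta_{k+1} = \|\BE\|_\infty$ is designed exactly to kill any entry whose magnitude is at most $\|\BE\|_\infty$.

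First I would handle the support containment. For any $(i,j)\notin\supp(\BS_\star)$, we have $[\BY-\BX_k]_{i,j} = [\BE]_{i,j}$, so $|[\BY-\BX_k]_{i,j}| \le \|\BE\|_\infty = \zeta_{k+1}$; hence by definition~\eqref{eq:soft-thresholding} of $\cS_{\zeta_{k+1}}$, we get $[\BS_{k+1}]_{i,j} = 0$. This proves $\supp(\BS_{k+1})\subseteq\supp(\BS_\star)$. Second, I would bound the entrywise error $\|\BS_\star - \BS_{k+1}\|_\infty$. On the complement of $\supp(\BS_\star)$, both $[\BS_\star]_{i,j}$ and $[\BS_{k+1}]_{i,j}$ vanish, so the difference is zero there. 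On $\supp(\BS_\star)$, I would use the elementary pointwise bound for soft-thresholding: for any scalars $a,b$ and threshold $\zeta$, $|\cS_\zeta(a+b) - a| \le |b| + \zeta$ (the shrinkage moves the value by at most $\zeta$, and $b$ is the perturbation). Applying this with $a = [\BS_\star]_{i,j}$, $b = [\BE]_{i,j}$, $\zeta = \zeta_{k+1}$ gives $|[\BS_{k+1}]_{i,j} - [\BS_\star]_{i,j}| \le |[\BE]_{i,j}| + \zeta_{k+1} \le 2\|\BE\|_\infty = 2\|\BX_\star - \BX_k\|_\infty$. Taking the max over all $(i,j)$ yields the claimed inequality.

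The argument is essentially routine; the only mild subtlety is justifying the pointwise shrinkage bound $|\cS_\zeta(a+b) - a| \le |b| + \zeta$, which follows from the $1$-Lipschitz property of $\cS_\zeta$ together with $|\cS_\zeta(a) - a|\le\zeta$, or can be checked directly by cases on the sign and magnitude of $a+b$ relative to $\zeta$. I do not anticipate a genuine obstacle here; the lemma's role is purely to certify that the data-dependent threshold $\zeta_{k+1}=\|\BX_\star-\BX_k\|_\infty$ both suppresses all false positives and keeps the outlier estimate within a factor comparable to the current low-rank error $\|\BX_\star-\BX_k\|_\infty$, which is what the convergence analysis in Theorem~\ref{thm:local convergence} subsequently consumes.
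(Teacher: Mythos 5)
Your proposal is correct and follows essentially the same route as the paper's proof: an entrywise analysis of $\cS_{\zeta_{k+1}}(\BS_\star+(\BX_\star-\BX_k))$, with the support containment coming from $|[\BY-\BX_k]_{i,j}|\le\zeta_{k+1}$ off $\supp(\BS_\star)$, and the error bound from a triangle inequality combining the perturbation $|[\BX_\star-\BX_k]_{i,j}|$ with the shrinkage amount $\zeta_{k+1}$. The paper merely writes out your pointwise bound $|\cS_\zeta(a+b)-a|\le|b|+\zeta$ as an explicit two-case split over $\supp(\BS_{k+1})$ and $\supp(\BS_\star)\setminus\supp(\BS_{k+1})$; the content is identical.
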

\begin{proof}
Denote $\Omega_\star:=\supp(\BS_\star)$ and $\Omega_{k+1}:=\supp(\BS_{k+1})$. 
Recall that $\BS_{k+1} = \cS_{\zeta_{k+1}}(\BY-\BX_{k})=\cS_{\zeta_{k+1}}(\BS_\star+\BX_\star-\BX_{k})$. Since $[\BS_\star]_{i,j} = 0$ outside its support, so $[\BY-\BX_{k}]_{i,j} = [\BX_\star-\BX_{k}]_{i,j}$ for the entries $(i,j)\in\Omega_\star^c$. Applying the chosen thresholding value $\zeta_{k+1}:=\|\BX_\star-\BX_{k}\|_\infty$, one have $[\BS_{k+1}]_{i,j}=0$ for all $(i,j)\in \Omega_\star^c$. Hence, the support of $\BS_{k+1}$ must belongs to the support of $\BS_\star$, i.e.,
\begin{equation*}
\supp(\BS_{k+1})=\Omega_{k+1}\subseteq\Omega_\star=\supp(\BS_\star).
\end{equation*} 
This proves our first claim.

Obviously, $[\BS_\star-\BS_{k+1}]_{i,j}=0$ for all $(i,j)\in\Omega_\star^c$. Moreover, we can split the entries in $\Omega_\star$ into two groups:
\begin{align*}
    \Omega_{k+1} &= \{(i,j)~|~ |[\BY-\BX_k]_{i,j}|>\zeta_{k+1} \textnormal{ and } [\BS_\star]_{i,j}\neq 0\} \quad \textnormal{ and } \cr
    \Omega_\star\backslash\Omega_{k+1} &= \{(i,j)~|~ |[\BY-\BX_k]_{i,j}|\leq\zeta_{k+1} \textnormal{ and } [\BS_\star]_{i,j}\neq 0\},
\end{align*}
and it holds
\begin{align*}
|[\BS_\star-\BS_{k+1}]_{i,j}|&= 
\begin{cases}
|[\BX_{k}-\BX_\star]_{i,j} - \mathrm{sign}([\BY-\BX_{k}]_{i,j})\zeta_{k+1} |&   \cr
|[\BS_\star]_{i,j}|         & \cr
\end{cases}  \cr
&\leq
\begin{cases}
|[\BX_{k}-\BX_\star]_{i,j}|+\zeta_{k+1}      &  \cr
|[\BX_\star-\BX_{k}]_{i,j}|+\zeta_{k+1}  & \cr
\end{cases} \cr
&\leq
\begin{cases}
2\|\BX_\star-\BX_{k}\|_\infty      & \qquad\qquad (i,j)\in \Omega_{k+1},   \cr
2\|\BX_\star-\BX_{k}\|_\infty      & \qquad\qquad (i,j)\in \Omega_\star\backslash\Omega_{k+1}. \cr
\end{cases}
\end{align*}
Therefore, we can conclude $\|\BS_\star-\BS_{k+1}\|_\infty\leq 2\|\BX_\star-\BX_{k}\|_\infty $.
\end{proof}

Now, we are already to prove Theorem~\ref{thm:main_theorem}.

\begin{proof}[Proof of Theorem~~\ref{thm:main_theorem}]
Take $c_0=10^{-4}$ in Theorem~\ref{thm:initial}. Thus, the results of Theorem~\ref{thm:initial} satisfy the condition of Theorem~\ref{thm:local convergence}, and gives 
\begin{gather*}
    \distk\leq 0.02 (1-0.6\eta)^k \sigma_r(\BX_\star), \cr
    \|(\BL_k\BQ_k-\BL_\star)\BSigma_\star^{1/2}\|_{2,\infty}  \lor \|(\BR_k\BQ_k^{-\top}-\BR_\star)\BSigma_\star^{1/2}\|_{2,\infty} \leq\sqrt{\frac{\mu r}{n}} (1-0.6\eta)^k \sigma_r(\BX_\star)
\end{gather*}
for all $k\geq0$. \citep[Lemma~3]{tong2020accelerating} states that
\begin{gather*}
    \| \BL_k\BR_k^\top - \BX_\star \|_\fro \leq 1.5~\distk
\end{gather*}
as long as $\|(\BL_k\BQ_k-\BL_\star)\BSigma_\star^{1/2}\|_{2,\infty}  \lor \|(\BR_k\BQ_k^{-\top}-\BR_\star)\BSigma_\star^{1/2}\|_{2,\infty} \leq\sqrt{\frac{\mu r}{n}} \sigma_r(\BX_\star)$.
Hence, our first claim is proved.

When $k\geq 1$, the second claim is directly followed by Lemma~\ref{lm:sparity}. When $k=0$, take $\BX_{-1}=\bm{0}$, then one can see $\BS_0=\cS_{\zeta_0}(\BY)=\cS_{\zeta_0}(\BY-\BX_{-1})$ where $\zeta_0=\|\BX_\star\|_\infty=\|\BX_\star-\BX_{-1}\|_\infty$. Applying Lemma~\ref{lm:sparity} again, we have the second claim for all $k\geq0$.

This finishes the proof.
\end{proof}

\subsection{Auxiliary lemmata}
Before we can present the proofs for Theorems~\ref{thm:local convergence} and \ref{thm:initial}, several important auxiliary lemmata must be processed.

\begin{lemma} \label{lm:bound of sparse matrix}
For any $\alpha$-sparse matrix $\BS\in \R^{n \times n}$, the following inequalities hold:
\begin{align*}
    \|\BS\|_2&\leq\alpha n \|\BS\|_\infty, \cr
    \|\BS\|_{2,\infty}&\leq\sqrt{\alpha n} \|\BS\|_\infty, \cr
    \|\BS\|_{1,\infty}&\leq \alpha n \|\BS\|_\infty .
\end{align*}
\end{lemma}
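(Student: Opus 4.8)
\textbf{Proof plan for Lemma~\ref{lm:bound of sparse matrix}.}
The plan is to handle the three inequalities in increasing order of difficulty, exploiting only the combinatorial structure that $\BS$ has at most $\alpha n$ nonzero entries in each row and each column, together with the entrywise bound $|[\BS]_{i,j}| \leq \|\BS\|_\infty$.

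First I would dispatch the $\|\BS\|_{1,\infty}$ bound directly: fix any row index $i$; by $\alpha$-sparsity the set $\{ j : [\BS]_{i,j}\neq 0 \}$ has cardinality at most $\alpha n$, so $\sum_j |[\BS]_{i,j}| \leq (\alpha n)\,\|\BS\|_\infty$, and taking the maximum over $i$ gives $\|\BS\|_{1,\infty} \leq \alpha n\|\BS\|_\infty$. The $\|\BS\|_{2,\infty}$ bound is essentially the same counting argument with squares: for each row $i$, $\sum_j [\BS]_{i,j}^2 \leq (\alpha n)\,\|\BS\|_\infty^2$, hence $\big(\sum_j [\BS]_{i,j}^2\big)^{1/2} \leq \sqrt{\alpha n}\,\|\BS\|_\infty$, and again maximize over $i$.

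The only step requiring a genuine inequality rather than pure counting is the spectral-norm bound, and this is the place I expect to spend the most care. I would invoke the standard interpolation estimate $\|\BS\|_2 \leq \sqrt{\|\BS\|_{\ell_1\to\ell_1}\;\|\BS\|_{\ell_\infty\to\ell_\infty}}$, where $\|\BS\|_{\ell_1\to\ell_1} = \max_j \sum_i |[\BS]_{i,j}|$ is the largest column $\ell_1$ norm and $\|\BS\|_{\ell_\infty\to\ell_\infty} = \max_i \sum_j |[\BS]_{i,j}| = \|\BS\|_{1,\infty}$ is the largest row $\ell_1$ norm (this follows e.g. from the Riesz--Thorin / Schur test, or from $\|\BS\|_2^2 = \|\BS^\top\BS\|_2 \leq \|\BS^\top\BS\|_{\ell_\infty\to\ell_\infty} \leq \|\BS\|_{\ell_\infty\to\ell_\infty}\|\BS^\top\|_{\ell_\infty\to\ell_\infty}$). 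By $\alpha$-sparsity applied columnwise, $\|\BS\|_{\ell_1\to\ell_1} \leq \alpha n\|\BS\|_\infty$, and by the first part $\|\BS\|_{\ell_\infty\to\ell_\infty} = \|\BS\|_{1,\infty}\leq \alpha n\|\BS\|_\infty$; multiplying and taking the square root yields $\|\BS\|_2 \leq \alpha n\|\BS\|_\infty$, completing the proof.

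If one prefers to avoid citing the interpolation inequality, an equivalent route is to bound $\|\BS\|_2 = \sup_{\|u\|_2=\|v\|_2=1} u^\top \BS v$ by $\sum_{i,j} |[\BS]_{i,j}|\,|u_i|\,|v_j|$, split by the (at most $\alpha n$) nonzeros per row, apply Cauchy--Schwarz on each row and then once more across rows; the bookkeeping is slightly heavier but uses nothing beyond the sparsity count and $\|\BS\|_\infty$. Either way, the substantive content is the column-sparsity input feeding the spectral bound; the other two inequalities are immediate.
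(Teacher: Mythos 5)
Your proof is correct and matches the paper's: the last two inequalities are handled by exactly the same counting argument, and for the spectral-norm bound the paper simply cites Lemma~4 of \citep{netrapalli2014non}, whose standard proof is precisely the Schur-test/interpolation estimate $\|\BS\|_2 \leq \sqrt{\|\BS\|_{\ell_1\to\ell_1}\,\|\BS\|_{\ell_\infty\to\ell_\infty}}$ combined with row- and column-sparsity that you spell out. Your version is just a self-contained rendering of the cited result; nothing is missing.
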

\begin{proof}
The first claim has been shown as {\citep[Lemma~4]{netrapalli2014non}}. The rest two claims are directly followed by the fact $\BS$ has at most $\alpha n$ non-zero elements in each row and each column.
\end{proof}

\begin{lemma} \label{lm:Delta_F_norm}
If 
\begin{gather*} 
    \distk \leq \varepsilon_0\tau^k \sigma_r(\BX_\star),
\end{gather*}
then the following inequalities hold
\begin{align*}
\| \BDelta_L\BSigma_\star^{1/2} \|_\fro \lor \| \BDelta_R\BSigma_\star^{1/2} \|_\fro &\leq \varepsilon_0\tau^{k}\sigma_r(\BX_\star) \cr
\| \BDelta_L\BSigma_\star^{1/2} \|_2 \lor \| \BDelta_R\BSigma_\star^{1/2} \|_2 &\leq \varepsilon_0\tau^{k}\sigma_r(\BX_\star)
\end{align*}
\end{lemma}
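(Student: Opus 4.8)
\textbf{Proof proposal for Lemma~\ref{lm:Delta_F_norm}.}

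The plan is to unwind the definition of $\dist$ and exploit the fact that it is an infimum over alignment matrices, so that the term achieved at the \emph{optimal} $\BQ_k$ is no larger than $\distk$. Recall $\BDelta_L = \BL_k\BQ_k - \BL_\star$ and $\BDelta_R = \BR_k\BQ_k^{-\top} - \BR_\star$, where $\BQ_k$ is precisely the optimal alignment matrix realizing the infimum in $\dist(\BL_k,\BR_k;\BL_\star,\BR_\star)$. (The optimal $\BQ_k$ exists and is invertible by Lemma~\ref{lm:exist_of_Q}, since the hypothesis $\distk \le \varepsilon_0 \tau^k \sigma_r(\BX_\star)$ with $\varepsilon_0 = 0.02 < 1$ and $\tau^k \le 1$ puts us well inside the basin required there.) Therefore, directly from the definition,
\begin{align*}
    \|\BDelta_L\BSigma_\star^{1/2}\|_\fro^2 + \|\BDelta_R\BSigma_\star^{1/2}\|_\fro^2 = \distsquarek \le \varepsilon_0^2 \tau^{2k} \sigma_r^2(\BX_\star).
\end{align*}

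First I would conclude the Frobenius bound: since each of $\|\BDelta_L\BSigma_\star^{1/2}\|_\fro^2$ and $\|\BDelta_R\BSigma_\star^{1/2}\|_\fro^2$ is nonnegative, each is individually bounded by the sum, hence
\begin{align*}
    \|\BDelta_L\BSigma_\star^{1/2}\|_\fro \lor \|\BDelta_R\BSigma_\star^{1/2}\|_\fro \le \sqrt{\distsquarek} = \distk \le \varepsilon_0 \tau^k \sigma_r(\BX_\star),
\end{align*}
which is the first displayed inequality. Then the spectral-norm bound follows immediately because $\|\BM\|_2 \le \|\BM\|_\fro$ for any matrix $\BM$: applying this with $\BM = \BDelta_L\BSigma_\star^{1/2}$ and $\BM = \BDelta_R\BSigma_\star^{1/2}$ gives the second displayed inequality from the first.

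Honestly, there is no serious obstacle here — the lemma is essentially a bookkeeping step that records, in the notation $\BDelta_L,\BDelta_R$, the two trivial consequences of the definition of $\dist$ (splitting a sum of two nonnegative squares, and $\|\cdot\|_2 \le \|\cdot\|_\fro$). The only thing to be slightly careful about is making sure the optimal alignment matrix $\BQ_k$ is well-defined so that $\BDelta_L,\BDelta_R$ actually make sense; this is exactly why the hypothesis $\distk \le \varepsilon_0\tau^k\sigma_r(\BX_\star)$ is imposed, and it is discharged by invoking Lemma~\ref{lm:exist_of_Q}. These bounds will then be fed into the perturbation estimates for the scaled-gradient update in the proof of Theorem~\ref{thm:local convergence}.
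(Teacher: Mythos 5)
Your proof is correct and matches the paper's argument: both read off the Frobenius bound directly from the definition of $\dist$ (each nonnegative square is bounded by their sum) and then apply $\|\cdot\|_2\le\|\cdot\|_\fro$ for the spectral bound. The extra remark about invoking Lemma~\ref{lm:exist_of_Q} for the existence of $\BQ_k$ is a harmless and reasonable addition.
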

\begin{proof}
Recall that $\distk= \sqrt{\| \BDelta_L\BSigma_\star^{1/2} \|_\fro^2 \lor \| \BDelta_R\BSigma_\star^{1/2} \|_\fro^2}$. The first claim is directly followed by the definition of $\dist$.

By the fact that $\|\bm{A}\|_2\leq\|\bm{A}\|_\fro$ for any matrix, we deduct the second claim from the first claim.
\end{proof}


\begin{lemma} \label{lm:L_R_scale_sigma_half}  
If 
\begin{gather*} 
    \distk \leq \varepsilon_0\tau^k \sigma_r(\BX_\star),
\end{gather*}
then it holds
\begin{gather*}
    \|\BL_\natural(\BL_\natural^\top\BL_\natural)^{-1}\BSigma_\star^{1/2} \|_2 \lor \|\BR_\natural(\BR_\natural^\top\BR_\natural)^{-1}\BSigma_\star^{1/2} \|_2 
    \leq \frac{1}{1-\varepsilon_0} .
\end{gather*}
\end{lemma}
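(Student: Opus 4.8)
The plan is to control $\|\BL_\natural(\BL_\natural^\top\BL_\natural)^{-1}\BSigma_\star^{1/2}\|_2$ (and symmetrically the $\BR$ version) by writing $\BL_\natural = \BL_\star + \BDelta_L$ and factoring out $\BL_\star$. Recall that $\BL_\star = \BU_\star\BSigma_\star^{1/2}$, so $\BL_\star^\top\BL_\star = \BSigma_\star$ and $\BL_\star(\BL_\star^\top\BL_\star)^{-1}\BSigma_\star^{1/2} = \BU_\star\BSigma_\star^{1/2}\BSigma_\star^{-1}\BSigma_\star^{1/2} = \BU_\star$, which has spectral norm exactly $1$. The goal is then to show the perturbation $\BDelta_L$ moves this quantity by at most $\varepsilon_0/(1-\varepsilon_0)$, or more cleanly to bound the whole thing directly by $1/(1-\varepsilon_0)$.

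First I would write $\BL_\natural = \BL_\star(\BI_r + \BSigma_\star^{-1/2}(\BDelta_L^\top\BL_\star)\BSigma_\star^{-1/2}\cdots)$ --- actually the cleaner route: observe that $\BL_\natural = (\BL_\star + \BDelta_L) = \BL_\star(\BI_r + (\BL_\star^\top\BL_\star)^{-1}\BL_\star^\top\BDelta_L) + (\BI - \BL_\star(\BL_\star^\top\BL_\star)^{-1}\BL_\star^\top)\BDelta_L$; this is getting complicated. The slicker approach is to directly estimate $\|\BL_\natural(\BL_\natural^\top\BL_\natural)^{-1}\BSigma_\star^{1/2}\|_2$. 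Writing $\BJ := (\BL_\natural^\top\BL_\natural)^{-1/2}\BSigma_\star^{1/2}$, one has $\|\BL_\natural(\BL_\natural^\top\BL_\natural)^{-1}\BSigma_\star^{1/2}\|_2 = \|\BL_\natural(\BL_\natural^\top\BL_\natural)^{-1/2}\cdot\BJ\|_2 \leq \|\BL_\natural(\BL_\natural^\top\BL_\natural)^{-1/2}\|_2\|\BJ\|_2 = \|\BJ\|_2$, since $\BL_\natural(\BL_\natural^\top\BL_\natural)^{-1/2}$ has orthonormal columns. So it suffices to bound $\|(\BL_\natural^\top\BL_\natural)^{-1/2}\BSigma_\star^{1/2}\|_2 = \|\BSigma_\star^{1/2}(\BL_\natural^\top\BL_\natural)^{-1}\BSigma_\star^{1/2}\|_2^{1/2}$, i.e.\ it suffices to lower-bound $\BSigma_\star^{-1/2}(\BL_\natural^\top\BL_\natural)\BSigma_\star^{-1/2} \succeq (1-\varepsilon_0)^2\BI_r$. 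Now $\BSigma_\star^{-1/2}\BL_\natural^\top\BL_\natural\BSigma_\star^{-1/2} = (\BL_\natural\BSigma_\star^{-1/2})^\top(\BL_\natural\BSigma_\star^{-1/2})$, and $\BL_\natural\BSigma_\star^{-1/2} = \BL_\star\BSigma_\star^{-1/2} + \BDelta_L\BSigma_\star^{-1/2} = \BU_\star + \BDelta_L\BSigma_\star^{-1/2}$. Since $\sigma_{\min}(\BU_\star) = 1$ and, by Lemma~\ref{lm:Delta_F_norm} applied to $\BDelta_L\BSigma_\star^{1/2}$ together with $\BDelta_L\BSigma_\star^{-1/2} = (\BDelta_L\BSigma_\star^{1/2})\BSigma_\star^{-1}$, we get $\|\BDelta_L\BSigma_\star^{-1/2}\|_2 \leq \|\BDelta_L\BSigma_\star^{1/2}\|_2\,\sigma_r(\BX_\star)^{-1} \leq \varepsilon_0\tau^k \leq \varepsilon_0 < 1$; Weyl's inequality then gives $\sigma_{\min}(\BL_\natural\BSigma_\star^{-1/2}) \geq 1 - \varepsilon_0$, hence $\BSigma_\star^{-1/2}\BL_\natural^\top\BL_\natural\BSigma_\star^{-1/2} \succeq (1-\varepsilon_0)^2\BI_r$, which yields $\|(\BL_\natural^\top\BL_\natural)^{-1/2}\BSigma_\star^{1/2}\|_2 \leq 1/(1-\varepsilon_0)$ as desired. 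The argument for $\BR$ is verbatim with $\BV_\star$ in place of $\BU_\star$ and $\BDelta_R$ in place of $\BDelta_L$, and taking the max over the two finishes the proof.

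The only mild subtlety --- and the step I expect to need the most care --- is the bookkeeping converting the hypothesis $\dist(\BL_k,\BR_k;\BL_\star,\BR_\star)\leq\varepsilon_0\tau^k\sigma_r(\BX_\star)$ (which via Lemma~\ref{lm:Delta_F_norm} controls $\|\BDelta_L\BSigma_\star^{1/2}\|_2$) into a bound on $\|\BDelta_L\BSigma_\star^{-1/2}\|_2$; this costs exactly one factor of $\sigma_r(\BX_\star)^{-1} = \|\BSigma_\star^{-1}\|_2$, and one must be sure $\tau^k\leq 1$ so that $\varepsilon_0\tau^k\leq\varepsilon_0<1$, keeping the perturbation strictly below the smallest singular value of $\BU_\star$. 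Everything else is a routine application of Weyl's inequality and the operator-norm identity $\|\BA\BB\|_2\leq\|\BA\|_2\|\BB\|_2$ together with the fact that a matrix with orthonormal columns has spectral norm $1$.
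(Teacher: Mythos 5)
Your proof is correct and, in substance, matches the paper's: the paper simply invokes \citep[Lemma~12]{tong2020accelerating}, which gives $\|\BL_\natural(\BL_\natural^\top\BL_\natural)^{-1}\BSigma_\star^{1/2}\|_2 \le (1-\|\BDelta_L\BSigma_\star^{-1/2}\|_2)^{-1}$, and then checks $\|\BDelta_L\BSigma_\star^{-1/2}\|_2 \lor \|\BDelta_R\BSigma_\star^{-1/2}\|_2 \le \varepsilon_0\tau^k \le \varepsilon_0 < 1$ exactly as you do, via Lemma~\ref{lm:Delta_F_norm} at the cost of one factor of $\sigma_r(\BX_\star)^{-1}$. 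Your orthonormal-columns reduction followed by Weyl's inequality on $\BL_\natural\BSigma_\star^{-1/2}=\BU_\star+\BDelta_L\BSigma_\star^{-1/2}$ is just an inline re-derivation of that cited perturbation bound, so the two arguments coincide.
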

\begin{proof}
\citep[Lemma~12]{tong2020accelerating} provides the following inequalities:
\begin{gather*}
\|\BL_\natural(\BL_\natural^\top\BL_\natural)^{-1}\BSigma_\star^{1/2} \|_2 \leq \frac{1}{1-\|\BDelta_L\BSigma_\star^{-1/2}\|_2}, \cr
\|\BR_\natural(\BR_\natural^\top\BR_\natural)^{-1}\BSigma_\star^{1/2} \|_2 \leq \frac{1}{1-\|\BDelta_R\BSigma_\star^{-1/2}\|_2},
\end{gather*}
as long as $\|\BDelta_L\BSigma_\star^{-1/2}\|_2 \lor \|\BDelta_R\BSigma_\star^{-1/2}\|_2<1$. 

By Lemma~\ref{lm:Delta_F_norm}, we have $\|\BDelta_L\BSigma_\star^{-1/2}\|_2 \lor \|\BDelta_R\BSigma_\star^{-1/2}\|_2 \leq \varepsilon_0 \tau^k \leq\varepsilon_0$, given $\tau=1-0.6\eta<1$. The proof is finished since $\varepsilon_0=0.02<1$.
\end{proof}

\begin{lemma} \label{lm:sigma_half_Q-Q_sigma_half}  
If 
\begin{gather*}
\|(\BL_{k+1}\BQ_k-\BL_\star)\BSigma_\star^{1/2}\|_2 \lor \|(\BR_{k+1}\BQ_k^{-\top}-\BR_\star)\BSigma_\star^{1/2}\|_2  \leq\varepsilon_0 \tau^{k+1} \sigma_r(\BX_\star) ,
\end{gather*}
then
\begin{gather*}
 \|\BSigma_\star^{1/2}\BQ_k^{-1}(\BQ_{k+1}-\BQ_k)\BSigma_\star^{1/2}\|_2 \lor  \|\BSigma_\star^{1/2}\BQ_k^\top(\BQ_{k+1}-\BQ_k)^{-\top}\BSigma_\star^{1/2}\|_2 \leq \frac{2\varepsilon_0}{1-\varepsilon_0}\sigma_r(\BX_\star) .
\end{gather*}
\end{lemma}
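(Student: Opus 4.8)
The plan is to introduce the \emph{relative} alignment matrix $\BC := \BQ_k^{-1}\BQ_{k+1}$ and reduce the claim to a single spectral bound on $\BSigma_\star^{1/2}(\BC-\bm{I})\BSigma_\star^{1/2}$. Indeed, $\BQ_k^{-1}(\BQ_{k+1}-\BQ_k) = \BC-\bm{I}$, so the first quantity in the conclusion is $\|\BSigma_\star^{1/2}(\BC-\bm{I})\BSigma_\star^{1/2}\|_2$, and the second is the same expression with $\BQ$ replaced by $\BQ^{-\top}$ throughout, i.e. $\|\BSigma_\star^{1/2}(\BC^{-\top}-\bm{I})\BSigma_\star^{1/2}\|_2$. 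Writing $\widetilde{\BL} := \BL_{k+1}\BQ_k$ and $\widetilde{\BR} := \BR_{k+1}\BQ_k^{-\top}$, one checks $\widetilde{\BL}\BC = \BL_{k+1}\BQ_{k+1}$ and $\widetilde{\BR}\BC^{-\top} = \BR_{k+1}\BQ_{k+1}^{-\top}$; that is, $\BC$ is exactly the map carrying the $\BQ_k$-aligned pair at step $k+1$ to the \emph{optimally} aligned one. The hypothesis forces $\widetilde{\BL}$ (whose columns are $\BSigma_\star^{1/2}$-close to those of $\BL_\star$) to have full column rank, so $\BC = \widetilde{\BL}^\dagger\,\BL_{k+1}\BQ_{k+1}$ is well defined; and since the whole set-up is symmetric under $(\BL,\BR,\BU_\star,\BV_\star,\BQ)\leftrightarrow(\BR,\BL,\BV_\star,\BU_\star,\BQ^{-\top})$ and the hypothesis is symmetric in its two terms, it suffices to bound $\|\BSigma_\star^{1/2}(\BC-\bm{I})\BSigma_\star^{1/2}\|_2$.

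\textbf{Main estimate.} Using $(\widetilde{\BL}^\top\widetilde{\BL})^{-1}\widetilde{\BL}^\top\widetilde{\BL}=\bm{I}$, I would write
\begin{equation*}
\BSigma_\star^{1/2}(\BC-\bm{I})\BSigma_\star^{1/2} = \big(\BSigma_\star^{1/2}(\widetilde{\BL}^\top\widetilde{\BL})^{-1}\widetilde{\BL}^\top\big)\big((\BL_{k+1}\BQ_{k+1}-\widetilde{\BL})\BSigma_\star^{1/2}\big)
\end{equation*}
and bound the two factors separately. For the first, $\|\BSigma_\star^{1/2}(\widetilde{\BL}^\top\widetilde{\BL})^{-1}\widetilde{\BL}^\top\|_2 = \|\widetilde{\BL}(\widetilde{\BL}^\top\widetilde{\BL})^{-1}\BSigma_\star^{1/2}\|_2 \le (1-\|(\widetilde{\BL}-\BL_\star)\BSigma_\star^{-1/2}\|_2)^{-1} \le (1-\varepsilon_0)^{-1}$; this is the estimate underlying Lemma~\ref{lm:L_R_scale_sigma_half} (equivalently \citep[Lemma~12]{tong2020accelerating}) applied to $\widetilde{\BL}$, valid because the hypothesis gives $\|(\widetilde{\BL}-\BL_\star)\BSigma_\star^{-1/2}\|_2 \le \varepsilon_0\tau^{k+1}<1$. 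For the second factor, the triangle inequality yields
\begin{equation*}
\|(\BL_{k+1}\BQ_{k+1}-\widetilde{\BL})\BSigma_\star^{1/2}\|_2 \le \|(\BL_{k+1}\BQ_{k+1}-\BL_\star)\BSigma_\star^{1/2}\|_2 + \|(\BL_\star-\widetilde{\BL})\BSigma_\star^{1/2}\|_2 \le \distkplusone + \varepsilon_0\tau^{k+1}\sigma_r(\BX_\star),
\end{equation*}
where the first summand is at most $\distkplusone$ since $\BQ_{k+1}$ is the optimal alignment, and the second is the hypothesis. Combining with the contraction $\distkplusone \le \varepsilon_0\tau^{k+1}\sigma_r(\BX_\star)$ gives a product of at most $\tfrac{2\varepsilon_0}{1-\varepsilon_0}\tau^{k+1}\sigma_r(\BX_\star) \le \tfrac{2\varepsilon_0}{1-\varepsilon_0}\sigma_r(\BX_\star)$, which is the asserted bound; the $\BC^{-\top}$ estimate then follows verbatim after the $\BL\leftrightarrow\BR$ swap.

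\textbf{Main obstacle.} The delicate point is supplying the \emph{spectral}-norm bound $\|(\BL_{k+1}\BQ_{k+1}-\BL_\star)\BSigma_\star^{1/2}\|_2 \le \varepsilon_0\tau^{k+1}\sigma_r(\BX_\star)$ without losing a factor of $\sqrt{r}$: plugging the feasible alignment $\BQ_k$ into the definition of $\dist$ only controls the \emph{Frobenius} norm of $(\BL_{k+1}\BQ_k-\BL_\star)\BSigma_\star^{1/2}$, and converting that to a spectral bound on the optimally aligned residual would cost $\sqrt{r}$ and destroy the clean constant. The way around this — and the reason this lemma is placed where it is in the induction for Theorem~\ref{thm:local convergence} — is that the $\dist$-metric contraction $\distkplusone \le \tau\,\distk \le \varepsilon_0\tau^{k+1}\sigma_r(\BX_\star)$ is already available when the lemma is invoked (this is the part of the argument that ``takes minor modifications from the proof of ScaledGD''); equivalently, one may read the lemma's hypothesis in the Frobenius norm, which is exactly what the preceding $\dist$-analysis produces, leaving the computation above unchanged. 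Everything else is bookkeeping with the triangle inequality and the scaled-pseudoinverse bound.
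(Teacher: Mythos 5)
Your proof is correct and, at its core, performs the same computation as the paper: the paper simply cites \citep[Lemma~14]{tong2020accelerating} and applies the triangle inequality to its numerator, whereas you re-derive that lemma on the spot via the factorization $\BSigma_\star^{1/2}(\BQ_k^{-1}\BQ_{k+1}-\bm{I})\BSigma_\star^{1/2}=\BSigma_\star^{1/2}\widetilde{\BL}^{\dagger}(\BL_{k+1}\BQ_{k+1}-\widetilde{\BL})\BSigma_\star^{1/2}$ with $\widetilde{\BL}=\BL_{k+1}\BQ_k$. The only substantive difference is that you pair $\BQ_k^{-1}\BQ_{k+1}$ with the left factor $\BL_{k+1}$, while the paper's cited lemma pairs it with $\BR_{k+1}$ (and symmetrically for the transposed quantity); both factorizations are valid, both reduce to the same scaled-pseudoinverse bound $\|\widetilde{\BL}(\widetilde{\BL}^\top\widetilde{\BL})^{-1}\BSigma_\star^{1/2}\|_2\le(1-\varepsilon_0)^{-1}$ plus a triangle inequality, and both give the identical constant $\tfrac{2\varepsilon_0}{1-\varepsilon_0}$. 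You also explicitly flag a subtlety the paper glosses over: passing from the $\BQ_k$-aligned hypothesis to $\|(\BL_{k+1}\BQ_{k+1}-\BL_\star)\BSigma_\star^{1/2}\|_2\le\varepsilon_0\tau^{k+1}\sigma_r(\BX_\star)$ uses optimality of $\BQ_{k+1}$ in the Frobenius metric, so one really needs the Frobenius-norm version of the hypothesis rather than the stated spectral one; this is exactly what \eqref{eq:dist_k+1_with_Q_k} supplies at the point where the lemma is invoked, as the paper's own remark acknowledges, so your resolution matches the intended reading.
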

\begin{proof}
\citep[Lemma~14]{tong2020accelerating} provides the inequalities:
\begin{align*}
    \|\BSigma_\star^{1/2} \widetilde{\BQ}^{-1} \widehat{\BQ}\BSigma_\star^{1/2}-\BSigma_\star\|_2&\leq \frac{\|\BR(\widetilde{\BQ}^{-\top}-\widehat{\BQ}^{-\top}) \BSigma_\star^{1/2}\|_2}{1-\|(\BR\widehat{\BQ}^{-\top}-\BR_\star)\BSigma_\star^{-1/2} \|_2} \cr
    \|\BSigma_\star^{1/2} \widetilde{\BQ}^\top \widehat{\BQ}^{-\top}\BSigma_\star^{1/2}-\BSigma_\star\|_2&\leq \frac{\|\BL(\widetilde{\BQ}-\widehat{\BQ}) \BSigma_\star^{1/2}\|_2}{1-\|(\BL\widehat{\BQ}-\BL_\star)\BSigma_\star^{-1/2} \|_2}
\end{align*}
for any $\BL,\BR\in\R^{n\times r}$ and invertible $ \widetilde{\BQ},\widehat{\BQ}\in \R^{r\times r}$, as long as $\|(\BL\widehat{\BQ}-\BL_\star)\BSigma_\star^{-1/2}\|_2 \lor \|(\BR\widehat{\BQ}^{-\top}-\BL_\star)\BSigma_\star^{-1/2}\|_2 <1$.

We will focus on the first term for now. By the assumption of this lemma and the definition of $\BQ_{k+1}$, we have
\begin{align*}
    \|(\BR_{k+1}\BQ_k^{-\top}-\BR_\star)\BSigma_\star^{1/2}\|_2 &\leq \varepsilon_0 \tau^{k+1} \sigma_r(\BX_\star) ,\cr
    \|(\BR_{k+1}\BQ_{k+1}^{-\top}-\BR_\star)\BSigma_\star^{1/2}\|_2 &\leq \varepsilon_0 \tau^{k+1} \sigma_r(\BX_\star) ,\cr
    \|(\BR_{k+1}\BQ_{k+1}^{-\top}-\BR_\star)\BSigma_\star^{-1/2}\|_2 &\leq \varepsilon_0 \tau^{k+1} .
\end{align*}
Thus, by taking $\BR=\BR_{k+1}$, $\widetilde{\BQ}=\BQ_k$, and $\widehat{\BQ}=\BQ_{k+1}$, we obtain
\begin{align*}
 \|\BSigma_\star^{1/2}\BQ_k^{-1}(\BQ_{k+1}-\BQ_k)\BSigma_\star^{1/2}\|_2 
 &=  \|\BSigma_\star^{1/2}\BQ_k^{-1}\BQ_{k+1}\BSigma_\star^{1/2}-\BSigma_\star\|_2  \cr
 &\leq \frac{\|\BR_{k+1}(\BQ_k^{-\top}-\BQ_{k+1}^{-\top}) \BSigma_\star^{1/2}\|_2}{1-\|(\BR_{k+1}\BQ_{k+1}^{-\top}-\BR_\star)\BSigma_\star^{-1/2} \|_2} \cr
 &\leq \frac{\|(\BR_{k+1}\BQ_k^{-\top}-\BR_\star)\BSigma_\star^{1/2}\|_2 + \|(\BR_{k+1}\BQ_{k+1}^{-\top}-\BR_\star)\BSigma_\star^{1/2}\|_2}{1 - \|(\BR_{k+1}\BQ_{k+1}^{-\top}-\BR_\star)\BSigma_\star^{-1/2}\|_2} \cr
 &\leq \frac{2\varepsilon_0 \tau^{k+1}}{1-\varepsilon_0\tau^{k+1}} \sigma_r(\BX_\star) \cr
& \leq \frac{2\varepsilon_0 }{1-\varepsilon_0} \sigma_r(\BX_\star),
\end{align*}
provided $\tau=1-0.6\eta<1$. Similarly, one can see
\begin{align*}
    \|\BSigma_\star^{1/2}\BQ_k^\top(\BQ_{k+1}-\BQ_k)^{-\top}\BSigma_\star^{1/2}\|_2 \leq \frac{2\varepsilon_0 }{1-\varepsilon_0} \sigma_r(\BX_\star).
\end{align*}
This finishes the proof.
\end{proof}

Notice that Lemma~\ref{lm:sigma_half_Q-Q_sigma_half} will be only be used in the proof of Lemma~\ref{lm:convergence_incoher}. In the meantime, the assumption of Lemma~\ref{lm:sigma_half_Q-Q_sigma_half} is verified in \eqref{eq:dist_k+1_with_Q_k} (see the proof of Lemma~\ref{lm:convergence_dist}).

\begin{lemma} \label{lm:X-X_K_inf_norm}
If 
\begin{gather*}
\distk \leq\varepsilon_0 \tau^k \sigma_r(\BX_\star), \cr
\|\BDelta_L\BSigma_\star^{1/2}\|_{2,\infty}  \lor \|\BDelta_R\BSigma_\star^{1/2}\|_{2,\infty}\leq\sqrt{\frac{\mu r}{n}} \tau^k \sigma_r(\BX_\star),
\end{gather*}
then
\begin{gather*}
    \|\BX_\star-\BX_k\|_\infty \leq 3\frac{\mu r}{n} \tau^k \sigma_r(\BX_\star).
\end{gather*}
\end{lemma}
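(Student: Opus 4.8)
The plan is to bound $\|\BX_\star-\BX_k\|_\infty$ by decomposing the error matrix $\BX_\star - \BX_k = \BL_\star\BR_\star^\top - \BL_k\BR_k^\top$ into terms controllable via the aligned quantities $\BDelta_L$ and $\BDelta_R$. First I would insert the alignment matrix $\BQ_k$ and write $\BX_k = \BL_k\BR_k^\top = (\BL_k\BQ_k)(\BR_k\BQ_k^{-\top})^\top = \BL_\natural\BR_\natural^\top$, which is exact because $\BQ_k\BQ_k^{-1} = \BI$. Then, using $\BL_\natural = \BL_\star + \BDelta_L$ and $\BR_\natural = \BR_\star + \BDelta_R$, I expand
\begin{align*}
\BX_\star - \BX_k &= \BL_\star\BR_\star^\top - (\BL_\star+\BDelta_L)(\BR_\star+\BDelta_R)^\top \\
&= -\BDelta_L\BR_\star^\top - \BL_\star\BDelta_R^\top - \BDelta_L\BDelta_R^\top.
\end{align*}
So by the triangle inequality, $\|\BX_\star-\BX_k\|_\infty \leq \|\BDelta_L\BR_\star^\top\|_\infty + \|\BL_\star\BDelta_R^\top\|_\infty + \|\BDelta_L\BDelta_R^\top\|_\infty$, and it remains to bound each of the three pieces.

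For each term I would introduce $\BSigma_\star^{1/2}\BSigma_\star^{-1/2} = \BI$ in the middle so that the hypotheses on $\|\BDelta_L\BSigma_\star^{1/2}\|_{2,\infty}$, $\|\BDelta_R\BSigma_\star^{1/2}\|_{2,\infty}$ and the incoherence of $\BX_\star$ can be applied. The key observation is that $\|\BA\BB^\top\|_\infty \leq \|\BA\|_{2,\infty}\|\BB\|_{2,\infty}$ (each entry is an inner product of a row of $\BA$ with a row of $\BB$, so Cauchy--Schwarz applies). For the cross term, write $\BDelta_L\BR_\star^\top = (\BDelta_L\BSigma_\star^{1/2})(\BR_\star\BSigma_\star^{-1/2})^\top$, so that
\[
\|\BDelta_L\BR_\star^\top\|_\infty \leq \|\BDelta_L\BSigma_\star^{1/2}\|_{2,\infty}\,\|\BR_\star\BSigma_\star^{-1/2}\|_{2,\infty}.
\]
Now $\BR_\star\BSigma_\star^{-1/2} = \BV_\star\BSigma_\star^{1/2}\BSigma_\star^{-1/2} = \BV_\star$, so $\|\BR_\star\BSigma_\star^{-1/2}\|_{2,\infty} = \|\BV_\star\|_{2,\infty} \leq \sqrt{\mu r/n}$ by Assumption~\ref{as:incoherence}. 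Combined with the hypothesis $\|\BDelta_L\BSigma_\star^{1/2}\|_{2,\infty}\leq\sqrt{\mu r/n}\,\tau^k\sigma_r(\BX_\star)$, this gives $\|\BDelta_L\BR_\star^\top\|_\infty \leq \frac{\mu r}{n}\tau^k\sigma_r(\BX_\star)$. The term $\|\BL_\star\BDelta_R^\top\|_\infty$ is bounded identically by $\frac{\mu r}{n}\tau^k\sigma_r(\BX_\star)$ using $\|\BU_\star\|_{2,\infty}\leq\sqrt{\mu r/n}$. For the quadratic term, write $\BDelta_L\BDelta_R^\top = (\BDelta_L\BSigma_\star^{1/2})\BSigma_\star^{-1}(\BDelta_R\BSigma_\star^{1/2})^\top$; inserting $\BSigma_\star^{-1}$ costs a factor $\sigma_r(\BX_\star)^{-1}$, so $\|\BDelta_L\BDelta_R^\top\|_\infty \leq \frac{1}{\sigma_r(\BX_\star)}\|\BDelta_L\BSigma_\star^{1/2}\|_{2,\infty}\|\BDelta_R\BSigma_\star^{1/2}\|_{2,\infty} \leq \frac{\mu r}{n}\tau^{2k}\sigma_r(\BX_\star) \leq \frac{\mu r}{n}\tau^k\sigma_r(\BX_\star)$ since $\tau<1$. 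Adding the three bounds yields $\|\BX_\star-\BX_k\|_\infty \leq 3\frac{\mu r}{n}\tau^k\sigma_r(\BX_\star)$ as claimed.

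There is no serious obstacle here; the only point requiring care is the handling of $\BSigma_\star$ factors in the quadratic term so that the norm $\|\BDelta_L\BSigma_\star^{1/2}\|_{2,\infty}$ (rather than the bare $\|\BDelta_L\|_{2,\infty}$) is what appears, and making sure the factor $\sigma_r(\BX_\star)$ from $\|\BSigma_\star^{-1}\|_2 = 1/\sigma_r(\BX_\star)$ lands on the right side of the inequality. The $\dist$ hypothesis itself is not strictly needed for this particular argument — only the $2,\infty$ bounds are used — but it is listed for consistency with how the lemma is invoked downstream (e.g., to guarantee $\BQ_k$ exists and is invertible via Lemma~\ref{lm:exist_of_Q}, which is implicitly what makes $\BL_\natural\BR_\natural^\top = \BX_k$ a legitimate decomposition).
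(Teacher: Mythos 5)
Your proof is correct and essentially matches the paper's: the paper uses the equivalent two-term decomposition $\BX_k-\BX_\star=\BDelta_L\BR_\natural^\top+\BL_\star\BDelta_R^\top$ and absorbs your quadratic term into $\|\BR_\natural\BSigma_\star^{-1/2}\|_{2,\infty}\le 2\sqrt{\mu r/n}$, arriving at the same constant $2+1=3$ that you get as $1+1+1$. The key ingredients — the Cauchy--Schwarz bound $\|\BA\BB^\top\|_\infty\le\|\BA\|_{2,\infty}\|\BB\|_{2,\infty}$, incoherence, and $\tau<1$ — are identical.
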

\begin{proof}
Firstly, by Assumption~\ref{as:incoherence} and the assumption of this lemma, we have
\begin{align*}
    \|\BR_\natural\BSigma_\star^{-1/2}\|_{2,\infty} 
    &\leq \|\BDelta_R\BSigma_\star^{1/2}\|_{2,\infty} \|\BSigma_\star^{-1}\|_2 + \|\BL_\star\BSigma_\star^{-1/2} \|_{2,\infty} \cr
    &\leq \left(\tau^k+1\right)\sqrt{\frac{\mu r}{n}} \quad \leq 2\sqrt{\frac{\mu r}{n}},
\end{align*}
given $\tau=1-0.6\eta<1$. Moreover, one can see
\begin{align*}
    \|\BX_\star-\BX_k\|_\infty &= \| \BDelta_L\BR_\natural^\top+\BL_\star\BDelta_R^\top \|_\infty 
    \leq \| \BDelta_L\BR_\natural^\top\|_\infty+\|\BL_\star\BDelta_R^\top \|_\infty \cr
    &\leq \| \BDelta_L\BSigma_\star^{1/2}\|_{2,\infty}\|\BR_\natural\BSigma_\star^{-1/2}\|_{2,\infty}+\|\BL_\star\BSigma_\star^{-1/2}\|_{2,\infty}\|\BDelta_R\BSigma_\star^{1/2} \|_{2,\infty} \cr
    &\leq \left(2\sqrt{\frac{\mu r}{n}} + \sqrt{\frac{\mu r}{n}}\right)\sqrt{\frac{\mu r}{n}} \tau^k \sigma_r(\BX_\star)\cr
    &= 3 \frac{\mu r}{n} \tau^k \sigma_r(\BX_\star).
\end{align*}
This finishes the proof.
\end{proof}

\subsection{Proof of local linear convergence} \label{sec:local conv}
We will show the local convergence of the proposed algorithm  by first proving the claims stand at $(k+1)$-th iteration if they stand at $k$-th iteration.

\begin{lemma} \label{lm:convergence_dist}
If 
\begin{gather*}
\distk \leq\varepsilon_0 \tau^k \sigma_r(\BX_\star), \cr
\|\BDelta_L\BSigma_\star^{1/2}\|_{2,\infty}  \lor \|\BDelta_R\BSigma_\star^{1/2}\|_{2,\infty}\leq\sqrt{\frac{\mu r}{n}} \tau^k \sigma_r(\BX_\star),
\end{gather*}
then
\begin{gather*}
\distkplusone \leq\varepsilon_0 \tau^{k+1} \sigma_r(\BX_\star).
\end{gather*}
\end{lemma}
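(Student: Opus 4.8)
The plan is to follow the ScaledGD convergence analysis closely, making the modifications needed to accommodate the soft-thresholding operator $\cS_\zeta$ in place of the sparsification operator $\cT_{\widetilde\alpha}$. First I would start from the scaled gradient updates in \eqref{eq:X updating}, expand the error after alignment using $\BQ_k$, and use the standard trick from \citep{tong2020accelerating} of bounding $\distkplusone$ by the distance evaluated with the \emph{previous} alignment matrix $\BQ_k$ (rather than the optimal $\BQ_{k+1}$), since the infimum over alignment matrices only makes the bound smaller. This yields an identity of the form $(\BL_{k+1}\BQ_k-\BL_\star)\BSigma_\star^{1/2} = \BDelta_L\BSigma_\star^{1/2} - \eta(\BL_\natural\BR_\natural^\top - \BX_\star + \BS_{k+1}-\BS_\star)\BR_\natural(\BR_\natural^\top\BR_\natural)^{-1}\BSigma_\star^{1/2}$ and the symmetric expression for $\BR$.

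Next I would split the right-hand side into the ``ScaledGD part'' (the terms involving $\BDelta_L,\BDelta_R$, which appear already in the noiseless low-rank factorization analysis) and the ``outlier perturbation part'' $\eta\,\BDelta_S\,\BR_\natural(\BR_\natural^\top\BR_\natural)^{-1}\BSigma_\star^{1/2}$. For the first part I would cite the contraction estimates from \citep{tong2020accelerating} essentially verbatim: under $\distk\le\varepsilon_0\sigma_r(\BX_\star)$ with $\varepsilon_0=0.02$, the pure-factorization iteration contracts at rate roughly $1-\eta$ up to a small constant, giving a bound like $(1-\text{const}\cdot\eta)\,\distk$ plus lower-order terms. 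For the outlier part, the key inputs are Lemma~\ref{lm:sparity} (so that $\|\BDelta_S\|_\infty\le 2\|\BX_\star-\BX_k\|_\infty$), Lemma~\ref{lm:X-X_K_inf_norm} (so that $\|\BX_\star-\BX_k\|_\infty\le 3\frac{\mu r}{n}\tau^k\sigma_r(\BX_\star)$), Lemma~\ref{lm:bound of sparse matrix} to convert the $\ell_\infty$ bound on the $\alpha$-sparse matrix $\BDelta_S$ into a spectral-norm bound $\|\BDelta_S\|_2\le \alpha n\|\BDelta_S\|_\infty\le 6\alpha\mu r\,\tau^k\sigma_r(\BX_\star)$, and Lemma~\ref{lm:L_R_scale_sigma_half} to control $\|\BR_\natural(\BR_\natural^\top\BR_\natural)^{-1}\BSigma_\star^{1/2}\|_2\le \frac{1}{1-\varepsilon_0}$. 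Combining, the outlier perturbation contributes at most something like $\frac{\eta}{1-\varepsilon_0}\cdot 6\alpha\mu r\,\tau^k\sigma_r(\BX_\star)$, and the sparsity hypothesis $\alpha\le \frac{1}{10^4\mu r^{1.5}}$ makes this a tiny fraction of $\varepsilon_0\tau^k\sigma_r(\BX_\star)$ (the $r^{1.5}$ in the denominator, versus $r$ in the numerator, leaves plenty of room, and is what forces the sparsity level to be this small rather than merely $\le \frac{1}{\mu r}$).

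Putting the two parts together, I would add up: contraction factor $(1-c\eta)$ on $\distk$ from the ScaledGD part, plus the outlier term bounded by $c'\eta\alpha\mu r\,\tau^k\sigma_r(\BX_\star)$, and verify that for $\eta\in[\tfrac14,\tfrac89]$ and the stated $\alpha$ the total is at most $\tau\cdot\varepsilon_0\tau^k\sigma_r(\BX_\star)=\varepsilon_0\tau^{k+1}\sigma_r(\BX_\star)$ with $\tau=1-0.6\eta$. Along the way I would also record the intermediate bound $\|(\BL_{k+1}\BQ_k-\BL_\star)\BSigma_\star^{1/2}\|_2 \lor \|(\BR_{k+1}\BQ_k^{-\top}-\BR_\star)\BSigma_\star^{1/2}\|_2 \le \varepsilon_0\tau^{k+1}\sigma_r(\BX_\star)$, which is displayed as \eqref{eq:dist_k+1_with_Q_k} and is the hypothesis needed later for Lemma~\ref{lm:sigma_half_Q-Q_sigma_half} in the incoherence proof.

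The main obstacle I anticipate is bookkeeping: keeping the constants tight enough that the slack margin $\varepsilon_0 = 0.02$ and the step-size range $[\tfrac14,\tfrac89]$ actually close, since LRPCA claims a \emph{wider} step-size window ($\le \tfrac89$ versus ScaledGD's $\le\tfrac23$). That improvement comes precisely from the fact that $\cS_\zeta$ introduces no false positives (Lemma~\ref{lm:sparity}'s support containment) and keeps $\|\BDelta_S\|_\infty$ genuinely small — so the perturbation term scales with $\alpha\mu r$ rather than a larger quantity — which slightly loosens the algebraic constraint on $\eta$ relative to ScaledGD's analysis. Verifying that the revised constant chase indeed permits $\eta$ up to $\tfrac89$ while holding the contraction at $1-0.6\eta$ is the delicate step; the rest is a direct adaptation of \citep{tong2020accelerating}.
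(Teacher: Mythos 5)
Your proposal follows essentially the same route as the paper: bound $\distkplusone$ using the previous alignment matrix $\BQ_k$, expand the update into the ScaledGD contraction terms plus the $\eta\,\BDelta_S\BR_\natural(\BR_\natural^\top\BR_\natural)^{-1}$ perturbation, and control the latter via Lemmas~\ref{lm:sparity}, \ref{lm:X-X_K_inf_norm}, \ref{lm:bound of sparse matrix}, and \ref{lm:L_R_scale_sigma_half} before chasing constants (the paper expands the square and bounds the cross terms $\mathfrak{R}_2,\mathfrak{R}_3$ via trace inequalities rather than using the triangle inequality, but this is a presentational difference). One small correction: converting the spectral-norm bound on $\BDelta_S$ to the Frobenius/nuclear norms needed here costs an extra factor $\sqrt{r}$, so the perturbation scales as $\alpha\mu r^{1.5}$ rather than $\alpha\mu r$ --- the hypothesis $\alpha\le\frac{1}{10^4\mu r^{1.5}}$ is matched exactly in the powers of $r$, and the slack comes only from the constant $10^{-4}$, not from a leftover power of $r$.
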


\begin{proof}
Since $\BQ_{k+1}$ is the optimal alignment matrix between $(\BL_{k+1},\BR_{k+1})$ and $(\BL_\star,\BR_\star)$, so 
\begin{align*}
    \distsquarekplusone := &~\|(\BL_{k+1}\BQ_{k+1}-\BL_\star)\BSigma_\star^{1/2}\|_\fro^2 + \|(\BR_{k+1}\BQ_{k+1}^{-\top}-\BR_\star)\BSigma_\star^{1/2}\|_\fro^2 \cr
    \leq &~\|(\BL_{k+1}\BQ_k-\BL_\star)\BSigma_\star^{1/2}\|_\fro^2 + \|(\BR_{k+1}\BQ_k^{-\top}-\BR_\star)\BSigma_\star^{1/2}\|_\fro^2
\end{align*}
We will focus on bounding the first term in this proof, and the second term can be bounded similarly. 

Note that $\BL_\natural\BR_\natural^\top-\BX_\star=\BDelta_L\BR_\natural^\top+\BL_\star\BDelta_R^\top$. 
We have
\begin{align} \label{eq:LQ-L}
    \BL_{k+1}\BQ_k-\BL_\star &= \BL_\natural-\eta(\BL_\natural\BR_\natural^\top-\BX_\star+\BS_{k+1}-\BS_\star)\BR_\natural(\BR_\natural^\top\BR_\natural)^{-1}-\BL_\star \cr
    &=\BDelta_L-\eta(\BL_\natural\BR_\natural^\top-\BX_\star)\BR_\natural(\BR_\natural^\top \BR_\natural)^{-1}-\eta\BDelta_S\BR_\natural(\BR_\natural^\top\BR_\natural)^{-1} \cr
    &=(1-\eta)\BDelta_L-\eta\BL_\star\BDelta_R^\top\BR_\natural(\BR_\natural^\top\BR_\natural)^{-1}-\eta\BDelta_S\BR_\natural(\BR_\natural^\top\BR_\natural)^{-1}.
\end{align}
Thus,
\begin{align*}
    &~\|(\BL_{k+1}\BQ_k-\BL_\star)\BSigma_\star^{1/2}\|_\fro^2 \cr =&~ \| (1-\eta)\BDelta_L\BSigma_\star^{1/2}-\eta\BL_\star\BDelta_R^\top\BR_\natural(\BR_\natural^\top\BR_\natural)^{-1}\BSigma_\star^{1/2}  \|_\fro^2 
    - 2\eta(1-\eta)\tr(\BDelta_S\BR_\natural(\BR_\natural^\top\BR_\natural)^{-1}\BSigma_\star \BDelta_L^\top) \cr
    &~ +2\eta^2\tr(\BDelta_S\BR_\natural(\BR_\natural^\top\BR_\natural)^{-1}\BSigma_\star(\BR_\natural^\top\BR_\natural)^{-1}\BR_\natural^\top\BDelta_R\BL_\star^\top)
    +\eta^2 \|\BDelta_S\BR_\natural(\BR_\natural^\top\BR_\natural)^{-1}\BSigma_\star^{1/2} \|_\fro^2 \cr
    :=&~ \mathfrak{R}_1 - \mathfrak{R}_2 + \mathfrak{R}_3 + \mathfrak{R}_4
\end{align*}

\paragraph{Bound of $\mathfrak{R}_1$.} The component $\mathfrak{R}_1$ here is identical to $\mathfrak{R}_1$ in \citep[Section~D.1.1]{tong2020accelerating}, and the bound of this term was shown therein. We will clear this bound further by applying Lemma~\ref{lm:Delta_F_norm}, that is,
\begin{align*}
    \mathfrak{R}_1&\leq\left((1-\eta)^2+\frac{2\varepsilon_0}{1-\varepsilon_0}\eta(1-\eta)  \right) \|\BDelta_L\BSigma_\star^{1/2}\|_\fro^2 + \frac{2\varepsilon_0+\varepsilon_0^2}{(1-\varepsilon_0)^2}\eta^2\| \BDelta_R\BSigma_\star^{1/2} \|_\fro^2 \cr
    &\leq (1-\eta)^2\|\BDelta_L\BSigma_\star^{1/2}\|_\fro^2 + \left((1-\eta) \frac{2\varepsilon_0^3}{1-\varepsilon_0}  +\eta\frac{2\varepsilon_0^3+\varepsilon_0^4}{(1-\varepsilon_0)^2}\right)\eta\tau^{2k}\sigma_r^2(\BX_\star).
\end{align*}

\paragraph{Bound of $\mathfrak{R}_2$.} Lemma~\ref{lm:sparity} implies $\BDelta_S=\BS_{k+1}-\BS_\star$ is an $\alpha$-sparse matrix. Thus, by Lemmata~\ref{lm:bound of sparse matrix}, \ref{lm:Delta_F_norm}, \ref{lm:L_R_scale_sigma_half}, \ref{lm:sparity}, and \ref{lm:X-X_K_inf_norm}, we have
\begin{align*}
|\tr(\BDelta_S\BR_\natural(\BR_\natural^\top\BR_\natural)^{-1}\BSigma_\star \BDelta_L^\top)| &\leq \|\BDelta_S\|_2 \|\BR_\natural(\BR_\natural^\top\BR_\natural)^{-1}\BSigma_\star \BDelta_L^\top\|_* \cr
&\leq \alpha n \sqrt{r} \|\BDelta_S\|_\infty  \|\BR_\natural(\BR_\natural^\top\BR_\natural)^{-1}\BSigma_\star \BDelta_L^\top\|_\fro \cr
&\leq 2\alpha n \sqrt{r} \|\BX_k-\BX_\star\|_\infty  \|\BR_\natural(\BR_\natural^\top\BR_\natural)^{-1}\BSigma_\star^{1/2}\|_2 \|\BDelta_L\BSigma_\star^{1/2}\|_\fro \cr
&\leq 6\alpha  \mu r^{1.5} \tau^{2k} \frac{\varepsilon_0}{1-\varepsilon_0}  \sigma_r^2(\BX_\star) .
\end{align*}
Hence,
\begin{equation*}
    |\mathfrak{R}_2|\leq 12\eta(1-\eta)\alpha  \mu r^{1.5} \tau^{2k} \frac{\varepsilon_0}{1-\varepsilon_0}  \sigma_r^2(\BX_\star).
\end{equation*}

\paragraph{Bound of $\mathfrak{R}_3$.} Similar to $\mathfrak{R}_2$, we have
\begin{align*}
    &~ |\tr(\BDelta_S\BR_\natural(\BR_\natural^\top\BR_\natural)^{-1}\BSigma_\star(\BR_\natural^\top\BR_\natural)^{-1}\BR_\natural^\top\BDelta_R\BL_\star^\top)| \cr
    \leq&~ \|\BDelta_S\|_2 \|\BR_\natural(\BR_\natural^\top\BR_\natural)^{-1}\BSigma_\star(\BR_\natural^\top\BR_\natural)^{-1}\BR_\natural^\top\BDelta_R\BL_\star^\top\|_* \cr
    \leq&~ \alpha n \sqrt{r} \|\BDelta_S\|_\infty \|\BR_\natural(\BR_\natural^\top\BR_\natural)^{-1}\BSigma_\star(\BR_\natural^\top\BR_\natural)^{-1}\BR_\natural^\top\BDelta_R\BL_\star^\top\|_\fro \cr
    \leq&~ \alpha n \sqrt{r} \|\BDelta_S\|_\infty \|\BR_\natural(\BR_\natural^\top\BR_\natural)^{-1}\BSigma_\star^{1/2}\|_2^2\|\BDelta_R\BL_\star^\top\|_\fro \cr
    \leq&~ 2\alpha n \sqrt{r} \|\BX_k-\BX_\star\|_\infty \|\BR_\natural(\BR_\natural^\top\BR_\natural)^{-1}\BSigma_\star^{1/2}\|_2^2\|\BDelta_R\BSigma_\star^{1/2}\|_\fro \|\BU_\star\|_2 \cr
    \leq&~ 6\alpha  \mu r^{1.5} \tau^{2k} \frac{\varepsilon_0}{(1-\varepsilon_0)^2}  \sigma_r^2(\BX_\star) .
\end{align*}
Hence, 
\begin{equation*}
    |\mathfrak{R}_3|\leq 12\eta^2\alpha  \mu r^{1.5} \tau^{2k} \frac{\varepsilon_0}{(1-\varepsilon_0)^2}  \sigma_r^2(\BX_\star).
\end{equation*}

\paragraph{Bound of $\mathfrak{R}_4$.}
\begin{align*}
    \|\BDelta_S\BR_\natural(\BR_\natural^\top\BR_\natural)^{-1}\BSigma_\star^{1/2} \|_\fro^2 &\leq r  \|\BDelta_S\BR_\natural(\BR_\natural^\top\BR_\natural)^{-1}\BSigma_\star^{1/2} \|_2^2 \cr
    &\leq r\|\BDelta_S\|_2^2 \|\BR_\natural(\BR_\natural^\top\BR_\natural)^{-1}\BSigma_\star^{1/2} \|_2^2 \cr
    &\leq 4\alpha^2 n^2 r \|\BX_k-\BX_\star\|_\infty^2 \|\BR_\natural(\BR_\natural^\top\BR_\natural)^{-1}\BSigma_\star^{1/2} \|_2^2 \cr
    &\leq 36\alpha^2 \mu^2 r^3 \tau^{2k} \frac{1}{(1-\varepsilon_0)^2}  \sigma_r^2(\BX_\star).
\end{align*}
Hence,
\begin{equation*}
    \mathfrak{R}_4\leq 36\eta^2\alpha^2 \mu^2 r^3 \tau^{2k} \frac{1}{(1-\varepsilon_0)^2}  \sigma_r^2(\BX_\star).
\end{equation*}

Combine all the bounds together, we have
\begin{align*}
    &~\|(\BL_{k+1}\BQ_k-\BL_\star)\BSigma_\star^{1/2}\|_\fro^2 \cr 
    \leq&~  (1-\eta)^2\|\BDelta_L\BSigma_\star^{1/2}\|_\fro^2 + \left((1-\eta) \frac{2\varepsilon_0^3}{1-\varepsilon_0}  +\eta\frac{2\varepsilon_0^3+\varepsilon_0^4}{(1-\varepsilon_0)^2}\right)\eta\tau^{2k}\sigma_r^2(\BX_\star) \cr
    &~ + 12\eta(1-\eta)\alpha  \mu r^{1.5} \tau^{2k} \frac{\varepsilon_0}{1-\varepsilon_0}  \sigma_r^2(\BX_\star) \cr
    &~ + 12\eta^2\alpha  \mu r^{1.5} \tau^{2k} \frac{\varepsilon_0}{(1-\varepsilon_0)^2}  \sigma_r^2(\BX_\star) \cr
    &~ + 36\alpha^2 \mu^2 r^3 \tau^{2k} \frac{1}{(1-\varepsilon_0)^2}  \sigma_r^2(\BX_\star),
\end{align*}
and a similar bound can be computed for $\|(\BR_{k+1}\BQ_k^{-\top}-\BR_\star)\BSigma_\star^{1/2}\|_\fro^2$. Add together, we have
\begin{align} \label{eq:dist_k+1_with_Q_k}
    &~\distsquarekplusone \cr
    \leq&~ \|(\BL_{k+1}\BQ_k-\BL_\star)\BSigma_\star^{1/2}\|_\fro^2 + \|(\BR_{k+1}\BQ_k^{-\top}-\BR_\star)\BSigma_\star^{1/2}\|_\fro^2 \cr
    \leq&~  (1-\eta)^2\left(\|\BDelta_L\BSigma_\star^{1/2}\|_\fro^2+\|\BDelta_R\BSigma_\star^{1/2}\|_\fro^2\right) + 2\left((1-\eta) \frac{2\varepsilon_0^3}{1-\varepsilon_0}  +\eta\frac{2\varepsilon_0^3+\varepsilon_0^4}{(1-\varepsilon_0)^2}\right)\eta\tau^{2k}\sigma_r^2(\BX_\star) \cr
    &~ + 24\eta(1-\eta)\alpha  \mu r^{1.5} \tau^{2k} \frac{\varepsilon_0}{1-\varepsilon_0}  \sigma_r^2(\BX_\star) \cr
    &~ + 24\eta^2\alpha  \mu r^{1.5} \tau^{2k} \frac{\varepsilon_0}{(1-\varepsilon_0)^2}  \sigma_r^2(\BX_\star) \cr
    &~ + 72\alpha^2 \mu^2 r^3 \tau^{2k} \frac{1}{(1-\varepsilon_0)^2}  \sigma_r^2(\BX_\star)\cr
    \leq&~  \Bigg( (1-\eta)^2 + 2\left((1-\eta) \frac{2\varepsilon_0}{1-\varepsilon_0}  +\eta\frac{2\varepsilon_0+\varepsilon_0^2}{(1-\varepsilon_0)^2}\right)\eta  + 24\eta(1-\eta)\alpha  \mu r^{1.5}  \frac{1}{\varepsilon_0(1-\varepsilon_0)}   \cr
    &~ + 24\eta^2\alpha  \mu r^{1.5}  \frac{1}{\varepsilon_0(1-\varepsilon_0)^2}   + 72\alpha^2 \mu^2 r^3  \frac{1}{\varepsilon_0^2(1-\varepsilon_0)^2}  \Bigg)\varepsilon_0^2\tau^{2k} \sigma_r^2(\BX_\star) \cr
    \leq&~ (1-0.6\eta)^2 \varepsilon_0^2\tau^{2k} \sigma_r^2(\BX_\star) ,
\end{align}
where we use the fact $\|\BDelta_L\BSigma_\star^{1/2}\|_\fro^2+\|\BDelta_R\BSigma_\star^{1/2}\|_\fro^2 =: \distsquarek \leq \varepsilon_0^2\tau^{2k}\sigma_r^2(\BX_\star)$ in the second step, and the last step use $\varepsilon_0=0.02$, $\alpha\leq\frac{1}{10^4\mu r^{1.5}}$, and $\frac{1}{4}\leq\eta\leq \frac{8}{9}$. The proof is finished by substituting $\tau=1-0.6\eta$. 

\end{proof}

\begin{lemma} \label{lm:convergence_incoher}
If 
\begin{gather*}
\distk \leq\varepsilon_0 \tau^k \sigma_r(\BX_\star), \cr
\|\BDelta_L\BSigma_\star^{1/2}\|_{2,\infty}  \lor \|\BDelta_R\BSigma_\star^{1/2}\|_{2,\infty}\leq\sqrt{\frac{\mu r}{n}} \tau^k \sigma_r(\BX_\star),
\end{gather*}
then
\begin{gather*}
\|(\BL_{k+1}\BQ_{k+1}-\BL_\star)\BSigma_\star^{1/2}\|_{2,\infty}  \lor \|(\BR_{k+1}\BQ_{k+1}^{-\top}-\BR_\star)\BSigma_\star^{1/2}\|_{2,\infty}\leq\sqrt{\frac{\mu r}{n}} \tau^{k+1} \sigma_r(\BX_\star).
\end{gather*}
\end{lemma}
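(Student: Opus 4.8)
The plan is to mirror the structure of the proof of Lemma~\ref{lm:convergence_dist}, but to measure everything in the row-wise norm $\|\cdot\|_{2,\infty}$ rather than in $\|\cdot\|_\fro$, and then to pay an extra price to pass from the alignment matrix $\BQ_k$ to $\BQ_{k+1}$. I would first reuse the exact identity~\eqref{eq:LQ-L},
\[
\BL_{k+1}\BQ_k-\BL_\star=(1-\eta)\BDelta_L-\eta\BL_\star\BDelta_R^\top\BR_\natural(\BR_\natural^\top\BR_\natural)^{-1}-\eta\BDelta_S\BR_\natural(\BR_\natural^\top\BR_\natural)^{-1},
\]
right-multiply by $\BSigma_\star^{1/2}$, and apply the triangle inequality together with the submultiplicativity $\|\BM_1\BM_2\|_{2,\infty}\le\|\BM_1\|_{2,\infty}\|\BM_2\|_2$. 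The first term contributes $(1-\eta)\|\BDelta_L\BSigma_\star^{1/2}\|_{2,\infty}\le(1-\eta)\sqrt{\mu r/n}\,\tau^k\sigma_r(\BX_\star)$ by hypothesis. For the second term I would factor $\BL_\star\BDelta_R^\top\BR_\natural(\BR_\natural^\top\BR_\natural)^{-1}\BSigma_\star^{1/2}=(\BL_\star\BSigma_\star^{-1/2})\big(\BSigma_\star^{1/2}\BDelta_R^\top\BR_\natural(\BR_\natural^\top\BR_\natural)^{-1}\BSigma_\star^{1/2}\big)$, bound the first factor by $\|\BU_\star\|_{2,\infty}\le\sqrt{\mu r/n}$ via Assumption~\ref{as:incoherence}, and bound the second by $\|\BDelta_R\BSigma_\star^{1/2}\|_2\,\|\BR_\natural(\BR_\natural^\top\BR_\natural)^{-1}\BSigma_\star^{1/2}\|_2\le\varepsilon_0\tau^k\sigma_r(\BX_\star)\cdot\tfrac{1}{1-\varepsilon_0}$ using Lemmata~\ref{lm:Delta_F_norm} and \ref{lm:L_R_scale_sigma_half}. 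For the sparse term I would use $\|\BDelta_S\BR_\natural(\BR_\natural^\top\BR_\natural)^{-1}\BSigma_\star^{1/2}\|_{2,\infty}\le\|\BDelta_S\|_{2,\infty}\cdot\tfrac{1}{1-\varepsilon_0}$ and then chain $\|\BDelta_S\|_{2,\infty}\le\sqrt{\alpha n}\,\|\BDelta_S\|_\infty$ (Lemma~\ref{lm:bound of sparse matrix}), $\|\BDelta_S\|_\infty\le2\|\BX_\star-\BX_k\|_\infty$ (Lemma~\ref{lm:sparity}), and $\|\BX_\star-\BX_k\|_\infty\le3\tfrac{\mu r}{n}\tau^k\sigma_r(\BX_\star)$ (Lemma~\ref{lm:X-X_K_inf_norm}), so this term is $\le6\sqrt{\alpha\mu r}\,\tfrac{1}{1-\varepsilon_0}\sqrt{\mu r/n}\,\tau^k\sigma_r(\BX_\star)$, which is tiny because $\alpha\le\tfrac{1}{10^4\mu r^{3/2}}$. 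Collecting, $\|(\BL_{k+1}\BQ_k-\BL_\star)\BSigma_\star^{1/2}\|_{2,\infty}\le\big((1-\eta)+c_1\eta\big)\sqrt{\mu r/n}\,\tau^k\sigma_r(\BX_\star)$ for a small explicit constant $c_1$.

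Next I would switch the alignment matrix. Writing $\BL_{k+1}\BQ_{k+1}-\BL_\star=(\BL_{k+1}\BQ_k-\BL_\star)+(\BL_{k+1}\BQ_k\BSigma_\star^{-1/2})\big(\BSigma_\star^{1/2}\BQ_k^{-1}(\BQ_{k+1}-\BQ_k)\BSigma_\star^{1/2}\big)$, I bound $\|\BL_{k+1}\BQ_k\BSigma_\star^{-1/2}\|_{2,\infty}\le\|\BU_\star\|_{2,\infty}+\|(\BL_{k+1}\BQ_k-\BL_\star)\BSigma_\star^{-1/2}\|_{2,\infty}\le(1+\tau^k)\sqrt{\mu r/n}\le2\sqrt{\mu r/n}$, using the previous paragraph together with $\|\BSigma_\star^{-1}\|_2=1/\sigma_r(\BX_\star)$. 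The remaining factor is controlled by Lemma~\ref{lm:sigma_half_Q-Q_sigma_half}, whose hypothesis is exactly the spectral bound $\|(\BL_{k+1}\BQ_k-\BL_\star)\BSigma_\star^{1/2}\|_2\lor\|(\BR_{k+1}\BQ_k^{-\top}-\BR_\star)\BSigma_\star^{1/2}\|_2\le\varepsilon_0\tau^{k+1}\sigma_r(\BX_\star)$ already established in~\eqref{eq:dist_k+1_with_Q_k} inside the proof of Lemma~\ref{lm:convergence_dist}; crucially, the estimate produced there carries a factor $\tau^{k+1}$ before it is relaxed to $\tfrac{2\varepsilon_0}{1-\varepsilon_0}\sigma_r(\BX_\star)$, and I must retain that factor so that the perturbation term is $\le c_2\,\tau^{k+1}\sqrt{\mu r/n}\,\sigma_r(\BX_\star)$.

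Combining the two pieces yields $\|(\BL_{k+1}\BQ_{k+1}-\BL_\star)\BSigma_\star^{1/2}\|_{2,\infty}\le\big((1-\eta)+c_1\eta+c_2\tau\big)\sqrt{\mu r/n}\,\tau^k\sigma_r(\BX_\star)$, and substituting $\varepsilon_0=0.02$, $\alpha\le\tfrac{1}{10^4\mu r^{3/2}}$, and $\tfrac14\le\eta\le\tfrac89$ shows the bracket is at most $1-0.6\eta=\tau$, which is the claimed bound. The argument for $\BR_{k+1}$ is entirely symmetric, using the $\BR$-analogue of~\eqref{eq:LQ-L} and the second halves of Lemmata~\ref{lm:L_R_scale_sigma_half} and \ref{lm:sigma_half_Q-Q_sigma_half}, and taking the maximum of the two estimates finishes the proof.

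I expect the main obstacle to be the alignment-change step: it has no counterpart in a ``fixed-$\BQ$'' analysis, it forces the use of the sharper $\tau^{k+1}$ form of Lemma~\ref{lm:sigma_half_Q-Q_sigma_half}, and the constants are tight — the small quantities $c_1,c_2$ arising from $\varepsilon_0$ and $\alpha$ must leave just enough slack inside $1-0.6\eta$ uniformly over $\eta\in[\tfrac14,\tfrac89]$, with the endpoint $\eta=\tfrac14$ being the binding case. A secondary point requiring care in the write-up is the logical ordering: invoking Lemma~\ref{lm:sigma_half_Q-Q_sigma_half} here is only legitimate because Lemma~\ref{lm:convergence_dist} has been proved first and supplies the $\BQ_k$-aligned spectral bound at level $k+1$.
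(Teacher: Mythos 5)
Your proposal is correct and follows essentially the same route as the paper's proof: the same decomposition via \eqref{eq:LQ-L} into the three terms bounded by the hypothesis, Lemmata~\ref{lm:Delta_F_norm}, \ref{lm:L_R_scale_sigma_half}, \ref{lm:bound of sparse matrix}, \ref{lm:sparity}, \ref{lm:X-X_K_inf_norm}, followed by the same alignment-change step using the factorization through $\BSigma_\star^{1/2}\BQ_k^{-1}(\BQ_{k+1}-\BQ_k)\BSigma_\star^{1/2}$ and Lemma~\ref{lm:sigma_half_Q-Q_sigma_half}. Your observation that one must retain the sharper $\tau^{k+1}$ factor available inside the proof of Lemma~\ref{lm:sigma_half_Q-Q_sigma_half} (rather than the relaxed constant in its statement) is exactly right --- the paper's final display implicitly relies on it, since otherwise the $\|\BL_\star\BSigma_\star^{-1/2}\|_{2,\infty}$ contribution to the perturbation term would not carry the needed $\tau^k$ decay.
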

\begin{proof}
Using \eqref{eq:LQ-L} again, we have
\begin{align*}
    &~\|(\BL_{k+1}\BQ_k-\BL_\star)\BSigma_\star^{1/2}\|_{2,\infty}\cr
    \leq&~ (1-\eta)\|\BDelta_L\BSigma_\star^{1/2}\|_{2,\infty}
    +\eta\|\BL_\star\BDelta_R^\top\BR_\natural(\BR_\natural^\top\BR_\natural)^{-1}\BSigma_\star^{1/2}\|_{2,\infty}
    +\eta\|\BDelta_S\BR_\natural(\BR_\natural^\top\BR_\natural)^{-1}\BSigma_\star^{1/2}\|_{2,\infty} \cr
    :=&~ \mathfrak{T}_1 + \mathfrak{T}_2 +\mathfrak{T}_3.
\end{align*}

\paragraph{Bound of $\mathfrak{T}_1$.} $\mathfrak{T}_1\leq(1-\eta)\sqrt{\frac{\mu r}{n}}\tau^k\sigma_r(\BX_\star)$ is directly followed by the assumption of this lemma.

\paragraph{Bound of $\mathfrak{T}_2$.} Assumption~\ref{as:incoherence} implies $\|\BL_\star\BSigma_\star^{-1/2}\|_{2,\infty}\leq\sqrt{\frac{\mu r}{n}}$, Lemma~\ref{lm:Delta_F_norm} implies $\|\BDelta_R\BSigma_\star^{1/2}\|_2\leq \tau^k \varepsilon_0$, 
and  Lemma~\ref{lm:L_R_scale_sigma_half} implies 
Together, we have
\begin{align*}
    \mathfrak{T}_2
    &\leq \eta\|\BL_\star\BSigma_\star^{-1/2}\|_{2,\infty} \|\BDelta_R\BSigma_\star^{1/2}\|_2\|\BR_\natural(\BR_\natural^\top\BR_\natural)^{-1}\BSigma_\star^{1/2}\|_2 \cr
    &\leq \eta \frac{\varepsilon_0}{1-\varepsilon_0}\sqrt{\frac{\mu r}{n}}\tau^k\sigma_r(\BX_\star) .
\end{align*}

\paragraph{Bound of $\mathfrak{T}_3$.} By Lemma~\ref{lm:sparity}, $\supp(\BDelta_S)\subseteq\supp(\BS_\star)$, which implies that $\BDelta_S$ is an $\alpha$-sparse matrix. 
Thus, by Lemma~\ref{lm:bound of sparse matrix}, we get
\begin{align*}
    \mathfrak{T}_3 
    &\leq \eta\|\BDelta_S\|_{2,\infty} \|\BR_\natural(\BR_\natural^\top\BR_\natural)^{-1}\BSigma_\star^{1/2}\|_2 \cr
    &\leq \eta\frac{\sqrt{\alpha n}}{1-\varepsilon_0}\|\BDelta_S\|_\infty \cr
    &\leq 2\eta\frac{\sqrt{\alpha n}}{1-\varepsilon_0}\|\BX_\star-\BX_k\|_\infty \cr
    &\leq 6\eta\frac{\sqrt{\alpha \mu r}}{1-\varepsilon_0} \sqrt{\frac{\mu r}{n}} \tau^k \sigma_r(\BX_\star).
\end{align*}
where the last two steps use Lemmata~\ref{lm:sparity} and \ref{lm:X-X_K_inf_norm} respectively.
Put together, we obtain
\begin{align} \label{eq:LQ-L_sigma_half_2_inf}
    \|(\BL_{k+1}\BQ_k-\BL_\star)\BSigma_\star^{1/2}\|_{2,\infty}
    \leq&~ \mathfrak{T}_1 + \mathfrak{T}_2 +\mathfrak{T}_3 \cr
    \leq&~ \left(  
    1-\eta
    + \eta \frac{\varepsilon_0}{1-\varepsilon_0}
    + 6\eta\frac{\sqrt{\alpha \mu r}}{1-\varepsilon_0} \right)
    \sqrt{\frac{\mu r}{n}} \tau^k \sigma_r(\BX_\star) \cr
    \leq&~ \left(  
    1-\eta\left(1
    - \frac{\varepsilon_0}{1-\varepsilon_0}
    - 6\frac{\sqrt{\alpha \mu r}}{1-\varepsilon_0}\right) \right)
    \sqrt{\frac{\mu r}{n}} \tau^k \sigma_r(\BX_\star).
\end{align}
In addition, we also have 
\begin{align} \label{eq:LQ-L_sigma_-half_2_inf}
     \|(\BL_{k+1}\BQ_k-\BL_\star)\BSigma_\star^{-1/2}\|_{2,\infty} \leq&~ \left(  
    1-\eta\left(1
    - \frac{\varepsilon_0}{1-\varepsilon_0}
    - 6\frac{\sqrt{\alpha \mu r}}{1-\varepsilon_0}\right) \right)
    \sqrt{\frac{\mu r}{n}} \tau^k.
\end{align}

\paragraph{Bound with $\BQ_{k+1}$.} Note that $\BQ$'s are the best align matrices under Frobenius norm but this is not necessary true under $\ell_{2,\infty}$ norm. So we must show the bound of $\|(\BL_{k+1}\BQ_{k+1}-\BL_\star)\BSigma_\star^{1/2}\|_{2,\infty}$ directly. 
Note that $\BQ_{k+1}$ does exist, according to Lemmata~\ref{lm:convergence_dist} and \ref{lm:exist_of_Q}. 
Applying \eqref{eq:LQ-L_sigma_half_2_inf}, \eqref{eq:LQ-L_sigma_-half_2_inf} and Lemma~\ref{lm:sigma_half_Q-Q_sigma_half}, we have 
\begin{align*}
   &~ \|(\BL_{k+1}\BQ_{k+1}-\BL_\star)\BSigma_\star^{1/2}\|_{2,\infty} \cr 
   \leq&~ \|(\BL_{k+1}\BQ_k-\BL_\star)\BSigma_\star^{1/2}\|_{2,\infty} + \|\BL_{k+1}(\BQ_{k+1}-\BQ_k)\BSigma_\star^{1/2}\|_{2,\infty} \cr
   =&~ \|(\BL_{k+1}\BQ_k-\BL_\star)\BSigma_\star^{1/2}\|_{2,\infty} + \|\BL_{k+1}\BQ_k\BSigma_\star^{-1/2}\BSigma_\star^{1/2}\BQ_k^{-1}(\BQ_{k+1}-\BQ_k)\BSigma_\star^{1/2}\|_{2,\infty}  \cr
   \leq&~ \|(\BL_{k+1}\BQ_k-\BL_\star)\BSigma_\star^{1/2}\|_{2,\infty} + \|\BL_{k+1}\BQ_k\BSigma_\star^{-1/2}\|_{2,\infty} \|\BSigma_\star^{1/2}\BQ_k^{-1}(\BQ_{k+1}-\BQ_k)\BSigma_\star^{1/2}\|_2 \cr
   \leq&~ \|(\BL_{k+1}\BQ_k-\BL_\star)\BSigma_\star^{1/2}\|_{2,\infty} \cr
   &~ +\left(\|(\BL_{k+1}\BQ_k-\BL_\star)\BSigma_\star^{-1/2}\|_{2,\infty} +\|\BL_\star\BSigma_\star^{-1/2} \|_{2,\infty}\right) \|\BSigma_\star^{1/2}\BQ_k^{-1}(\BQ_{k+1}-\BQ_k)\BSigma_\star^{1/2}\|_2 \cr
   \leq&~ \Bigg(  1-\eta\left(1
    - \frac{\varepsilon_0}{1-\varepsilon_0}
    - 6\frac{\sqrt{\alpha \mu r}}{1-\varepsilon_0}\right)  + \frac{2\varepsilon_0}{1-\varepsilon_0} \left(  2-\eta\left(1
    - \frac{\varepsilon_0}{1-\varepsilon_0}
    - 6\frac{\sqrt{\alpha \mu r}}{1-\varepsilon_0}\right) \right)\Bigg) \\
    &~ \sqrt{\frac{\mu r}{n}} \tau^k \sigma_r(\BX_\star)\cr
    \leq&~ (1-0.6\eta)\sqrt{\frac{\mu r}{n}} \tau^k \sigma_r(\BX_\star) ,
\end{align*}
where the last step use $\varepsilon_0=0.02$, $\alpha\leq\frac{1}{10^4\mu r^{1.5}}$, and $\frac{1}{4}\leq\eta\leq \frac{8}{9}$. Similar result can be computed for $\|(\BR_{k+1}\BQ_{k+1}^{-\top}-\BR_\star)\BSigma_\star^{1/2}\|_{2,\infty}$. 
The proof is finished by substituting $\tau=1-0.6\eta$. 
\end{proof}

Now we have all the ingredients for proving the theorem of local linear convergence, i.e., Theorem~\ref{thm:local convergence}.
\begin{proof}[Proof of Theorem~\ref{thm:local convergence}]
This proof is done by induction.
\paragraph{Base case.} Since $\tau^0=1$, the assumed initial conditions satisfy the base case at $k=0$.
\paragraph{Induction step.} At the $k$-th iteration, we assume the conditions
\begin{gather*}
\distk \leq\varepsilon_0 \tau^k \sigma_r(\BX_\star), \cr
\|(\BL_k\BQ_k-\BL_\star)\BSigma_\star^{1/2}\|_{2,\infty}  \lor \|(\BR_k\BQ_k^{-\top}-\BR_\star)\BSigma_\star^{1/2}\|_{2,\infty}\leq\sqrt{\frac{\mu r}{n}} \tau^k \sigma_r(\BX_\star)
\end{gather*}
hold, then by Lemmata~\ref{lm:convergence_dist} and \ref{lm:convergence_incoher},
\begin{gather*}
\distkplusone \leq\varepsilon_0 \tau^{k+1} \sigma_r(\BX_\star), \cr
\|(\BL_{k+1}\BQ_{k+1}-\BL_\star)\BSigma_\star^{1/2}\|_{2,\infty}  \lor \|(\BR_{k+1}\BQ_{k+1}^{-\top}-\BR_\star)\BSigma_\star^{1/2}\|_{2,\infty} \leq\sqrt{\frac{\mu r}{n}} \tau^{k+1} \sigma_r(\BX_\star)
\end{gather*}
also hold. This finishes the proof.
\end{proof}

\subsection{Proof of guaranteed initialization} \label{sec:guaranteed initialization}
Now we show the outputs of the initialization step in Algorithm~\ref{algo:LRPCA} satisfy the initial conditions required by Theorem~\ref{thm:local convergence}.
\begin{proof}[Proof of Theorem~\ref{thm:initial}]
Firstly, by Assumption~\ref{as:incoherence}, we obtain
\begin{align*}
    \|\BX_\star\|_\infty\leq\|\BU_\star\|_{2,\infty}\|\BSigma_\star\|_2\|\BV_\star\|_{2,\infty}\leq\frac{\mu r}{n}\sigma_1(\BX_\star).
\end{align*}
Invoking Lemma~\ref{lm:sparity} with $\BX_{-1}=\bm{0}$, we have 
\begin{align} \label{eq:init_S_inf}
    \|\BS_\star-\BS_0\|_\infty\leq 2\frac{\mu r}{n}\sigma_1(\BX_\star) \qquad\textnormal{and}\qquad \supp(\BS_0)\subseteq\supp(\BS_\star),
\end{align}
which implies $\BS_\star-\BS_0$ is an $\alpha$-sparse matrix. Applying Lemma~\ref{lm:bound of sparse matrix}, we have
\begin{align*}
    \|\BS_\star-\BS_0\|_2\leq \alpha n\|\BS_\star-\BS_0\|_\infty\leq 2 \alpha \mu r  \sigma_1(\BX_\star)=2 \alpha \mu r \kappa \sigma_r(\BX_\star).
\end{align*}
Since $\BX_0=\BL_0\BR_0^\top$ is the best rank-$r$ approximation of $\BY-\BS_0$, so
\begin{align*}
    \|\BX_\star-\BX_0\|_2
    &\leq \|\BX_\star-(\BY-\BS_0)\|_2 + \|(\BY-\BS_0) -\BX_0 \|_2 \cr
    &\leq 2\|\BX_\star-(\BY-\BS_0)\|_2 \cr
    &= 2\|\BS_\star-\BS_0\|_2 \cr
    &\leq 4 \alpha \mu r \kappa \sigma_r(\BX_\star),
\end{align*}
where the equality uses the definition $\BY=\BX_\star+\BS_\star$. By \citep[Lemma~11]{tong2020accelerating}, we obtain
\begin{align*}
    \distzero &\leq \sqrt{\sqrt{2}+1}\|\BX_\star-\BX_0\|_\fro \cr
    &\leq \sqrt{(\sqrt{2}+1)2r}\|\BX_\star-\BX_0\|_2 \cr
    &\leq 10  \alpha \mu r^{1.5} \kappa \sigma_r(\BX_\star),
\end{align*}
where we use the fact that $\BX_\star-\BX_0$ has at most rank-$2r$. Given $\varepsilon_0=10 c_0$ and $\alpha\leq\frac{c_0}{\mu r^{1.5}\kappa}$, our first claim
\begin{equation} \label{eq:init_dist}
    \distzero \leq 10 c_0 \sigma_r(\BX_\star)
\end{equation}
is proved. 

Let $\varepsilon_0:=10c_0$. Now, we will prove the second claim:
\begin{align*}
\|\BDelta_L\BSigma_\star^{1/2}\|_{2,\infty} \lor \|\BDelta_R\BSigma_\star^{1/2}\|_{2,\infty} \leq \sqrt{\frac{\mu r}{n}} \sigma_r(\BX_\star)
\end{align*}
where $\BDelta_L:=\BL_0\BQ_0-\BL_\star$ and $\BDelta_R:=\BR_0\BQ_0^{-\top}-\BR_\star$. For ease of notation, we also denote $\BL_\natural=\BL_0\BQ_0$, $\BR_\natural=\BR_0\BQ_0^{-\top}$, and $\BDelta_S=\BS_0-\BS_\star$ in the rest of this proof. 

We will work on $\|\BDelta\Sigma_\star^{1/2}\|_{2,\infty}$ first, and $\|\BDelta\Sigma_\star^{1/2}\|_{2,\infty}$ can be bounded similarly. 

Since $\BU_0\BSigma_0\BV_0^{\top}=\mathcal{D}_r(\BY-\BS_0)=\mathcal{D}_r(\BX_\star-\BDelta_S)$, so 
\begin{align*}
    \BL_0 = \BU_0\BSigma_0^{1/2} 
    &= (\BX_\star-\BDelta_S)\BV_0\BSigma_0^{-1/2} \cr
    &= (\BX_\star-\BDelta_S)\BR_0\BSigma_0^{-1} \cr
    &= (\BX_\star-\BDelta_S)\BR_0(\BR_0^\top\BR_0)^{-1}.
\end{align*}
Multiplying $\BQ_0\Sigma_\star^{1/2}$ on both sides, we have
\begin{align*}
    \BL_\natural\BSigma_\star^{1/2} = \BL_0\BQ_0\BSigma_\star^{1/2} 
    &= (\BX_\star-\BDelta_S)\BR_0(\BR_0^\top\BR_0)^{-1} \BQ_0\BSigma_\star^{1/2} \cr
    &= (\BX_\star-\BDelta_S)\BR_\natural(\BR_\natural^\top\BR_\natural)^{-1} \BSigma_\star^{1/2}.
\end{align*}
Subtracting $\BX_\star\BR_\natural(\BR_\natural^\top\BR_\natural)^{-1}\BSigma_\star^{1/2}$ on both sides, we have
\begin{align*}
    \BL_\natural\BSigma_\star^{1/2} - \BL_\star\BR_\star^\top\BR_\natural(\BR_\natural^\top\BR_\natural)^{-1}\BSigma_\star^{1/2}
    &= (\BX_\star-\BDelta_S)\BR_\natural(\BR_\natural^\top\BR_\natural)^{-1} \BSigma_\star^{1/2} -\BX_\star\BR_\natural(\BR_\natural^\top\BR_\natural)^{-1}\BSigma_\star^{1/2} \cr
    \BDelta_L\BSigma_\star^{1/2}+\BL_\star\BDelta_R^\top  \BR_\natural(\BR_\natural^\top\BR_\natural)^{-1} \BSigma_\star^{1/2}
    &= -\BDelta_S\BR_\natural(\BR_\natural^\top\BR_\natural)^{-1} \BSigma_\star^{1/2} ,
\end{align*}
where the left operand of last step uses the fact  $\BL_\star\BSigma_\star^{1/2} =\BL_\star\BR_\natural^\top\BR_\natural(\BR_\natural^\top\BR_\natural)^{-1}\BSigma^{1/2}$.
Thus, 
\begin{align*}
    \| \BDelta_L\BSigma_\star^{1/2} \|_{2,\infty}
    &\leq \|\BL_\star\BDelta_R^\top  \BR_\natural(\BR_\natural^\top\BR_\natural)^{-1} \BSigma_\star^{1/2}\|_{2,\infty}
    + \|\BDelta_S\BR_\natural(\BR_\natural^\top\BR_\natural)^{-1} \BSigma_\star^{1/2}\|_{2,\infty} \cr
    &:= \mathfrak{J}_1 + \mathfrak{J}_2
\end{align*}

\paragraph{Bound of $\mathfrak{J}_1$.} By Assumption~\ref{as:incoherence}, we get
\begin{align*}
    \mathfrak{J}_1 &\leq \|\BL_\star\BSigma_\star^{-1/2}\|_{2,\infty} \|\BDelta_R\BSigma_\star^{1/2}\|_2  \|\BR_\natural(\BR_\natural^\top\BR_\natural)^{-1} \BSigma_\star^{1/2}\|_2 \cr
    &\leq \sqrt{\frac{\mu r}{n}}\frac{\varepsilon_0}{1-\varepsilon_0}\sigma_r(\BX_\star)
\end{align*}
where Lemma~\ref{lm:Delta_F_norm} implies $\|\BDelta_R\BSigma_\star^{1/2}\|_2 \leq \varepsilon_0 \sigma_r(\BX_\star)$, and Lemma~\ref{lm:L_R_scale_sigma_half} implies $\|\BR_\natural(\BR_\natural^\top\BR_\natural)^{-1} \BSigma_\star^{1/2}\|_2 \leq\frac{1}{1-\varepsilon_0}$, given \eqref{eq:init_dist} holds.

\paragraph{Bound of $\mathfrak{J}_2$.} \eqref{eq:init_S_inf} implies $\BDelta_S$ is $\alpha$-sparse. Moreover, by \eqref{eq:init_dist}, Lemmata~\ref{lm:bound of sparse matrix} and \ref{lm:L_R_scale_sigma_half}, we have
\begin{align*}
    \mathfrak{J}_2 &\leq \|\BDelta_S\|_{1,\infty}\|\BR_\natural\BSigma_\star^{-1/2}\|_{2,\infty}\|\BSigma_\star^{1/2}(\BR_\natural^\top\BR_\natural)^{-1} \BSigma_\star^{1/2}\|_2 \cr
    &\leq \alpha n\|\BDelta_S\|_\infty\|\BR_\natural\BSigma_\star^{-1/2}\|_{2,\infty}\|\BR_\natural(\BR_\natural^\top\BR_\natural)^{-1} \BSigma_\star^{1/2}\|_2^2 \cr
    &\leq \alpha n \frac{2 \mu r}{n}\sigma_1(\BX_\star)\frac{1}{(1-\varepsilon_0)^2}\|\BR_\natural\BSigma_\star^{-1/2}\|_{2,\infty} \cr
    &\leq \frac{2 \alpha \mu r \kappa}{(1-\varepsilon_0)^2}\left(\sqrt{\frac{\mu r}{n}}+\|\BDelta_R\BSigma_\star^{-1/2}\|_{2,\infty}\right) \sigma_r(\BX_\star)
\end{align*}
where the first step uses that  $\|\bm{A}\bm{B}\|_{2,\infty}\leq\|\bm{A}\|_{1,\infty}\|\bm{B}\|_{2,\infty}$ for any matrices. 
Note that $\|\BDelta_R\BSigma_\star^{-1/2}\|_{2,\infty}\leq \frac{\|\BDelta_R\BSigma_\star^{1/2}\|_{2,\infty}}{\sigma_r(\BX_\star)}$. 
Hence, 
\begin{align*}
    \| \BDelta_L\BSigma_\star^{1/2} \|_{2,\infty}
    &\leq \left(\frac{\varepsilon_0}{1-\varepsilon_0}+\frac{2 \alpha \mu r \kappa}{(1-\varepsilon_0)^2}\right) \sqrt{\frac{\mu r}{n}} \sigma_r(\BX_\star) + \frac{2 \alpha \mu r \kappa}{(1-\varepsilon_0)^2}\|\BDelta_R\BSigma_\star^{1/2}\|_{2,\infty} ,
\end{align*}
and similarly one can see
\begin{align*}
    \| \BDelta_R\BSigma_\star^{1/2} \|_{2,\infty}
    &\leq \left(\frac{\varepsilon_0}{1-\varepsilon_0}+\frac{2 \alpha \mu r \kappa}{(1-\varepsilon_0)^2}\right) \sqrt{\frac{\mu r }{n}} \sigma_r(\BX_\star) + \frac{2 \alpha \mu r \kappa}{(1-\varepsilon_0)^2}\|\BDelta_L\BSigma_\star^{1/2}\|_{2,\infty}  .
\end{align*}
Therefore, substituting $\varepsilon_0=10 c_0$ gives
\begin{align*}
    &\quad~\| \BDelta_L\BSigma_\star^{1/2} \|_{2,\infty} \lor \| \BDelta_R\BSigma_\star^{1/2} \|_{2,\infty}  \cr
    &\leq \frac{(1-\varepsilon_0)^2}{(1-\varepsilon_0)^2-2\alpha \mu r \kappa}\left(\frac{\varepsilon_0}{1-\varepsilon_0}+\frac{2 \alpha \mu r \kappa}{(1-\varepsilon_0)^2}\right) \sqrt{\frac{\mu r }{n}} \sigma_r(\BX_\star) \cr
    &\leq \frac{(1-10c_0)^2}{(1-10c_0)^2-2c_0}\left(\frac{10c_0}{1-10c_0}+\frac{2c_0}{(1-10c_0)^2}\right) \sqrt{\frac{\mu r }{n}} \sigma_r(\BX_\star)  \cr
    &\leq \sqrt{\frac{\mu r}{n}} \sigma_r(\BX_\star),
\end{align*}
as long as $c_0\leq \frac{1}{35}$. 
This finishes the proof.
\end{proof}


\section{Complexity of LRPCA} \label{sec:computational complexity}

We provide the breakdown of LRPCA's computational complexity: 
\begin{enumerate}
    \item  Compute $\BL_k \BR_k^\top$: $n$-by-$r$ matrix times $r$-by-$n$ matrix---$n^2 r$ flops.  
    \item   Compute $\BY - \BL_k \BR_k^\top$: $n$-by-$n$ matrix minus $n$-by-$n$ matrix---$n^2$ flops.  
    \item   Soft-thresholding on $\BY - \BL_k \BR_k^\top$: one pass on $n$-by-$n$ matrix---$n^2$ flops.  
    \item   Compute $\BL_k \BR_k^\top + \BS_{k+1} - \BY = \BS_{k+1} - (\BY - \BL_k \BR_k^\top)$: $n$-by-$n$ matrix minus $n$-by-$n$ matrix---$n^2$ flops.  
    \item   Compute $\BR_k^\top \BR_k$: $r$-by-$n$ matrix times $n$-by-$r$ matrix---$n r^2$ flops.  
    \item   Compute  $(\BR_k^\top \BR_k)^{-1}$: invert a $r$-by-$r$ matrix---$\cO(r^3)$ flops.  
    \item   Compute $\BR_k (\BR_k^\top \BR_k)^{-1}$: $n$-by-$r$ matrix times $r$-by-$r$ matrix---$n r^2$ flops.  
    \item   Compute $(\BL_k \BR_k^\top + \BS_{k+1} - \BY) \cdot  \BR_k (\BR_k^\top \BR_k)^{-1}$: $n$-by-$n$ matrix times $n$-by-$r$ matrix---$n^2 r$ flops.  
    \item   Compute $\BL_{k+1}= \BL_k - \zeta_{k+1} (\BL_k \BR_k^\top + \BS_{k+1} - \BY)  \BR_k (\BR_k^\top \BR_k)^{-1}$: $n$-by-$r$ matrix minus scalar times $n$-by-$r$ matrix---$2nr$ flops.  
    \item Repeat step 5 - 9 for computing $\BR_{k+1}$---another $2n r^2 + \cO(r^3) + n^2 r +2n r$ flops.
\end{enumerate}

In total, LRPCA costs $3 n^2 r + 3 n^2 + \cO(n r^2)$ flops per iteration provided $r \ll n$.  Note that we count $abc$ flops for computing an $a$-by-$b$ matrix times a $b$-by-$c$ matrix in the above complexity calculation. Some may argue that this matrix product should take $2abc$ flops. The per-iteration complexity can be rectified  to $6 n^2 r + 3 n^2 + \cO(n r^2)$ flops if the reader prefers the latter opinion.

\section{Additional numerical results} \label{sec:more experiment results}

\subsection{Setup details}

\paragraph{Random instance generation.} We follow the setup in \citep{yi2016fast,accaltproj} to generate synthetic data. Each observation signal $\BY_\star \in \mathbb{R}^{n\times n}$ is generated by $\BY_\star = \BX_\star + \BS_\star$. The underlying low-rank matrix $\BX_\star$ is generated with $\BX_\star = \BL_\star \BR_\star^\top$ where $\BL_\star,\BR_\star \in \mathbb{R}^{n\times r}$ have elements drawn i.i.d from zero-mean Gaussian distribution with variance $1/n$. Non-zero locations of the underlying sparse matrix $\BS_\star$ is uniformly and independently sampled without replacement. The magnitudes of the non-zeros of $\BS_\star$ are sampled i.i.d from the uniform distribution over the interval $[-\mathbb{E}|[\BX_\star]_{i,j}|, \mathbb{E}|[\BX_\star]_{i,j}|]$. 

\paragraph{Video instances preprocessing.} To accelerate the training process, we first change the RGB videos in the VIRAT dataset to gray videos and then downsample the videos by a fraction of $4$. All training videos are cut to sub-videos with number of frames no more than $1000$, testing videos are not cut.

\paragraph{Details in training.} In the layer-wise training phase, we adopt SGD with batch size $1$; in the parameter ($\beta,\phi$) searching phase (i.e., RNN training), we adopt grid search with grid size $0.1$.  In synthetic data experiments, the ground truth $\BX_\star$ is known after each instance is generated. Thus, the training pair $(\BY,\BX_\star)$ is easy to obtain. We generate a new instance in each step of SGD in the layer-wise training phase and generate $20$ instances for the grid search phase. The testing set is separately generated and consists of $50$ instances. In the video experiment, the underlying ground truth $\BX_\star$ is unknown. We solve each training video with a classic RPCA algorithm \citep{netrapalli2014non} (without learning) to precision $10^{-5}$ and use that solution as $\BX_\star$. Moreover, in synthetic data experiments, we we set $K = 10, \overline{K} = 15$; in video experiments, we set $K =5, \overline{K} = 10$ and the underlying rank $r=2$. 


\subsection{Training time}
\RV{Our training time for different matrix sizes, ranks, and outlier densities are reported in Table~\ref{tab:training time}.  }

\begin{table}[h]
\centering
\caption{Training time summary.} \label{tab:training time}
\begin{tabular}{l|l}
\hline
Problem settings  & Training time \\ \hline \hline
$n=1000,r=5,\alpha =0.1$  & 1208 secs     \\ \hline
$n=1000,r=5,\alpha=0.2$  & 1209 secs     \\ \hline
$n=1000,r=5,\alpha=0.3$  & 1208 secs     \\ \hline \hline 
$n=1000,r=5,\alpha=0.1$  & 1208 secs     \\ \hline
$n=3000,r=5,\alpha=0.1$  & 1615 secs     \\ \hline
$n=5000,r=5,\alpha=0.1$  & 2405 secs     \\ \hline \hline
$n=1000,r=5,\alpha=0.1$  & 1208 secs     \\ \hline
$n=1000,r=10,\alpha=0.1$ & 1236 secs     \\ \hline
$n=1000,r=15,\alpha=0.1$ & 1249 secs     \\ \hline
\end{tabular}
\end{table}

\RV{Different from the inference time reported in the main paper, the training was done on a workstation equipped with two Nvidia RTX-3080 GPUs. Note that the training time is not proportional to the problem size due to the high concurrency of GPU computing.}

\subsection{Visualizations of video background subtraction}
\RV{In Figure~\ref{fig:visualization_appendix}, we visualize the results of LRPCA, ScaledGD and AltProj on the task of video background subtraction.}

\begin{figure}[t]
\subfloat{\includegraphics[width=.24\textwidth]{figs/v1.png}}\hfill
\subfloat{\includegraphics[width=.24\textwidth]{figs/v2.png}}\hfill
\subfloat{\includegraphics[width=.24\textwidth]{figs/v4.png}}\hfill
\subfloat{\includegraphics[width=.24\textwidth]{figs/v3.png}} 

\vspace{-0.1in}
\subfloat{\includegraphics[width=.24\textwidth]{figs/v1_s.png}}\hfill
\subfloat{\includegraphics[width=.24\textwidth]{figs/v2_s.png}}\hfill
\subfloat{\includegraphics[width=.24\textwidth]{figs/v4_s.png}}\hfill
\subfloat{\includegraphics[width=.24\textwidth]{figs/v3_s.png}} 

\vspace{-0.1in}
\subfloat{\includegraphics[width=.24\textwidth]{figs/v1_x.png}}\hfill
\subfloat{\includegraphics[width=.24\textwidth]{figs/v2_x.png}}\hfill
\subfloat{\includegraphics[width=.24\textwidth]{figs/v4_x.png}}\hfill
\subfloat{\includegraphics[width=.24\textwidth]{figs/v3_x.png}}

\vspace{-0.1in}
\subfloat{\includegraphics[width=.24\textwidth]{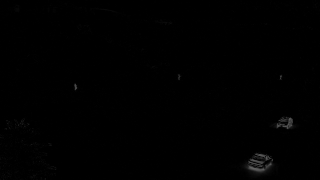}}\hfill
\subfloat{\includegraphics[width=.24\textwidth]{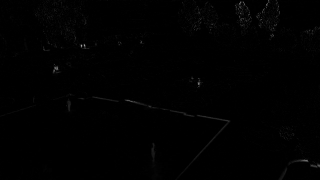}}\hfill
\subfloat{\includegraphics[width=.24\textwidth]{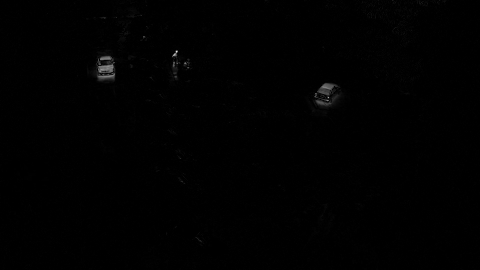}}\hfill
\subfloat{\includegraphics[width=.24\textwidth]{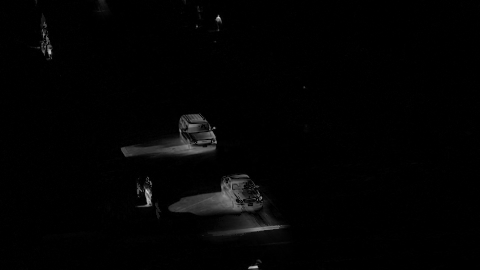}} 

\vspace{-0.1in}
\subfloat{\includegraphics[width=.24\textwidth]{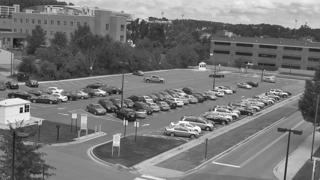}}\hfill
\subfloat{\includegraphics[width=.24\textwidth]{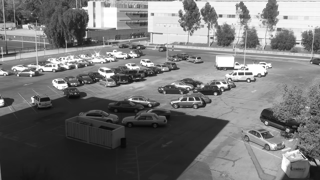}}\hfill
\subfloat{\includegraphics[width=.24\textwidth]{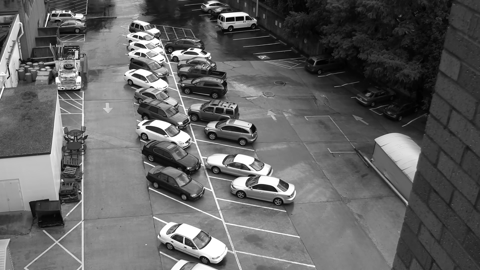}}\hfill
\subfloat{\includegraphics[width=.24\textwidth]{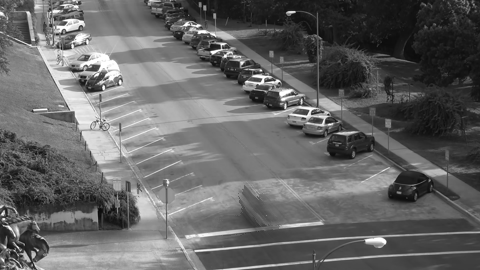}}

\vspace{-0.1in}
\subfloat{\includegraphics[width=.24\textwidth]{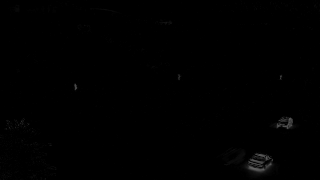}}\hfill
\subfloat{\includegraphics[width=.24\textwidth]{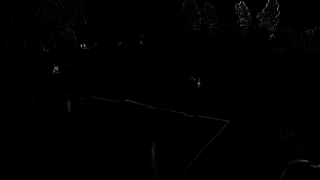}}\hfill
\subfloat{\includegraphics[width=.24\textwidth]{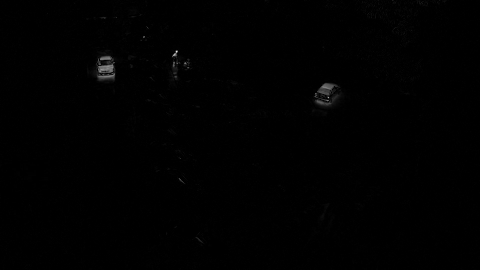}}\hfill
\subfloat{\includegraphics[width=.24\textwidth]{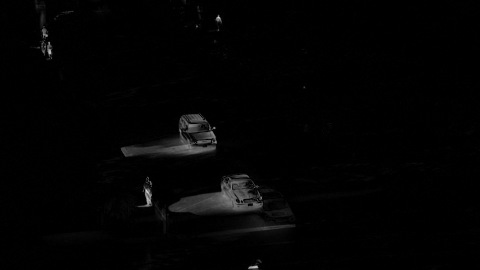}} 

\vspace{-0.1in}
\subfloat{\includegraphics[width=.24\textwidth]{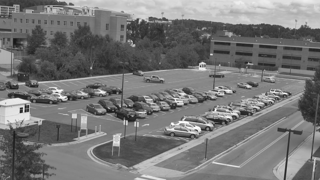}}\hfill
\subfloat{\includegraphics[width=.24\textwidth]{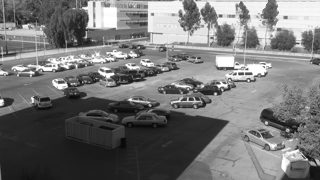}}\hfill
\subfloat{\includegraphics[width=.24\textwidth]{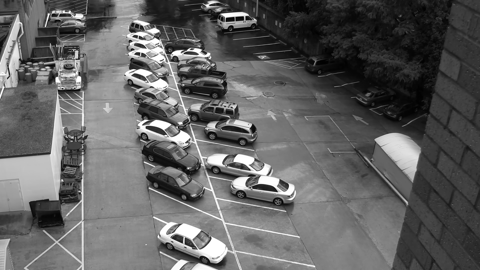}}\hfill
\subfloat{\includegraphics[width=.24\textwidth]{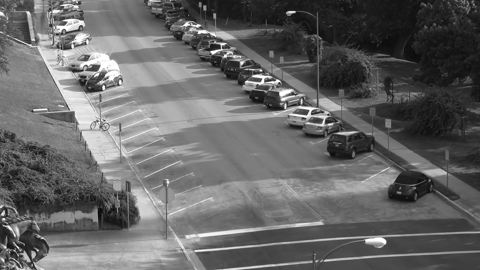}}
\caption{Video background subtraction visual results. Each column represents a selected frame from the tested videos. From left to right, they are ParkingLot1, ParkingLot2, ParkingLot3, and StreetView. The first row contains the original frames. Rows 2 and 3 are the separated foreground and background produced by LRPCA, respectively. Rows 4 and 5 are results for ScaledGD. The last two rows are results for AltProj.}\label{fig:visualization_appendix}
\end{figure}

\subsection{Generalization}
In this section, we study the generalization ability of our model. Specifically, we train our model on small-size and low-rank instances, and test it on instances with larger size or higher rank.

First we train a FRMNN on instances of size $1000\times1000$ and rank-$5$. This setting is denoted as the ``base'' setting. We only train the model once on the base setting and test it on instances with different settings (denoted as ``target'' settings). When we test a model on instances, we use the step sizes $\{\eta_k\}$ directly and scale the thresholdings $\{\zeta_k\}$ by a factor of $(n_{\mathrm{base}}/n_{\mathrm{target}})(r_{\mathrm{target}}/r_{\mathrm{base}})$ due to the $\ell_\infty$ bound estimation given in Lemma~\ref{lm:X-X_K_inf_norm}.

As a comparison, we also train FRMNNs individually on the target settings. 
The averaged iterations to achieve $10^{-4}$ on the testing set are reported in Table~\ref{tab:generalization}.

\begin{table}[h]
\centering
\caption{Results for generalization test.}
\label{tab:generalization}
\vspace{0.1in}
\begin{tabular}{l|l|l|l}
\hline
\multicolumn{4}{c}{Fix $r=5$, test different $n$}\\\hline
Matrix size $n$ & 1000 & 3000 & 5000 \\ \hline
Iterations (Model trained on base setting) & 8    & 7    & 7    \\ \hline
Iterations (Model trained on target setting) & 8    & 6    & 5    \\ 
\hline\hline
\multicolumn{4}{c}{Fix $n = 1000$, test different $r$}\\\hline
Matrix rank $r$ & 5 & 10 & 15 \\ \hline
Iterations (Model trained on base setting) & 8    & 10    & 11    \\ \hline
Iterations (Model trained on target setting) & 8    & 8    & 9    \\\hline
\end{tabular}
\end{table}
From Table~\ref{tab:generalization}, we conclude that our model has good generalization ability w.r.t. $n$ and $r$. For example, if we train and test a model both with $n=3000,r=5$, it takes $6$ iterations; if we train a model on the base setting (i.e., $n=1000,r=5$) and test it with $n=3000,r=5$, it takes $7$ iterations. Such generalization of our model works fine with slight loss of performance. That is, a model trained once on the base setting is good enough for larger size or higher rank problems from similar testing sets.

\subsection{Analysis of trained parameters}

We visualize the trained step sizes and thresholdings in Figures~\ref{fig:trained-eta} and \ref{fig:trained-zeta}, respectively. Figure~\ref{fig:trained-zeta} demonstrates that the trained thresholdings decay in an exponential rate, which is aligned with our theoretical bound in Lemma~\ref{lm:X-X_K_inf_norm}. Figure~\ref{fig:trained-eta} shows that $\eta_k$ takes larger value when $k$ is small. In other words, the algorithm goes very aggressively with large step sizes in the first several steps. 

\begin{figure}[ht]
\centering
\makebox[0pt][c]{\parbox{1.\textwidth}{%
    \begin{minipage}[b]{0.49\hsize}
    \centering
    \includegraphics[width = 0.95\textwidth]{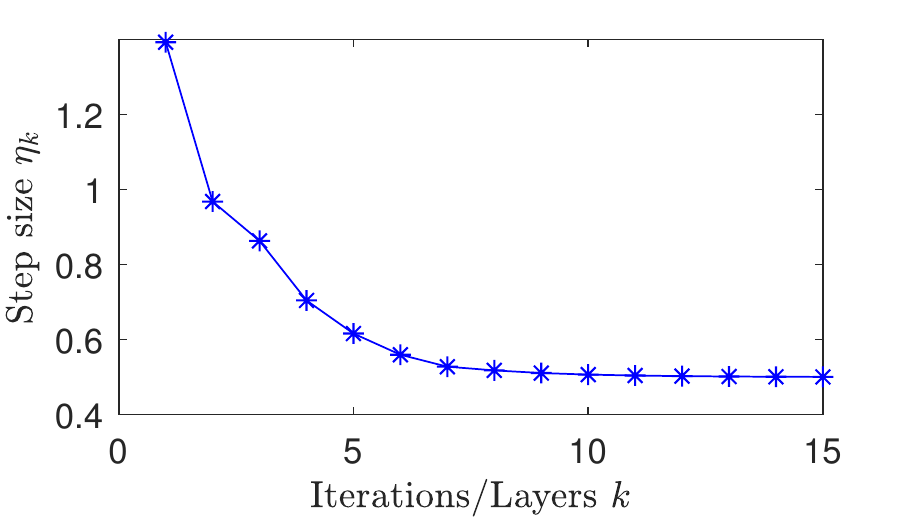}
    \caption{Trained step sizes}
    \label{fig:trained-eta}
    \end{minipage}
    \hfill
    \begin{minipage}[b]{0.49\hsize}
    \centering
    \includegraphics[width = 0.95\textwidth]{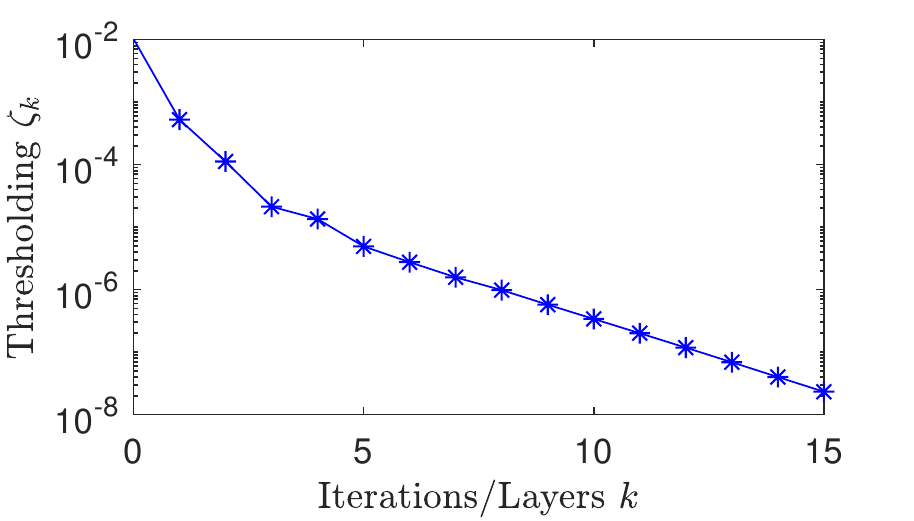}
    \caption{Trained thresholdings}
    \label{fig:trained-zeta}
    \end{minipage}
}}
\vspace{-0.07in}
\end{figure}

\end{document}